\theoremstyle{plain}
\newtheorem{theorem}{Theorem}[section]
\newtheorem{example}{Example}
\newtheorem{proposition}[theorem]{Proposition}
\newtheorem{lemma}[theorem]{Lemma}
\theoremstyle{definition}
\theoremstyle{remark}
\renewcommand\thmcontinues[1]{\emph{cont.}}
\newcommand{\rido}{RIDO\@\xspace}
\newcommand{\E}{\mathbb{E}}
\newcommand{\Var}{\mathbb{V}\mathrm{ar}}
\newcommand{\Cov}{\mathbb{C}\mathrm{ov}}
\newcommand{\Empvar}{\widehat{\mathbb{V}}\mathrm{ar}}
\newcommand{\Empcov}{\widehat{\mathbb{C}}\mathrm{ov}}
\title{Truncating Trajectories in Monte Carlo Policy Evaluation: an Adaptive Approach}
\author{%
  Riccardo Poiani \\ 
  DEIB, Politecnico di Milano\\
  \texttt{riccardo.poiani@polimi.it} \\
  \And
  Nicole Nobili \\
  DEIB, Politecnico di Milano \\
  \texttt{nicole.nobili@mail.polimi.it} \\
  \AND
  Alberto Maria Metelli \\
  DEIB, Politecnico di Milano  \\
  \texttt{albertomaria.metelli@polimi.it} \\
  \And
  Marcello Restelli \\
  DEIB, Politecnico di Milano \\
  \texttt{marcello.restelli@polimi.it} \\
}
\begin{document}

\maketitle

\begin{abstract}
{\color{black}
Policy evaluation via Monte Carlo (MC) simulation is at the core of many MC Reinforcement Learning (RL) algorithms (e.g., policy gradient methods). In this context, the designer of the learning system specifies an interaction budget that the agent usually spends by collecting trajectories of \emph{fixed length} within a simulator. However, is this data collection strategy the best option? To answer this question, in this paper, we propose as a quality index a surrogate of the mean squared error of a return estimator that uses trajectories of different lengths, i.e., \emph{truncated}. Specifically, this surrogate shows the sub-optimality of the fixed-length trajectory schedule. Furthermore, it suggests that adaptive data collection strategies that spend the available budget sequentially can allocate a larger portion of transitions in timesteps in which more accurate sampling is required to reduce the error of the final estimate. Building on these findings, we present an \emph{adaptive} algorithm called \textbf{R}obust and \textbf{I}terative \textbf{D}ata collection strategy \textbf{O}ptimization (\rido). The main intuition behind \rido is to split the available interaction budget into mini-batches. At each round, the agent determines the most convenient schedule of trajectories that minimizes an empirical and robust version of the surrogate of the estimator's error. After discussing the theoretical properties of our method, we conclude by assessing its performance across multiple domains. Our results show that \rido can adapt its trajectory schedule toward timesteps where more sampling is required to increase the quality of the final estimation.
}
\end{abstract}


\section{Introduction}\label{sec:intro}
In Reinforcement Learning \citep[RL,][]{sutton2018reinforcement}, an agent acts in an unknown, or partially known, environment to maximize/estimate the infinite expected discounted sum of an external reward signal, i.e., the expected return. Monte Carlo evaluation \citep[MC,][]{owen2013monte} is at the core of many successful RL algorithms. Whenever a simulator with reset possibility is available to the learning systems designer, a large family of approaches \citep{williams1992simple,baxter2001infinite,schulman2015trust,schulman2017proximal,cobbe2021phasic} that can be used to solve the RL problem relies on MC simulations for estimating performance or gradient estimates on the task to be solved. In this scenario, since the goal is to estimate the expected infinite sum of rewards, the designer usually specifies a sufficiently large estimation horizon $T$, along with a transition budget $\Lambda = QT$, so that the agent interacts with the simulator, via MC simulation, collecting a batch of $Q$ episodes of length $T$.\footnote{While another large class of RL algorithms is based on Temporal Difference \citep[TD,][]{sutton2018reinforcement} learning, which do not require the finite horizon nor the reset possibility, Monte Carlo simulation approaches continue to be extensively adopted. Indeed, unlike TD methods, they can be applied effortlessly to non-Markovian environments, which is a common occurrence in real-world problems.} In this sense, the agent spends its available budget $\Lambda$ \emph{uniformly along the estimation horizon}.

In the context of MC policy evaluation, where the goal lies in estimating the performance of a given policy via MC simulations, \citet{poiani2022truncating} have recently shown that, given the discounted nature of the RL objective, this uniform-in-the-horizon budget allocation strategy may not be the best option. The core intuition behind their work is that, since rewards are exponentially discounted through time, early interactions weigh exponentially more than late ones, and, consequently, a larger portion of the available budget $\Lambda$ should be dedicated to estimating the initial rewards. To theoretically validate this point, the authors designed a \emph{non-adaptive} budget allocation strategy which, by exploiting the reset possibility of the simulator, leads to the collection of trajectories of different lengths, i.e., \emph{truncated}. They show that this approach provably minimizes H\"oeffding-like confidence intervals \citep{boucheron2003concentration} around the empirical estimates of the expected return. Remarkably, this implies a robustness w.r.t. the uniform strategy that holds for any pair of environment and policy to be evaluated, thus, clearly establishing the theoretical benefits of the proposed method. 


Nevertheless, it has to be noticed that although minimizing confidence intervals around the expected return estimator comes with desirable theoretical guarantees (e.g., PAC-bound improvements \citep{even2002pac}), the resulting schedule of trajectories is computed \emph{before} the interaction with the environment (being determined by the discount factor). Consequently, as the usual uniform-in-the-horizon scheme, it fails to adapt to the peculiarities of the problem at hand, and, ultimately, might not produce a low error estimate. For the sake of clarity, we illustrate this sub-optimality of pre-determined schedule of trajectories with the following extreme examples.

\begin{example}\label{exe:1}
Consider an environment where the reward is gathered only at the end of the horizon $T$ (e.g., a goal-based). In this scenario, any strategy that truncates trajectories is intuitively sub-optimal, and we expect that an intelligent agent will spend all its budget according to the uniform schedule. 
\end{example}
\begin{example}\label{exe:2}
Conversely, consider a problem where the reward is different from $0$ in the first interaction step only (e.g., in the case of a highly sub-optimal policy that immediately reaches the ``zero reward region'' of an environment); the uniform schedule wastes a significant portion of its budget collecting samples without variability, and, to reduce the estimation error, we would like the agent to spend all of its interaction budget estimating the reward of the first action. 
\end{example}

Abstracting away from the previous examples, we realize that the main issue of existing approaches arises from the fact that determining a schedule of trajectories \emph{before} interacting with the environment does not allow the agent to adapt it to the environment peculiarities, allocating more samples where this is required to obtain a high-quality estimate. For this reason, in this work, we focus on designing \emph{adaptive} data collection strategies that aim directly at minimizing the \emph{error} of the final estimate. Our main intuition lies in splitting the available budget $\Lambda$ into mini-batches and adapting \emph{online} the data collection strategy of the agent based on the previously collected information.

\textbf{Original Contributions and Outline}~~{\color{black}After introducing the necessary notation and backgrounds (Section \ref{sec:prelim}), we consider the problem of maximizing the estimation quality of a policy expected return estimator using trajectories of different lengths collected via MC simulation with a finite budget $\Lambda$ of transitions (Section \ref{sec:opt-dcs}). More specifically, we adopt the return estimator proposed in \cite{poiani2022truncating}, and we analyze it to propose a surrogate of the error of the final estimation for every schedule of trajectories. As we shall see, this surrogate will show (i) the sub-optimality of schedule of trajectories that are determined prior to interacting with the environment, and (ii) it suggests that algorithms that spend the available budget $\Lambda$ iteratively might be able to dynamically allocate their budget to minimize the error of the final estimate. Building on these findings, in Section \ref{sec:method}, we present a {novel} algorithm, \textbf{R}obust and \textbf{I}terative \textbf{D}ata collection strategy \textbf{O}ptimization (\rido), which splits its available budget $\Lambda$ into mini-batches of interactions that are allocated sequentially to minimize an empirical and robust version of the surrogate of the estimator's error. Furthermore, we perform a statistical analysis on the behavior of \rido, and we derive theoretical guarantees expressed as upper bounds on the surrogate of the policy return error. To conclude, in Section \ref{sec:exp}, we conduct an experimental comparison between \rido and non-adaptive schedules. As we verify, our method achieves the most competitive performance across different domains, discount factor values, and budget, thus clearly highlighting the benefits of adaptive strategies over pre-determined ones.}

\section{Backgrounds and Notation}\label{sec:prelim}
This section provides the notation and necessary backgrounds used in the rest of this document.

\textbf{Markov Decision Processes}~~A discrete-time Markov Decision Process \citep[MDP,][]{puterman2014markov} is defined as a tuple $\mathcal{M} \coloneqq \left( \mathcal{S}, \mathcal{A}, R, P, \gamma, \nu \right)$, where $\mathcal{S}$ is the set of states, $\mathcal{A}$ is the set of actions, $R: \mathcal{S} \times \mathcal{A} \rightarrow [0,1]$ is the reward function the specifies the reward $R(s,a)$ received by the agent upon taking action $a$ in state $s$, $P: \mathcal{S} \times \mathcal{A} \rightarrow \Delta(\mathcal{S})$\footnote{Given a set $\mathcal{X}$, we denote with $\Delta(\mathcal{X})$ the set of probability distributions over $\mathcal{X}$.} is the transition kernel that specifies the probability distribution over the next states $P(\cdot | s, a)$, when taking action $a$ in state $s$, $\gamma \in (0, 1)$ is the discount factor, and $\nu \in \Delta(\mathcal{S})$ is the distribution over initial states. The agent's behavior is modeled by a policy $\pi: \mathcal{S} \rightarrow \Delta(\mathcal{A})$, which for each state $s$, prescribes a distribution over actions $\pi(\cdot|s)$. A trajectory $\bm{\tau}_h$ of length $h$ is a sequence of states and actions $\left(s_0, a_0, s_1, \dots s_{h-1}, a_{h-1}, s_h \right)$ observed by following $\pi$ for $h$ steps, where $s_0 \sim \nu$, $a_t \sim \pi(\cdot|s_t)$, and $s_{t+1} \sim P(\cdot|s_t,a_t)$ for $t < h$. The return of a trajectory is defined as $G(\bm{\tau}_h) = \sum_{t=0}^{h-1} \gamma^t R_t$, where $R_t$ is shortcut for $R(s_t, a_t)$. 
The agent that is following  policy $\pi$ is evaluated according to expected cumulative discounted sum of rewards over an estimation horizon $T$,\footnote{As common in Monte-Carlo simulation \citep[see e.g.,][]{papini2022smoothing} we approximate the infinite horizon MDP model with a finite estimation horizon $T$. Indeed, if $T$ is sufficiently large, i.e., $T = \mathcal{O}\left(\frac{1}{1-\gamma} \log \frac{1}{\epsilon} \right)$, the expected return computed with horizon $T$ is $\epsilon$ close to the infinite-horizon one \citep{kakade2003sample}.}, namely $J(\pi) = \E_\pi \left[ \sum_{t=0}^{T-1} \gamma^t R_t \right]$, where the expectation is taken w.r.t. the stochasticity of the policy, the transition kernel, and the initial state distribution.


\textbf{Data Collection Strategy}~~\citet{poiani2022truncating} formalized the concept of Data Collection Strategy (DCS) to model how the agent collects data within an environment. More specifically, given an interaction budget $\Lambda \in \mathbb{N}$ such that $\Lambda \mod T = 0$, a DCS is defined as a $T$-dimensional vector $\bm{m} \coloneqq (m_1, \dots, m_T)$ where $m_h \in \mathbb{N}$ and $\sum_{h=1}^{T} m_h h = \Lambda$. Each element $m_h$ specifies the number of trajectories of length $h$ that the agent collects in the environment while following a policy $\pi$. Given a DCS $\bm{m}$, it is possible to compute the total number of steps $\bm{n} \coloneqq \left(n_0, \dots, n_{T-1} \right)$ that will be gathered by the agent at any step $t$; more specifically, the following relationship holds: $n_{T-1} = m_{T}$, and $n_t = n_{t+1} + m_{t+1}$ for $t < T-1$. For this reason, in the rest of the paper we will adopt the most convenient symbol depending on the context. For any DCS $\bm{m}$ such that $m_T \ge 1$ holds, it is possible to build the following estimator of $J(\pi)$:
\begin{align}\label{eq:ttmc-estimator}
\hat{J}_{\bm{m}}(\pi) = \sum_{h=1}^{T} \sum_{i=1}^{m_h} \sum_{t=0}^{h-1} \frac{\gamma^t}{n_t} R_t^{(i)}.
\end{align}
The two external summations in Equation \eqref{eq:ttmc-estimator} sum over the collected trajectories of different lengths a rescaled empirical trajectory return, where the reward at step $t$ is divided by the number of samples collected at step $t$. When the budget $\Lambda$ is spent uniformly, i.e., $\bm{m}= \left(0, \dots, 0, \frac{\Lambda}{T} \right)$, Equation \eqref{eq:ttmc-estimator} reduces to the usual Monte Carlo estimator of $J(\pi)$, namely $\frac{T}{\Lambda} \sum_{i=1}^{\Lambda/T} \sum_{t=0}^{T-1} \gamma^t {R_t^{(i)}}$. {\color{black} Whenever $\bm{m}$ is computed deterministically (e.g., prior to interacting with the environment), it is possible to prove that Estimator \eqref{eq:ttmc-estimator} is an unbiased estimate of $J(\pi)$ (see Theorem B.1 in \cite{poiani2022multi})}.

\textbf{Robust Data Collection Strategy Optimization}~~Leveraging the estimator of Equation \eqref{eq:hoeffding-gen}, \citet{poiani2022truncating} investigated alternatives to the usual uniform-in-the-horizon DCS from the worst-case perspective of confidence intervals \citep{boucheron2003concentration}. More specifically, given $\bm{m}$ such that $m_T \ge 1$, the estimator of Equation \eqref{eq:ttmc-estimator} enjoys the following generalization of the H\"oeffding confidence intervals holding with probability at least $1-\delta$:
\begin{align}\label{eq:hoeffding-gen}
|J(\pi) - \hat{J}_{\bm{m}}(\pi)| \le \sqrt{\frac{1}{2} \log\left(\frac{2}{\delta} \right) \sum_{t=0}^{T-1} \frac{d_t}{n_t}},
\end{align}
where $d_t = \frac{\gamma^t \left(\gamma^t + \gamma^{t+1} -2\gamma^T \right)}{1-\gamma}$ controls the relative importance of samples gathered at step $t$. \citet{poiani2022truncating} designed a closed-form DCS that provably minimizes the bound of Equation \eqref{eq:hoeffding-gen}. Since $d_t$ is a decreasing function of time whose decay speed is governed by the discount factor $\gamma$, the aforementioned DCS gives priority to the collection of experience at earlier time steps, i.e., it truncates the trajectories. Note that the smaller $\gamma$, the higher the number of samples reserved for earlier time steps. We refer the reader for Theorem 3.3 and Theorem B.10 of their work for the exact expressions of the resulting robust DCS. However, we remark that the resulting schedule is non-adaptive (i.e., it is computed before the interaction with the environment takes place) and its shape depends exclusively on $\Lambda$, $\gamma$, and $T$.\footnote{{\color{black} As a consequence, combining Equation \eqref{eq:ttmc-estimator} with the DCS that minimizes Equation \eqref{eq:hoeffding-gen} provides an unbiased estimate of $J(\pi)$.}}

\section{Toward Adaptive Data Collection Strategies}\label{sec:opt-dcs}
In this section, we lay down the theoretical groundings behind optimizing data collection strategies that directly aim at minimizing the final estimation error. 
{\color{black}  We stick to methods that adopt the estimator of Equation \eqref{eq:ttmc-estimator}. It has to be remarked that Equation \eqref{eq:ttmc-estimator} has been mainly analyzed in \cite{poiani2022truncating} for deterministic DCSs. Since the goal of our work is to propose adaptive DCS algorithms, we analyze the property of this estimator when combined with stochastic algorithms that sequentially spend the budget $\Lambda$ according to information collected in the past. To highlight the fact that the resulting DCS is now a random variable, we make use of capital letters, i.e., we write $\bm{M}$ (or, equivalently, $\bm{N}$). In these cases, unfortunately, Equation \eqref{eq:ttmc-estimator} is no longer an unbiased estimates of $J(\pi)$. This is due the fact that rewards $R_t$ observed by the agent are now correlated with $\bm{M}$ and $\bm{N}$. As the next theorem shows, however, it is still possible to prove that Equation \eqref{eq:ttmc-estimator} is consistent whenever
\begin{align}\label{eq:consistency}
\lim_{\Lambda \rightarrow +\infty} \Lambda^T \exp(-C N_T) = 0
\end{align}
holds almost surely for any constant $C > 0$. Specifically, we prove the following result (proof in Appendix~\ref{app:proofs}).
\begin{restatable}{theorem}{consistency}\label{theo:consist}
Fix $\epsilon > 0$ and consider an online algorithm such that Equation \eqref{eq:consistency} holds almost surely. Then, we have that:
\begin{align*}
\lim_{\Lambda \rightarrow +\infty} \mathbb{P}(|\hat{J}_{\bm{M}}(\pi) - J(\pi)| > \epsilon) = 0.
\end{align*}
\end{restatable}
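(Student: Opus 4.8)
The plan is to reduce the claim to a per-timestep concentration statement and then handle the random, data-dependent sample sizes via a peeling argument. First I would rewrite the estimator in ``per-step average'' form. Since a trajectory of length $h$ contributes a reward to every step $t < h$, the number of samples available at step $t$ is exactly $N_t$, and Equation~\eqref{eq:ttmc-estimator} collapses to $\hat{J}_{\bm{M}}(\pi) = \sum_{t=0}^{T-1}\gamma^t \bar{R}_t$ with $\bar{R}_t \coloneqq \frac{1}{N_t}\sum_{i=1}^{N_t} R_t^{(i)}$. Because $J(\pi) = \sum_{t=0}^{T-1}\gamma^t\,\E_\pi[R_t]$, the error decomposes as $\hat{J}_{\bm{M}}(\pi) - J(\pi) = \sum_{t=0}^{T-1}\gamma^t(\bar{R}_t - \E_\pi[R_t])$. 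Using $\gamma^t \le 1$ and a union bound, the event $\{|\hat{J}_{\bm{M}}(\pi) - J(\pi)| > \epsilon\}$ is contained in $\bigcup_{t=0}^{T-1}\{|\bar{R}_t - \E_\pi[R_t]| > \epsilon/T\}$, so it suffices to show that each $\mathbb{P}(|\bar{R}_t - \E_\pi[R_t]| > \eta) \to 0$, where $\eta \coloneqq \epsilon/T$.

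The obstacle is that $\bar{R}_t$ averages a \emph{random} number $N_t$ of rewards, and $N_t$ is correlated with those rewards through the adaptivity of the algorithm, so a naive i.i.d.\ concentration bound does not apply directly. To circumvent this I would couple the samples at step $t$ with an infinite i.i.d.\ sequence $\{R_t^{(i)}\}_{i\ge 1}$ of rewards, each a fresh roll-out of $\pi$ reaching step $t$ (these are i.i.d.\ with common mean $\E_\pi[R_t]$ and lie in $[0,1]$, since the length decision is made \emph{before} the trajectory is drawn), and set $S_n \coloneqq \sum_{i=1}^n (R_t^{(i)} - \E_\pi[R_t])$. For any deterministic threshold $n_0$ the pathwise inclusion $\{N_t \ge n_0,\ |\bar{R}_t - \E_\pi[R_t]| > \eta\} \subseteq \{\exists\, n \ge n_0 : |S_n/n| > \eta\}$ holds, and the right-hand event no longer references the random $N_t$. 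Hence $\mathbb{P}(|\bar{R}_t - \E_\pi[R_t]| > \eta) \le \mathbb{P}(N_t < n_0) + \sum_{n \ge n_0}\mathbb{P}(|S_n/n| > \eta)$, and Hoeffding's inequality together with a geometric series bounds the second term by $\frac{2}{1-\exp(-2\eta^2)}\exp(-2 n_0 \eta^2)$.

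Finally I would choose the threshold to exploit the hypothesis. Since $N_t$ is non-increasing in $t$ (from $N_t = N_{t+1} + M_{t+1}$), we have $N_t \ge N_T$ for every $t$, where $N_T$ denotes the number of complete trajectories, so $\mathbb{P}(N_t < n_0) \le \mathbb{P}(N_T < n_0)$. Taking $n_0 = T\log\Lambda$ (i.e.\ invoking the hypothesis with $C = 1$), the event $\{N_T < n_0\}$ is exactly $\{\Lambda^T \exp(-N_T) > 1\}$; since $\Lambda^T\exp(-N_T) \to 0$ almost surely it also converges in probability, whence $\mathbb{P}(N_T < n_0) \to 0$. With the same $n_0$ the concentration term is of order $\exp(-2\eta^2 T\log\Lambda) = \Lambda^{-2\eta^2 T} \to 0$ because $\eta, T > 0$. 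Thus both terms vanish for each $t$, and summing the finitely many $T$ contributions closes the argument. The delicate step is the coupling/peeling of the second paragraph: it is precisely what allows a standard i.i.d.\ concentration bound to survive the correlation between the adaptively chosen $N_t$ and the observed rewards.
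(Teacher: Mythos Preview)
Your argument is correct and takes a genuinely different route from the paper's. The paper proceeds by partitioning over the realized DCS: it writes $\mathbb{P}(|\hat J_{\bm M}-J|>\epsilon)=\sum_{\bm n\in\mathcal V}\mathbb{P}(\{|\hat J_{\bm n}-J|>\epsilon\}\cap\{\bm N=\bm n\})$, bounds each summand by a Hoeffding-type inequality for the \emph{fixed} schedule $\bm n$, then uses $|\mathcal V|\le\Lambda^T$ and invokes the hypothesis on the resulting expression $\Lambda^T\exp(-\tilde C\,n_{T-1})$. You instead decompose the error timestep by timestep, couple the observed step-$t$ rewards with a true i.i.d.\ sequence (valid precisely because each trajectory's length is fixed \emph{before} that trajectory is drawn, so predictable thinning of i.i.d.\ roll-outs yields an i.i.d.\ subsequence), and then peel over all sample sizes $n\ge n_0$ to absorb the random $N_t$.

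What each approach buys: the paper's argument is compact and uses the form of the hypothesis in the most direct way possible (the factor $\Lambda^T$ is exactly the DCS count). Your argument is longer but makes fully explicit the one genuinely delicate point here---why a Hoeffding/Azuma bound survives the correlation between the adaptively chosen $\bm N$ and the observed rewards---whereas the paper's ``intersection then Hoeffding'' step tacitly relies on the same coupling you spell out. Your use of the hypothesis is also slightly lighter: you need it only for a single $C$ (to get $\mathbb{P}(N_{T-1}<T\log\Lambda)\to 0$), while the paper applies it with the specific constant $\tilde C$ that falls out of its concentration bound; both are of course covered by the ``for all $C>0$'' assumption.
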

Equation \eqref{eq:consistency} enforces a determinstic behavior on the online algorithm used to output $\bm{M}$ in the limiting case where $\Lambda \rightarrow +\infty$. This is satisfied, for example, when an amount of samples proportional (up to costant multiplicative factor) to the budget $\Lambda$ will be deterministically allocated to the last interaction step. This condition will be verified by the online algorithm that we will present in Section~\ref{sec:method}.

Given this preliminary consideration, we now derive an upper-bound of the error of the Mean Squared Error (MSE) of the estimator of Equation~\eqref{eq:ttmc-estimator} when using online algorithms. Furthermore, we also specialize the results when the DCS is computed deterministically. 

\begin{restatable}{theorem}{est}\label{theo:dcs-variance}
Consider an online algorithm such that $M_T \ge 1$ holds almost surely. Then, for $t \in \{0, \dots, T-1 \}$, we define $f_t$ as:
\begin{align*}
f_t \coloneqq \gamma^{2t} \Var(R_t) + 2 \sum_{t'=t+1}^{T-1} {\gamma^{t+t'}} \Cov(R_t, R_{t'}).
\end{align*}  
Consider any random realization $\bm{M}$ of the online algorithm, and let $\tilde{J}_{\bm{M}}$ be an additional estimator that is computed by collecting a new dataset of size $\Lambda$ using $\bm{M}$. Then, it holds that:
\begin{align}\label{eq:dcs-variance}
    \textup{MSE} \left[ \hat{J}_{\bm{M}} \right] \le 2 \mathbb{E} \left[ \sum_{t=0}^{T-1} \frac{f_t}{N_t} \right] + 2 \mathbb{E}[(\hat{J}_{\bm{M}} - \tilde{J}_{\bm{M}} )^2].
\end{align}
Furthermore, in the case in which $\bm{M}$ is deterministic, i.e., $\bm{M} = \bm{m}$ for some fixed $\bm{m}$ such that $m_T \ge 1$ holds, we have that:
\begin{align}\label{eq:dcs-variance-det}
	\textup{MSE} \left[ \hat{J}_{\bm{m}} \right] = \Var(\hat{J}_{\bm{m}}) = \sum_{t=0}^{T-1} \frac{f_t}{n_t}.
\end{align}
\end{restatable}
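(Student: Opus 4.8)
The plan is to establish the deterministic identity \eqref{eq:dcs-variance-det} first and then bootstrap the online bound \eqref{eq:dcs-variance} from it via a decoupling argument. For the deterministic case, since $\bm{m}$ is fixed the estimator $\hat{J}_{\bm{m}}$ is unbiased (as recalled in Section~\ref{sec:prelim}), so $\textup{MSE}[\hat{J}_{\bm{m}}] = \Var(\hat{J}_{\bm{m}})$ and it suffices to compute the variance. I would start from the trajectory-indexed form of \eqref{eq:ttmc-estimator}: because the $\sum_h \sum_i$ summations range over mutually independent rollouts, the variance decomposes as a sum over trajectories of the variance of each trajectory's rescaled discounted return. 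A single trajectory of length $h$ contributes $\sum_{t=0}^{h-1} \frac{\gamma^t}{n_t} R_t$, whose variance splits into a diagonal part $\sum_{t=0}^{h-1} \frac{\gamma^{2t}}{n_t^2}\Var(R_t)$ and a cross part $2\sum_{t=0}^{h-1}\sum_{t'=t+1}^{h-1} \frac{\gamma^{t+t'}}{n_t n_{t'}}\Cov(R_t,R_{t'})$.

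The crucial step is then to interchange the order of summation, collecting the $m_h$ identical length-$h$ contributions by time step rather than by length, and to invoke the defining identity $\sum_{h=t+1}^{T} m_h = n_t$. This collapses the diagonal contribution to $\sum_{t} \frac{\gamma^{2t}}{n_t}\Var(R_t)$, since multiplying $1/n_t^2$ by $n_t$ leaves a single factor $1/n_t$. For the cross term at a fixed pair $t<t'$, only trajectories long enough to reach step $t'$ survive, so the multiplicity is $\sum_{h=t'+1}^{T} m_h = n_{t'}$, which cancels the $n_{t'}$ in the denominator and yields $2\sum_{t<t'} \frac{\gamma^{t+t'}}{n_t}\Cov(R_t,R_{t'})$. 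Factoring $1/n_t$ out of both contributions reassembles exactly $\sum_t f_t/n_t$, giving \eqref{eq:dcs-variance-det}.

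For the online case, the obstacle is that $\hat{J}_{\bm{M}}$ is biased: $\bm{M}$ (equivalently $\bm{N}$) is correlated with the very rewards used to build the estimate, so I can neither equate MSE with variance nor apply the deterministic computation directly. The auxiliary estimator $\tilde{J}_{\bm{M}}$ is introduced precisely to break this dependence, being computed from a fresh dataset of size $\Lambda$ that is independent of $\bm{M}$; conditionally on the realized schedule it therefore behaves exactly like the deterministic estimator with $\bm{m}=\bm{M}$. I would write $\hat{J}_{\bm{M}} - J(\pi) = (\hat{J}_{\bm{M}} - \tilde{J}_{\bm{M}}) + (\tilde{J}_{\bm{M}} - J(\pi))$, apply $(a+b)^2 \le 2a^2 + 2b^2$, and take expectations, producing the two terms on the right-hand side of \eqref{eq:dcs-variance}. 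The first term is retained as is. For the second, I condition on $\bm{M}$ and use the tower property: conditional unbiasedness of $\tilde{J}_{\bm{M}}$ gives $\mathbb{E}[(\tilde{J}_{\bm{M}} - J(\pi))^2 \mid \bm{M}] = \Var(\tilde{J}_{\bm{M}} \mid \bm{M})$, and the deterministic identity \eqref{eq:dcs-variance-det} applied under the conditional measure evaluates this to $\sum_t f_t/N_t$; taking the outer expectation yields $\mathbb{E}[\sum_t f_t/N_t]$.

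I expect the main obstacle to be the online half, and specifically the rigorous justification that $\tilde{J}_{\bm{M}}$ is conditionally unbiased with conditional variance $\sum_t f_t/N_t$. This requires carefully specifying a joint probability space on which $\bm{M}$ and the fresh dataset coexist, so that conditioning on $\bm{M}$ genuinely freezes the schedule while leaving the new rewards independent of it and distributed as the true reward process; only under this setup does the deterministic computation apply verbatim conditionally. The deterministic identity, by contrast, is a routine though bookkeeping-heavy variance calculation whose only real content is the index swap and the two cancellations driven by $\sum_{h=t+1}^{T} m_h = n_t$.
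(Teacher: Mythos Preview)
Your proposal is correct and follows essentially the same approach as the paper: the decoupling via $\tilde{J}_{\bm{M}}$ and the elementary inequality $(a+b)^2\le 2a^2+2b^2$, conditional unbiasedness of $\tilde{J}_{\bm{M}}$, and the variance computation with the index swap and the cancellations driven by $\sum_{h=t+1}^{T} m_h = n_t$ and $\sum_{h=t'+1}^{T} m_h = n_{t'}$. The only cosmetic difference is ordering: the paper handles the online bound first, proving $\tilde{J}_{\bm{M}}$ is (unconditionally) unbiased and then invoking the law of total variance to reduce to $\mathbb{E}_{\bm{M}}[\Var(\tilde{J}_{\bm{M}}\mid\bm{M})]$, after which it carries out the same trajectory-wise variance computation and reads off the deterministic identity as a special case, whereas you prove the deterministic identity first and reuse it conditionally.
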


Theorem \ref{theo:dcs-variance} provides an upper bound on the MSE of the estimator of Equation \eqref{eq:ttmc-estimator} when used with any arbitrary online algorithm that guarantees that $M_T \ge 1$ holds almost surely. Moreover, it also specializes the result when, instead of using an online algorithm, we use a deterministic DCS.

These results deserve some comments. For the clarity of the exposition, we start by analyzing the simpler case where the DCS is deterministic. First of all, we observe that, the MSE of the proposed estimator coincides with its variance, as for deterministic DCS the estimator is unbiased. Furthermore, Equation \eqref{eq:dcs-variance-det} expresses the variance of $\hat{J}_{\bm{m}}$ as the sum, over the different time steps $t$, of $\frac{1}{n_t}$ (i.e., the reciprocal of the number of samples collected under $\bm{m}$ at step $t$) multiplied by $f_t$ (i.e., the variance of the reward at step $t$ plus the covariances between $R_t$ and the rewards collected at future steps). Thus Equation \eqref{eq:dcs-variance-det} nicely captures the two examples that we provided at the beginning of this chapter.
\begin{example}[continues=exe:1]
When the reward is different from $0$ in the last interaction step only, the objective function reduces to $\frac{\gamma^{2(T-1)}}{n_{T-1}}\Var\left[ R_{T-1} \right]$, which is clearly minimized for the uniform strategy.
\end{example}
\begin{example}[continues=exe:2]
 Conversely, when the reward is different from $0$ in the first step only, we obtain $\frac{\Var\left[ R_{0} \right]}{n_0}$, meaning that the entire interaction budget should be dedicated to estimate $R_0$.
\end{example}
More generally, Equation \eqref{eq:dcs-variance-det} directly leads to a formulation of an optimal \emph{deterministic} DCS baseline for the problem. Specifically, given a budget $\Lambda$, we define the optimal deterministic DCS $\bm{n}^*$ as the solution of the following optimization problem:
\begin{equation}\label{sys:opt-prob}
\begin{aligned} 
\min_{\bm{n}} \quad & \sum_{t=0}^{T-1} \frac{1}{n_t} \left( \gamma^{2t} \Var(R_t) + 2 \sum_{t'=t+1}^{T-1} {\gamma^{t+t'}} \Cov(R_t, R_{t'}) \right)  \\
\textrm{s.t.} \quad &  \sum_{t=0}^{T-1} n_t \le \Lambda \\
  & n_t \ge n_{t+1}, \quad \forall t \in \{0, \dots, T-2\}  \\
  & n_t \in \mathbb{N}_+, \quad \forall t \in \{0, \dots, T-1\},
\end{aligned}
\end{equation}
where the constraints $n_t \ge n_{t+1}$ captures the sequential nature of the interaction with the environment. Before analyzing the more general case in which an online algorithm is used to output a DCS, we observe that \eqref{sys:opt-prob} highlights the pitfalls of pre-determined data collection strategies. Indeed, consider the two examples mentioned above. Before executing policy $\pi$, the agent cannot distinguish between the two different objective functions, i.e., $\frac{\gamma^{2(T-1)}}{n_{T-1}}\Var\left[ R_{T-1} \right]$ and $\frac{\Var\left[ R_{0} \right]}{n_0} $. As a consequence, any pre-determined schedule fails to adapt to the peculiar features of the domain at hand. From a general perspective, this is because the optimal strategy resulting from the optimization problem \eqref{sys:opt-prob} can be computed prior to the interaction with the environment only by an oracle that knows in advance the underlying reward process induced by the agent's policy $\pi$ in the MDP. Nevertheless, we also note that all the terms that appear in the objective function, i.e., $f_t$, can be estimated if some trajectories with the environment are available to the agent, meaning that algorithms that sequentially allocate the available budget $\Lambda$ (i.e., stochastic DCSs) might be able to successfully minimize Equation \eqref{eq:dcs-variance-det}.

We now focus on the case in which the resulting DCS is the output of an online algorithm. In this case, Theorem \ref{theo:dcs-variance} provides an upper bound on the MSE of $\hat{J}_{\bm{M}}$ which is composed of two terms. Given the interpretation that we provided for deterministic DCSs, the first term can be understood as the expected variance of an algorithm that constructs an estimator using a DCS $\bm{m}$ sampled a-priori (i.e., without actually interacting with the environment) from the distribution of the DCS returned by the online algorithm. The second term, instead, is given by $\mathbb{E}[(\hat{J}_{\bm{M}} - \tilde{J}_{\bm{M}})^2]$, where $\tilde{J}_{\bm{M}}$ is a "fictitious" estimator that is computed by collecting an additional dataset using the same DCS $\bm{M}$ which is given in output by the online algorithm. As the proof of Theorem \ref{theo:dcs-variance} reveals, this term directly arises from the fact that $\hat{J}_{\bm{M}}$ is a biased estimate of $J(\pi)$. 

Given these intuitions, in the rest of this chapter, we aim at developing an online algorithm that adapt to the problem-dependent features of the environment and the policy to evaluate. To this end, we propose to minimize the quantity $\mathbb{E} \left[ \sum_{t=0}^{T-1} \frac{f_t}{N_t}\right]$. Indeed, as we discussed above, this term captures several aspects of the problem at hand that cannot be tackled by using deterministic DCSs. Furthermore, given the results of Theorem \ref{theo:dcs-variance}, $\mathbb{E} \left[ \sum_{t=0}^{T-1} \frac{f_t}{N_t}\right]$ can be seen as a surrogate of the error of the final estimate, where the only component that is ignored is the bias of the resulting estimator.
}

\section{Robust and Iterative DCS Optimization}\label{sec:method}

Given the findings of Section \ref{sec:opt-dcs}, we now present our algorithmic solution that aims at avoiding the highlighted pitfalls of pre-determined DCSs. Our approach is called \textbf{R}obust and \textbf{I}terative \textbf{D}ata collection strategy \textbf{O}ptimization (\rido), and its pseudocode is available in Algorithm \ref{alg:rido}.
The central intuition behind \rido lies in splitting the available budget $\Lambda$ into mini-batches of interactions that the agent will allocate \emph{sequentially}. At each iteration, the agent will compute the most convenient schedule of trajectories that optimizes an \emph{empirical and robust} version of the objective function presented in \eqref{sys:opt-prob}, whose quality improves as the agent gathers more data.

\begin{algorithm}[t]
\caption{Robust and Iterative DCS Optimization (\rido).} \label{alg:rido}
\begin{algorithmic}[1]
\small
\REQUIRE Interaction budget $\Lambda$, batch size $b$, robustness level $\beta$, policy $\pi$
\vspace{0.1cm}
\STATE{Collect $\mathcal{D}$ using policy $\pi$ and $\bm{\hat{N}}_0 = \left(\frac{b}{T}, \dots, \frac{b}{T} \right)$}\label{line:1}
\STATE{Set $K = \frac{\Lambda}{b}$ and initialize empirical estimates $\sqrt{\Empvar_1\left[R_t\right]}$ and $\Empcov_1\left[R_t, R_t'\right]$}\label{line:2}
\FOR{$i = 1, \dots, K-1$}
\STATE{Collect $\mathcal{D}_i$ using policy $\pi$ and $\bm{\hat{N}}_i$, where $\bm{\hat{N}}_i$ is computed solving problem \eqref{sys:emp-opt-prob}}\label{line:4}
\STATE{Update empirical estimates $\sqrt{\Empvar_i\left[R_t\right]}$ and $\Empcov_i\left[R_t, R_t'\right]$ using $\mathcal{D}_i$ and set $\mathcal{D} \leftarrow \mathcal{D} \cup \mathcal{D}_i$}\label{line:5}
\ENDFOR
\end{algorithmic}
\end{algorithm}

We now describe in-depth the behavior of the algorithm. For simplicity of exposition and analysis, we suppose that the size of the mini-batch $b$ is such that $b \mod T = 0$ and $b \ge 2T$. At the beginning (Lines~\ref{line:1}-\ref{line:2} in Algorithm \ref{alg:rido}), the agent spends the first mini-batch $\bm{\hat{N}_0}$ at collecting $\frac{b}{T}$ trajectories of length $T$ (i.e., the uniform approach). This preliminary collection phase is a starting round in which some initial experience is gathered to properly initialize estimates of relevant quantities used throughout the algorithm. More specifically, at each iteration $i$, the agent maintains empirical estimates of the unknown quantities that define the variance of the estimate, i.e., the standard deviation of the reward at step $t$, namely $\sqrt{\Empvar_i\left[R_t\right]}$, and the covariances between rewards at different steps, namely $\Empcov_i\left(R_t, R_{t'}\right)$. Then, at each round (Lines~\ref{line:4}-\ref{line:5} in Algorithm \ref{alg:rido}), the DCS of the current mini-batch $\bm{\hat{n}_i}$ is computed solving the optimization problem~\eqref{sys:emp-opt-prob} whose objective function is a robust estimate of the objective function of the original optimization problem \eqref{sys:opt-prob}. More specifically, at each round $i$, the agent aims at solving the following problem:
\begin{equation}\label{sys:emp-opt-prob}
\begin{aligned} 
\min_{\bm{N}} \quad & \sum_{t=0}^{T-1} \frac{1}{N_t} \left( \gamma^{2t} \left( \sqrt{\Empvar_i(R_t)} + \textrm{C}^\sigma_{i,t} \right)^2 + 2 \sum_{t'=t+1}^{T-1} {\gamma^{t+t'}} \left(\Empcov_i(R_t, R_{t'}) + \textrm{C}^{c}_{i,t,t'} \right) \right) \\
\textrm{s.t.} \quad &  \sum_{t=0}^{T-1} N_t \le b \\
  & N_t \ge N_{t+1}, \quad \forall t \in \{0, \dots, T-2\}  \\
  & N_t \in \mathbb{N}_+, \quad \forall t \in \{0, \dots, T-1\},
\end{aligned}
\end{equation}
where $\textrm{C}^\sigma_{i,t}$ and $\textrm{C}^{c}_{i,t,t'}$ are exploration bonuses for variances and covariances respectively, defined as:
\begin{equation}\label{eq:ci}
\textrm{C}^\sigma_{i,t} \coloneqq \sqrt{\frac{2 \log\left( \beta \right)}{\sum_{j=1}^{i-1} \hat{N}_{j,t}}}, \quad \quad \quad \quad \textrm{C}^{c}_{i,t,t'} \coloneqq 3 \sqrt{\frac{2 \log\left( \beta \right)}{\sum_{j=1}^{i-1} \hat{N}_{j,t'}}},
\end{equation}
where $\beta \ge 1$ is a hyper-parameter that specifies the amount of exploration used to solve the optimization problem, and $\hat{N}_{j,t}$ is the number of samples collected by \rido during phase $j$ at time step $t$.
We now provide further explanations on the optimization problem \eqref{sys:emp-opt-prob} and Equation \eqref{eq:ci}. First of all, we notice how each term in the original objective function, namely $f_t$
, is replaced with its relative empirical estimation plus exploration bonuses, each of which is directly related to components within $f_t$, e.g., $\Var\left(R_t\right)$ is replaced with $\left( \sqrt{\Empvar_i(R_t)} + \textrm{C}^\sigma_{i,t} \right)^2$ and $\Cov\left(R_t, R_{t'}\right)$ is replaced with $\Empcov_i\left(R_t, R_{t'} \right) + \textrm{C}^{c}_{i,t,t'}$.
Intuitively, the purpose of the exploration bonus is to consider the uncertainty that arises from replacing exact quantities with their empirical estimation. This introduces in \rido a source of robustness w.r.t. the noise that is intrinsically present in the underlying estimation process. At this point, concerning the shape of Equation \eqref{eq:ci}, focus for the sake of exposition on $\textrm{C}^\sigma_{i,t}$. First of all, we notice that the hyper-parameter $\beta$ governs the robustness which is taken into account while replacing $\Var\left( R_t \right)$ with its empirical estimate. Larger values of $\beta$, correspond, indeed, to larger $\textrm{C}^\sigma_{i,t}$, and, consequently, a higher level of robustness w.r.t. the uncertainty. Furthermore, as we can notice, $\textrm{C}^\sigma_{i,t}$ decreases with the number of samples collected in the previous iterations at step $t$, i.e., $\sum_{j=1}^{i-1} \hat{N}_{j,t}$. This quantity coincides with the number of samples that are used to estimate $\sqrt{\Empvar_{i} \left[R_t \right]}$.\footnote{Similar comments apply to $\textrm{C}^{c}_{i,t,t'}$ as well. The only difference stands in the fact that to estimate the empirical covariance between two subsequence steps $t$ and $t'$, samples up to time $t'$ are required. For this reason, the denominator implies the summation of the number of samples gathered at $t'$ over the previous iterations.} This formulation captures the following aspect: more data is available to the agent to estimate $\Var\left( R_t \right)$, the more accurate its estimate will be, and, consequently, its exploration bonus will shrink to $0$.  As one can expect, with this approach, the quality of the objective function used in \rido increases with the number of iterations. Consequently, the agent will progressively adapt the mini-batch DCS toward time steps where more data is required to minimize the surrogate of the policy return error. We conclude with two remarks. First, we notice that \rido can be applied with $\gamma=1$, as it does not deeply rely on the property of discounted sums.  
Secondly, the optimization problem \eqref{sys:emp-opt-prob} is a complex integer and non-linear optimization problem. Before diving into the statistical analysis of \rido, we discuss how to solve \eqref{sys:emp-opt-prob} in the next section.

\subsection{Solving the Empirical Optimization Problem}\label{sec:emp-problem}

As noticed above, directly solving problem \eqref{sys:emp-opt-prob} requires significant effort since it is an integer, non-linear optimization problem. In this section, we discuss how to overcome these challenges.

We first perform a \emph{continuous relaxation}, replacing the integer constraint $N_t \in \mathbb{N}_+$ with  $N_t \ge 1$.  Once a solution $\bar{\bm{N}}^*$ to the relaxed optimization problem is found, it is possible to obtain a proper (i.e., integer) DCS by flooring each $\bar{n}_t^*$ and allocating the remaining budget uniformly. As we shall see, this approximation introduces constant terms in the theoretical guarantees of \rido only.
At this point, the resulting optimization problem is a non-linear problem that, unfortunately, is generally non-convex. This issue occurs when the following condition is verified for some time step $t$:
\begin{align}\label{eq:ill-condition-example}
f_t = \gamma^{2t} \left( \sqrt{\Empvar_i(R_t)} + \textrm{C}^\sigma_{i,t} \right)^2 + 2 \sum_{t'=t+1}^{T-1} {\gamma^{t+t'}} \left(\Empcov_i(R_t, R_{t'}) + \textrm{C}^{c}_{i,t,t'} \right) < 0.
\end{align}
To solve this challenge and make \rido computationally efficient, we develop an approach based on a hidden property of the original optimization problem \eqref{sys:opt-prob}. More specifically, we start by noticing that even the continuous relaxation of \eqref{sys:opt-prob} is non-convex since $f_{\bar{t}} < 0$ 
might occur, for some $\bar{t} \in \left\{0, \dots, T-2 \right\}$, in the presence of negative covariances with future steps. In this case, however, since $\sum_{t=\bar{t}}^{T-1} f_t$ represents a proper variance, which is always non-negative, there always exists $t' > \bar{t}$ such that $\sum_{t={\bar{t}}}^{t'} f_t \ge 0$. Furthermore, it is possible to show that the optimal solution of the relaxed optimization problem is uniform in the interval $\left\{\bar{t}, \dots, t' \right\}$, namely $n_{\bar{t}}^* = n_{\bar{t}+1}^* = \dots = n^*_{t'}$ (proof in Appendix \ref{app:proofs}).
For this reason, it is possible to define a \emph{transformation} of the optimization problem that preserves the optimal solution, in which the variables $n_{\bar{t}}, \dots, n_{t'}$ are replaced with a single variable $y$. The objective function is modified accordingly, namely $\frac{f_{\bar{t}}}{n_{\bar{t}}} + \dots + \frac{f_{t'}}{n_{t'}}$ is replaced with $\frac{f_{\bar{t}} + \dots + f_{t'}}{y}$ in the objective function. A visualization of the transformation is proposed in Figure~\ref{fig:vis}.
By repeating the procedure for all the negative $f_t$, we obtain a transformation of the original problem which is now convex. Once the solution to this convex transformed optimization problem is found, one can quickly recover the relaxed DCS in its $T$-dimensional form. 

\begin{figure}
  \centering
  \includegraphics[width=10cm]{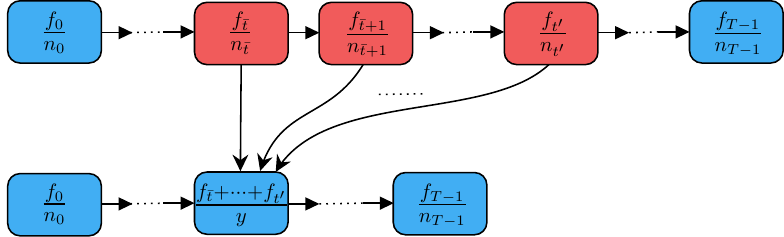} 
  \caption{Visualization of the transformation between the optimization problems. The first row shows the objective function of the original optimization problem, while the second one its transformation.}
  \label{fig:vis}
\end{figure}

Building on these results, we apply in \rido a similar procedure that transforms the relaxed version of \eqref{sys:emp-opt-prob} into a new problem where the negative time steps (i.e., steps in which Equation \eqref{eq:ill-condition-example} holds) are ``grouped'' with future time steps as long as the total summation is positive. In this way, (i) the resulting optimization problem is convex and (ii) as our analysis will reveal, this procedure has no impact on the theoretical properties of \rido (i.e., the result is the same as assuming access to an oracle that can solve non-linear and non-convex problems). As a concluding remark, we refer the reader to Appendix \ref{app:proofs} for a formal description of the above-mentioned procedure.



\subsection{Theoretical Analysis}\label{sec:theo-analysis}

{\color{black}

In this section, we discuss the theoretical properties of \rido. First, we observe that using, \rido, Equation \eqref{eq:ttmc-estimator} is a consistent estimator. Indeed, a sufficient condition for the consistency is that $\lim_{\Lambda \rightarrow + \infty} {\Lambda}^T \exp(-C N_{T}) \rightarrow 0$ holds almost surely for any constant $C > 0$. Nevertheless, in each phase $i$, \rido allocates, by definition, at least $1$ sample to $N_{T,i}$. Consequently, for any constant mini-batch size $b$, it holds that:
\begin{align*}
\lim_{\Lambda \rightarrow + \infty} {\Lambda}^T \exp(-C N_{T}) & \le \lim_{\Lambda \rightarrow + \infty} {\Lambda}^T \exp(-C K) \\ & = \lim_{\Lambda \rightarrow + \infty} {\Lambda}^T \exp\left(- \frac{C \Lambda}{b}\right) \\ & = 0.
\end{align*}
That being said, the following result summarizes our main finding.

\begin{restatable}{theorem}{iterativeregret}\label{theo:iterative-regret}
Let $\bm{n}^*$ be an optimal solution of \eqref{sys:opt-prob}, $b \ge 2T$ and $\beta = \frac{6(T+T^2)\Lambda K}{\delta}$. Suppose that $\sum_{t=0}^{T-1} f_t > 0$ and consider the DCS $\bm{N}$ computed by \rido. Then, with probability at least $1-\delta$ it holds that:
\begin{align}\label{eq:regret-bound}
    \sum_{t=0}^{T-1} \frac{f_t}{N_t} \le \mathcal{O}\left( \Var \left[ \hat{J}_{\bm{{n}^*}} \right] + \frac{2b}{\Lambda} \sum_{t: f_t < 0} |f_t| \right).
\end{align}
\end{restatable}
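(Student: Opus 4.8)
The plan is to prove \eqref{eq:regret-bound} via an optimism-based regret decomposition. Write $\hat{f}_{i,t}$ for the robust coefficient multiplying $1/N_t$ inside the objective of \eqref{sys:emp-opt-prob} at round $i$, and $N_t = \sum_{i=0}^{K-1}\hat{N}_{i,t}$ for the total number of samples \rido gathers at step $t$ (the $K$ mini-batches being the warm start $\hat{\bm{N}}_0$ of Line~\ref{line:1} plus the $K-1$ solver rounds). The argument rests on four ingredients: (i) a high-probability \emph{good event} $\mathcal{E}$ on which every empirical (co)variance is captured by its confidence bonus; (ii) \emph{optimism}, $f_t \le \hat{f}_{i,t}$, together with a matching reverse bound $\hat{f}_{i,t} \le f_t + \epsilon_{i,t}$ with $\epsilon_{i,t}$ driven by the bonuses of \eqref{eq:ci}; (iii) the per-round optimality of $\hat{\bm{N}}_i$ against the scaled comparator $\bm{n}^*/K$; and (iv) an aggregation step turning the $K$ per-round guarantees into a bound on the realized objective $\sum_t f_t/N_t$.

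First I would build $\mathcal{E}$. For each solver round $i$, step $t$, and pair $t'>t$, I apply a concentration inequality to $\Empvar_i(R_t)$ and $\Empcov_i(R_t,R_{t'})$, noting these are built from $\sum_{j<i}\hat{N}_{j,t}$ and $\sum_{j<i}\hat{N}_{j,t'}$ samples --- exactly the denominators in \eqref{eq:ci}. A union bound over the $\mathcal{O}(K(T+T^2))$ events, calibrated through $\beta = \frac{6(T+T^2)\Lambda K}{\delta}$, gives $\mathbb{P}(\mathcal{E}) \ge 1-\delta$ (the extra $\Lambda$ in $\beta$ only inflates the logarithmic bonus). On $\mathcal{E}$ the definitions of $\textrm{C}^\sigma_{i,t}$ and $\textrm{C}^{c}_{i,t,t'}$ yield $\sqrt{\Var(R_t)} \le \sqrt{\Empvar_i(R_t)} + \textrm{C}^\sigma_{i,t}$ and $\Cov(R_t,R_{t'}) \le \Empcov_i(R_t,R_{t'}) + \textrm{C}^{c}_{i,t,t'}$; since every $\gamma$-power is non-negative, summing gives $f_t \le \hat{f}_{i,t}$, while the reverse bounds give $\hat{f}_{i,t}\le f_t + \epsilon_{i,t}$ with $\epsilon_{i,t} = \mathcal{O}\big(\gamma^{2t}\sqrt{\Var(R_t)}\,\textrm{C}^\sigma_{i,t} + \gamma^{2t}(\textrm{C}^\sigma_{i,t})^2 + \sum_{t'>t}\gamma^{t+t'}\textrm{C}^{c}_{i,t,t'}\big)$.

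To convey the mechanism, I would first assume $f_t \ge 0$ for every $t$ (so the grouping of Section~\ref{sec:emp-problem} is vacuous). Dropping the warm-start round from $N_t$ (which only decreases it) and applying the arithmetic--harmonic-mean inequality $(\sum_{i}x_i)^{-1}\le (K-1)^{-2}\sum_i x_i^{-1}$, followed by optimism and the per-round optimality of $\hat{\bm{N}}_i$ against $\bm{n}^*/K$ (feasible for \eqref{sys:emp-opt-prob} since $\sum_t n^*_t/K \le \Lambda/K = b$ and it inherits monotonicity, up to the $N_t\ge1$ flooring of Section~\ref{sec:emp-problem} which only adds constants), gives
\begin{align*}
\sum_{t} \frac{f_t}{N_t} \;\le\; \frac{1}{(K-1)^2}\sum_{i=1}^{K-1}\sum_{t}\frac{\hat{f}_{i,t}}{\hat{N}_{i,t}} \;\le\; \frac{K}{(K-1)^2}\sum_{i=1}^{K-1}\sum_{t}\frac{\hat{f}_{i,t}}{n^*_t}.
\end{align*}
Since $\tfrac{K}{(K-1)^2}\sum_{i=1}^{K-1}(\cdot)$ is, up to the harmless factor $\tfrac{K}{K-1}\le 2$, the round-average, substituting $\hat{f}_{i,t}\le f_t + \epsilon_{i,t}$ isolates the leading term $\sum_t f_t/n^*_t = \Var[\hat{J}_{\bm{n}^*}]$ of \eqref{eq:dcs-variance-det}. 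The residual $\tfrac1K\sum_i\sum_t \epsilon_{i,t}/n^*_t$ is controlled because $\sum_{j<i}\hat{N}_{j,t}$ grows linearly in $i$, so $\tfrac1K\sum_i \textrm{C}^\sigma_{i,t} = \mathcal{O}(\sqrt{\log\beta/n^*_t})$, and a Young-type inequality reabsorbs it into $\mathcal{O}(\Var[\hat{J}_{\bm{n}^*}])$ (the $\log\beta$ being hidden in $\mathcal{O}$).

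The main obstacle --- and the source of the $\tfrac{2b}{\Lambda}\sum_{t:f_t<0}|f_t|$ correction --- is removing the assumption $f_t\ge0$ while keeping the AM-HM step valid, which requires working with the convexifying grouping of Section~\ref{sec:emp-problem} whose group-aggregated coefficients are non-negative by construction. The difficulty is twofold: \rido groups according to the empirical $\hat{f}_{i,t}$, whereas the structural lemma of Section~\ref{sec:emp-problem} makes $\bm{n}^*$ uniform on the groups induced by the \emph{true} $f_t$; and within a group \rido's allocation is uniform, so $\sum_{t\in G}f_t/N_t$ collapses to the grouped objective to which the previous display applies, but the passage back to the true coefficients leaves slack only at the negative steps. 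I would bound this slack by noting that each step receives at least one sample per solver round (the relaxed constraint $N_t\ge1$), so $N_t \ge K-1 \ge \tfrac{\Lambda}{2b}$; hence each negative coefficient contributes a discrepancy of at most $|f_t|/N_t \le \tfrac{2b}{\Lambda}|f_t|$, and summing over $\{t:f_t<0\}$ yields the stated additive term. Combining the non-negative and negative contributions, reabsorbing the bonus residuals, and recalling $\mathbb{P}(\mathcal{E})\ge 1-\delta$ completes the proof.
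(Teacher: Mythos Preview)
Your high-level architecture---good event, optimism $f_t\le\hat f_{i,t}$, AM--HM to decompose into per-round terms, per-round optimality against a comparator, and the crude bound $N_t\ge K-1$ for the negative-$f_t$ correction---matches the paper's proof exactly. The negative-$f_t$ handling is essentially what the paper does, and the grouping issue you worry about is a red herring: the continuous relaxation of \eqref{sys:emp-opt-prob} has only the monotonicity and $N_t\ge1$ constraints, so any monotone comparator with entries $\ge1$ and budget $b$ is feasible regardless of the empirical grouping.

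There is, however, a genuine gap in your residual control. After substituting the comparator $\bm n^*/K$, the bonus residual is $\frac{K}{(K-1)^2}\sum_i\sum_t \epsilon_{i,t}/n^*_t$. You claim $\tfrac1K\sum_i \mathrm C^\sigma_{i,t}=\mathcal O(\sqrt{\log\beta/n^*_t})$, but this would require $\sum_{j<i}\hat N_{j,t}\gtrsim i\,n^*_t/K$, which is exactly the near-optimal tracking you are trying to prove; the only bound you actually have is $\sum_{j<i}\hat N_{j,t}\ge i-1$. With that, the covariance-bonus part $\sum_{t'>t}\gamma^{t+t'}\mathrm C^c_{i,t,t'}/n^*_t$ contributes (after averaging over $i$ and using $n^*_t\ge1$) a term of order $K^{-1/2}=\sqrt{b/\Lambda}$, which \emph{dominates} $\Var[\hat J_{\bm n^*}]=\Theta(\Lambda^{-1})$ rather than being absorbed by it. Your Young-inequality remark rescues only the $\sqrt{\Var(R_t)}\,\mathrm C^\sigma$ cross-term, not the pure covariance bonus.

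The paper avoids this by \emph{not} using $\bm n^*/K$ as the per-round comparator. Instead it uses $\tilde g$ (equivalently $\tilde x^*_t+1$), the optimal solution of the relaxed problem with budget $b$ and the constraint $\tilde g_t\ge1$. Because $\tilde g_t\ge1$, the residual $\epsilon_{i,t}/\tilde g_t\le \epsilon_{i,t}$ can be bounded \emph{problem-independently}, and the AM--HM prefactor $2/K^2$ together with $\sum_i i^{-1/2}=\mathcal O(\sqrt K)$ gives a residual of order $K^{-3/2}$, which is lower order than $\Var[\hat J_{\bm n^*}]$. The price is that $\sum_t f_t/\tilde g_t$ is no longer equal to $K\Var[\hat J_{\bm n^*}]$; the paper closes this gap with a separate scaling lemma (optimal solutions of the relaxed problem at budgets $\Lambda$ and $b$ differ by the factor $\Lambda/b=K$), together with a constant-factor correction $(c+1)/(c-1)\le3$ coming from the $N_t\ge1$ floor when $b=cT\ge2T$. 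If you replace your comparator by such a $\tilde g$ and add this budget-scaling step, your argument goes through.
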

Theorem \ref{theo:iterative-regret} presents a high-probability upper bound on the surrogate objective function that presented in Section \ref{sec:opt-dcs}. Plugging this result into Theorem \ref{theo:dcs-variance}, we obtain a high-probability upper-bound on the  MSE of estimator of Equation \eqref{eq:ttmc-estimator} when the considered DCS is computed using \rido. We observe that Equation \eqref{eq:regret-bound} is composed of two terms. The first one is the variance of the optimal deterministic DCS computed as in \eqref{sys:opt-prob}, while the second one, instead, is related to the negative terms that are possibly present in the objective function. Among the two terms, this one has the worst dependence on $\Lambda$. We are unsure whether this term is an artifact of our analysis, a sign of the possible sub-optimality of the algorithm, or a fundamental challenge of the setting.\footnote{We highlight that $f_t < 0$ makes the objective function non-convex.} We leave closing this gap to future work. Nevertheless, it is worth mentioning that, whenever $f_t \ge 0$ holds for all time steps $t$, for sufficiently large budget of $\Lambda$, we have that:
\begin{align*}
\sum_{t=0}^{T-1} \frac{f_t}{N_t} \le \mathcal{O} \left( \Var\left[ \hat{J}_{\bm{n^*}}(\pi) \right] \right) = \mathcal{O} \left( \sum_{t=0}^{T-1} \frac{f_t}{n_t^*}\right).
\end{align*}
Consequently, in these cases, the surrogate of the objective function of the returned DCS computed by \rido is proportional, up to constant multiplicative factors, to the one of the variance of the optimal deterministic DCS. In this sense, \rido is adapting to the problem-dependent feature of the underlying task.\footnote{In Appendix~\ref{app-subsec:tech-details}, we also prove that this result is, in general, not possible to achieve for pre-determined schedules, i.e., for existing pre-determined algorithms $\sum_{t=0}^{T-1} \frac{f_t}{n_t}$ is not proportional up to constant multiplicative factor to the variance of the optimal deterministic baseline.} 
}

\section{Numerical Validation}\label{sec:exp}
In this section, we propose numerical validations that aim at assessing the empirical performance of \rido. More specifically, we focus on the comparison between our approach, the classical uniform-in-the-horizon strategy, and the robust DCS by \citet{poiani2022truncating}. We report the results across multiple domains, values of budget $\Lambda$, and discount factor $\gamma$. {\color{black} To measure the performance of all algorithms, we first estimated the return of the target policy to be evaluated using a large number of uniform trajectories (i.e., $1000$). Then, we measured the empirical MSE of each baseline at the end of data collection process over 1000 independent runs. Before discussing our results in detail, we describe our experimental settings in depth. We notice that for each considered value of $\Lambda$, the experiment is repeated (i.e., we do not use data collected with smaller $\Lambda$'s). To conclude, we refer the reader to Appendix \ref{app:experiments} for further details on the experiments (e.g., ablations, additional results, hyper-parameters, visualizations of the resulting DCSs).}

\textbf{Experimental Setting}~~In our experiments, we consider the following four domains. We start with the Inverted Pendulum \citep{gym}, a classic continuous control benchmark, where the agents' goal is to swing up a suspended body and keep it in the vertical direction. We, then, continue with the Linear Quadratic Gaussian Regulator \citep[LQG,][]{curtain1997linear}, where the agent controls a linear dynamical system with the objective of reducing a total cost that is expressed as a quadratic function. Then, we consider a $2$D continuous navigation problem, where an agent starts at the bottom left corner of a room and needs to reach a goal region in the upper right corner. The agent receives reward $0$ everywhere except inside the goal area, where the reward is positive and sampled from a Gaussian distribution. Finally, we consider the Ant environment from the MuJoCo \citep{todorov2012mujoco} suite, where the agent controls a four-legged $3$D robot with the goal of moving it forward. Further domain details are provided in Appendix \ref{app:experiments}. Concerning the policy that we evaluate for the Inverted Pendulum and the Ant, we rely on pre-trained deep RL agents made publicly available by \citet{rlzoo3}. For the LQG, instead, we evaluate the optimal policy that is available in closed form by solving the Riccati equations, and, finally, for the $2$D navigation task, we roll out a hand-designed policy that minimizes the distance of the agent's position w.r.t. to the center of the goal region. 

\textbf{Results}~~Figure~\ref{fig:mainresults} reports the results varying the discount factor and the available budget. The second row is obtained under the same experimental setting as the first one, but with lower values of $\gamma$. Let us first focus on the sub-optimality of the non-adaptive DCSs (i.e., the uniform strategy and the robust one of \citet{poiani2022truncating}). Indeed, as suggested by Theorem \ref{theo:dcs-variance}, being computed prior to the interaction with the environment, these algorithms cannot adapt the collection of samples to minimize the error of the return estimator. This is clear by looking, for instance, at the results of Continuous Navigation and the LQG. Indeed, in the Continuous Navigation domain, the reward is sparse and received close to the end of the estimation horizon $T$. In this scenario, the robust DCS blindly truncates trajectories, thus, avoiding the collection of experience in the most relevant timesteps. Conversely, in the LQG experiments, the optimal policy that arises from the Riccati equation pays a stochastic control cost\footnote{The uncertainty, in this case, arises both from the noise of the system together with the stochasticity of the initial state distribution.} at the beginning of the estimation horizon to bring the state of the system close to stability, after which the reward will remain almost constant. In this case, the uniform DCS results in a highly sub-optimal behavior as most of the estimation uncertainty is related to the initial interaction steps. \rido, on the other hand, thanks to its adaptivity, is able to obtain the best results in both domains. Indeed, in the Continuous Navigation problem, it achieves the same performance level as the uniform strategy, while in the LQG it even outperforms the robust DCS of \citet{poiani2022truncating}. The reason is that \citet{poiani2022truncating} truncates trajectories solely depending on the value of $\gamma$, and, therefore, it might waste a portion of its budget in trajectories of sub-optimal length, while \rido, since it aims at minimizing the error of the final estimation, is able to focus the collection of data in the most convenient way. Similar comments to those made for the LQG hold for the Pendulum domain as well. Concerning the Ant environment, instead, we notice that for $\gamma=0.999$ there is no significant difference between any of the presented schedules. Interestingly, however, as soon as we decrease $\gamma$ to $0.99$, we can appreciate the sub-optimality of the uniform strategy, which wastes a portion of its budget in gathering samples that are significantly discounted, and, therefore, their weight in the estimator's error shrinks to $0$. On the other hand, the robust strategy and \rido avoid this pitfall thanks to the exploitation of the discount factor, thus obtaining reduced error estimates. Finally, we remark that \rido has achieved the most competitive performance across various domains, values of the discount factor, and budget, thus clearly highlighting the benefits of adaptive strategies w.r.t. pre-determined ones.   

\begin{figure}
  \centering
  \includegraphics[width=13.5cm]{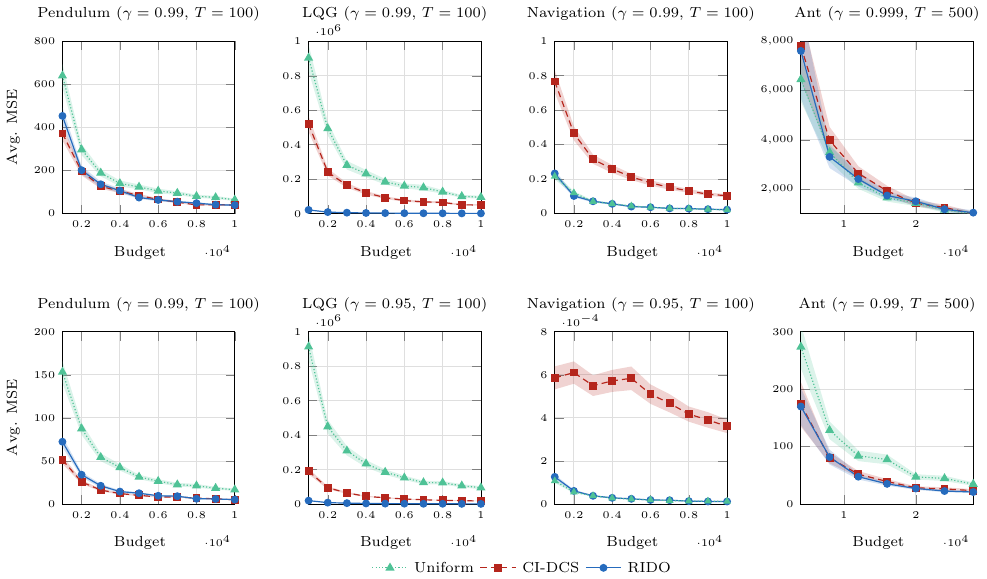} 
  \caption{Empirical MSE (mean and $95$\% confidence intervals over $100$ runs) on the considered domains and baselines. The first row considers higher values of $\gamma$ w.r.t. the second one.}
  \label{fig:mainresults}
\end{figure}

\section{Related Works}\label{app:rel-works}

We now present a comprehensive analysis and discussion of previous works that are closely connected to our own research. First of all, our work focuses on estimating a policy's performance in a given MDP \citep{sutton2018reinforcement}. Considering the significance of this task, reducing the the error of the return estimator, is a problem that has received significant attention in the literature. A vast family of approaches that can be used to solve this problem deeply exploits the Markovian properties of the environment by relying on Temporal Difference \citep[TD, see, e.g.,][]{singh1996reinforcement,sutton1988learning,lee2019target,riquelme2019adaptive,qu2019nonlinear} learning. On the other hand, our work focuses purely on Monte Carlo simulation, which can be transparently applied to non-Markovian environments. Another relevant line of work deals with optimizing the agent's policy to collect data within an environment (i.e., \emph{behavioral} policy) to reduce the variance of an unbiased estimator for the return of a different \emph{target} policy \citep{hanna2017data,zhong2022robust,mukherjee2022revar}. These techniques are referred to as off-policy evaluation methods and usually rely on Importance Sampling \citep[e.g,][]{hesterberg1988advances,owen2013monte} techniques to guarantee the unbiasedness of the resulting estimate. However, these studies significantly differ from ours in that, instead of aiming for a behavior policy that reduces the estimator variance, our goal is to directly exploit the properties of Monte Carlo data collection to reduce the on-policy estimator error. 

In the context of RL, exploration bonuses are widely adopted in control (where the goal is learning an optimal policy) to tackle the exploration-exploitation dilemma \citep[e.g.,][]{brafman2002r,auer2008near,tang2017exploration,jin2018q,o2018uncertainty,zanette2019tighter}. Initially, when the agent has limited knowledge about the environment, the exploration bonuses drive it to explore widely. As the agent's knowledge improves, the exploration bonuses decrease, and the agent can shift towards exploiting its learned policy more. In our work, instead, we use exploration bonuses to introduce a source of robustness w.r.t. the objective function that we are interested in.

Finally, the work that is most related to ours is \citet{poiani2022truncating}, where, the concept of truncating trajectories has been analyzed in the context of Monte Carlo RL. More specifically, the authors derived a \emph{non-adaptive} schedule of trajectories that provably minimizes confidence intervals around the return estimator. In this work, on the other hand, we have shown the sub-optimality of pre-determined schedules, and we designed an \emph{adaptive} algorithm that aims at minimizing the error of the final estimate. The concept of truncating trajectories has also received some attention in other fields of research such as model-based policy optimization \citep{nguyen2018improving,janner2019trust,bhatia2022adaptive,zhang2023uncertainty}, multi-task RL \citep{farahmand2016truncated} and imitation learning \citep{sun2018truncated}. However, in all these works, the motivation, the method, and the analysis completely differ w.r.t. what has been considered here. Finally, the concept of truncating trajectories in Monte Carlo RL drew inspiration from a recent work in the field of multi-fidelity bandit \citep{poiani2022multi}, where the authors considered the idea of cutting trajectories while interacting with the environment to obtain a biased estimate of the return of a policy in planning algorithms such as depth-first search.

\section{Conclusions and Future Works}\label{sec:conclusions}
In this work, we studied the problem of allocating a budget $\Lambda$ of transitions in the context of Monte Carlo policy evaluation to reduce the error of the policy expected return estimate. Leveraging the formalism of Data Collection Strategy (DCS) to model how an agent spends its interaction budget, we started by analyzing the error of a return estimator for any possible DCS. Our result reveals that DCSs determined prior to the interaction with the environment (e.g., the usual uniform-in-the-horizon one and the robust one of \citet{poiani2022truncating}) fail to satisfy the ultimate goal of policy evaluation, i.e., produce a low error estimate. Furthermore, it also suggests that algorithms that spend the available budget $\Lambda$ iteratively might successfully adapt their strategy to minimize the error of the return estimator. Inspired by these findings, we propose an \emph{adaptive} method, \rido, that, by exploiting information that has already been collected, can dynamically adapt its DCS to allocate a larger portion of transitions in time steps in which more accurate sampling is required to reduce the error of the final estimate. After conducting a theoretical analysis on the properties of the proposed method, we present empirical studies that confirm its adaptivity across a different number of domains, values of budget $\Lambda$, and discount factors $\gamma$.

Our study offers exciting possibilities for future research. For example, it would be interesting to extend our ideas to policy search algorithms (e.g., \citet{williams1992simple}), with the goal of finding DCSs that minimize the error of the empirical gradient that is adopted in the update rule. Furthermore, we notice that, since our approach is purely based on MC simulation, it does not fully leverage the Markovian properties of the underlying MDP. Combining TD techniques \citep{sutton2018reinforcement} with mechanisms that truncate trajectories is a challenging and open research question that could lead to further improvements in the efficiency of RL algorithms.

\section*{Acknowledgements}
This paper is supported by PNRR-PE-AI FAIR project funded by the NextGeneration EU program.

\bibliographystyle{plainnat}
\bibliography{biblio}

\newpage
\appendix

\section{Proofs and Derivations}\label{app:proofs}

In this section, we provide complete proofs of our theoretical results. Specifically, Section \ref{app-subsec:consistency} contains the proof of Theorem \ref{theo:consist}, Section \ref{app-subsec:dcs-variance} the proof of Theorem \ref{theo:dcs-variance}; Section \ref{app-subsec:iter-regret} the proof of Theorem \ref{theo:iterative-regret}, and Section \ref{app-subsec:tech-details} proofs and details of additional statements that have been made in the main text (i.e., formal description of the transformation between optimization problems and how we applied this technique in \rido, difficulties in deriving closed-form solutions for the optimization problems of interest, sub-optimality examples of non-adaptive methods whose variance cannot scale with the variance of the optimal DCS.

\subsection{Proof of Theorem \ref{theo:consist}}\label{app-subsec:consistency}

\consistency*
{\color{black}
\begin{proof}
Fix any $\Lambda \in \mathbb{N}$ and $\epsilon > 0$. Denote by $\mathcal{V}$ the space of all DCSs vectors with total budget $\Lambda$ such that, for all $\bm{n} \in \mathcal{V}$, it holds that $\mathbb{P}(\{ \bm{N} = \bm{n} \}) > 0$. Then, we have that:
\begin{align*}
\mathbb{P}(|\hat{J}_{\bm{N}}(\pi) - J(\pi)| > \epsilon) & =  \sum_{\bm{n} \in \mathcal{V}} \mathbb{P}\left( \{ | \hat{J}_{\bm{n}}(\pi) - J(\pi) > \epsilon \} \textup{ and } \{\bm{N} = \bm{n} \} \right) \\ & \le \sum_{\bm{n} \in \mathcal{V}} 2\exp \left( \frac{-2 \epsilon^2}{\sum_{h=0}^{T} m_h \left( \sum_{t=0}^{T-1} \frac{\gamma^t}{n_t} \right)^2} \right) \\ & \le \sum_{\bm{n} \in \mathcal{V}} 2\exp \left( \frac{-2 \epsilon^2 n_T}{\gamma^T} \right) \\ & \coloneqq 2 \sum_{\bm{n} \in \mathcal{V}} \exp\left( -\tilde{C} n_T\right) \\ & \le 2 \max_{\bm{n} \in \mathcal{V}} \Lambda^T  \exp\left( -\tilde{C} n_T\right),
\end{align*}
where (i) the first inequality follows from the Hoeffding inequality (indeed, $\hat{J}$ for a fixed $\bm{n}$ is a sum of sub-gaussian random variables), and (ii) the third inequality follows by noticing that $|\mathcal{V}| \le \Lambda^T$.
Finally, the proof follows by taking the limit for $\Lambda \rightarrow +\infty$ and using Equation \eqref{eq:consistency} together with the fact that $\tilde{C} > 0$.
\begin{align*}
\end{align*}
\end{proof}
}

\subsection{Proof of Theorem \ref{theo:dcs-variance}}\label{app-subsec:dcs-variance}

\est*
{\color{black}
\begin{proof}
With simple algebraic manipulations, we have that:
\begin{align}
\textup{MSE} \left[ \hat{J}_{\bm{M}} \right] & = \E[ (\hat{J}_{\bm{M}} - J(\pi))^2 ] \\ & \le 2 \E[ (\tilde{J}_{\bm{M}} - J(\pi))^2 ] + 2 \E[ (\hat{J}_{\bm{M}} - \tilde{J}_{\bm{M}})^2 ]\label{eq:temp}.
\end{align}
At this point, focus on $\E[ (\tilde{J}_{\bm{M}} - J(\pi))^2 ]$. We begin by noticing that $\tilde{J}_{\bm{M}}$ is an unbiased estimate of $J(\pi)$. Indeed, we have that:
\begin{align*}
\mathbb{E}[\tilde{J}_{\bm{M}}] & = \mathbb{E} \left[ \sum_{h=1}^T \sum_{i=1}^{M_h} \sum_{t=0}^{h-1} \gamma^t \frac{R_t}{N_t} \right] \\ & = \mathbb{E}_{\bm{M}} \left[ \mathbb{E}\left[ \sum_{h=1}^T \sum_{i=1}^{M_h} \sum_{t=0}^{h-1} \gamma^t \frac{R_t}{N_t} | \bm{M} \right] \right] \\ & = \mathbb{E}_{\bm{M}} \left[ \sum_{h=1}^T M_h \sum_{t=0}^{t-1} \mathbb{E}\left[ \gamma^t \frac{R_t}{N_t} | \bm{M} \right] \right] \\ & = \mathbb{E}_{\bm{M}} \left[ \sum_{t=0}^{T-1} N_t \mathbb{E}\left[ \gamma^t \frac{R_t}{N_t} | \bm{M} \right] \right] \\ & = J(\pi),
\end{align*}
where (i) the second equality follows from the law of total expectation, (ii) the second one follows from the fact that the different trajectories are independent, and (iii) the third one from Theorem B.1 in \cite{poiani2022truncating}.\footnote{We note that having collected a \emph{new} dataset using a DCS $\bm{M}$ that is the output of an online algorithm, breaks the correlation in Estimator \eqref{eq:ttmc-estimator}. As a result, $\tilde{J}_{\bm{M}}$ in an unbiased estimator. On the other hand, $\hat{J}_{\bm{M}}$ is not unbiased since the length of the trajectory depends on the previous rewards, i.e., there is a correlation between the observed rewards and the variables $N_T$'s and $M_H$'s.}
Therefore, we have that $\tilde{J}_{\bm{M}}$, $\E[ (\tilde{J}_{\bm{M}} - J(\pi))^2 ] = \Var[\tilde{J}_{\bm{M}}]$, and, moreover, due to the law of total variance, we have that:
\begin{align*}
\Var[\tilde{J}_{\bm{M}}] & = \mathbb{E}_{\bm{M}}\left[ \Var(\tilde{J}_{\bm{M}} | \bm{M}) \right] + \Var_{\bm{M}}\left( \mathbb{E}\left[ \tilde{J}_{\bm{M}} | \bm{M} \right] \right) \\ & = \mathbb{E}_{\bm{M}}\left[ \Var(\tilde{J}_{\bm{M}} | \bm{M}) \right],
\end{align*}
where the second step follows from the fact that $\mathbb{E}[\hat{J}_{\bm{M}}] = J$. Finally, it remans to analyze $\mathbb{E}_{\bm{M}}\left[ \Var(\tilde{J}_{\bm{M}} | \bm{M}) \right]$. Specifically, we have that: 
\begin{align}\label{eq:var-proof-eq1}
\mathbb{E}_{\bm{M}}\left[ \Var(\tilde{J}_{\bm{M}} | \bm{M}) \right] & = \mathbb{E}_{\bm{M}} \left[ \sum_{h=1}^{T} M_h \Var\left( \sum_{t=0}^{h-1} \frac{\gamma^t R_t}{N_t} | \bm{M} \right)  \right],
\end{align}
where the equality follows from the fact that the, given $\bm{M}$, the different trajectories are independent. Now, from the variance of the sum of dependent random variables, we can further rewrite the right-hand side of Equation \eqref{eq:var-proof-eq1} as follows:
{\fontsize{10}{10.5}
\begin{align*}
\mathbb{E}_{\bm{M}} \left[ \sum_{h=1}^{T} M_h  \sum_{t=0}^{h-1} \Var\left(\frac{\gamma^t R_t}{N_t} | \bm{M} \right) \right] + \mathbb{E}_{\bm{M}} \left[ \sum_{h=1}^{T} M_h \sum_{t=0}^{h-2} \sum_{t'=t+1}^{h-1} 2 \Cov \left(\frac{\gamma^t R_t}{N_t}, \frac{\gamma^{t'} R_{t'}}{N_{t'}} | \bm{M} \right) \right].
\end{align*}
}
This, in turn, can be further rewritten as:
{\fontsize{10}{10.5}
\begin{align*}
\mathbb{E}_{\bm{M}} \left[ \sum_{h=1}^{T} {M_H}  \sum_{t=0}^{h-1} \frac{\gamma^{2t}}{N_T^2} \Var\left({R_t} | \bm{M} \right) \right] + \mathbb{E}_{\bm{M}} \left[ \sum_{h=1}^{T} M_h \sum_{t=0}^{h-2} \sum_{t'=t+1}^{h-1} 2 \frac{\gamma^{t+t'}}{N_t N_{t'}} \Cov \left({R_t}, R_{t'} | \bm{M} \right) \right].
\end{align*}
}
Indeed, for three random variables $X,Y,Z$, and for two scalars $a,b$, it holds that $\Var(aXY|Y) = a^2Y^2 \Var(X)$ and $\Cov(aXY, bYZ | Y) = abY \Cov(X,Z)$. 
Furthermore, we note that $\Var(R_t | \bm{M}) = \Var(R_t)$, and $\Cov(R_t, R_{t'} | \bm{M}) = \Cov(R_t, R_{t'})$, thus leading to:
{\fontsize{10.5}{11}
\begin{align*}
\mathbb{E}_{\bm{M}} \left[ \sum_{h=1}^{T} {M_H}  \sum_{t=0}^{h-1} \frac{\gamma^{2t}}{N_T^2} \Var\left({R_t} \right) \right] + \mathbb{E}_{\bm{M}} \left[ \sum_{h=1}^{T} M_h \sum_{t=0}^{h-2} \sum_{t'=t+1}^{h-1} 2 \frac{\gamma^{t+t'}}{N_t N_{t'}} \Cov \left({R_t}, {R_{t'}} \right) \right].
\end{align*}
}
 
At this point, focus on $\mathbb{E}_{\bm{M}} \left[ \sum_{h=1}^{T} {M_H}  \sum_{t=0}^{h-1} \frac{\gamma^{2t}}{N_T^2} \Var\left({R_t} \right) \right]$,
and fix $\bar{t} \in \{0, \dots, T-1 \}$. Unrolling the summation, we observe that its contribution appears in all $h$ such that $h > \bar{t}$. Thus, we obtain:
\begin{align*}
\mathbb{E}_{\bm{M}} \left[ \sum_{h=1}^{T} M_h \sum_{t=0}^{h-1} \frac{\gamma^{2t}}{N_t^2} \Var(R_t) \right].
 = \mathbb{E}_{\bm{M}} \left[ \sum_{t=0}^{T-1} \frac{\gamma^{2t}}{N_t^2} \Var(R_t) \sum_{h=t+1}^{T}M_h \right]
\end{align*}
Furthemore, since $\sum_{h=t+1}^{T} M_h = N_t - N_{t+1} + N_{t+1} - N_{t-2} + \dots + N_{T-2} - N_{T-1} + N_{T-1} = N_t$, we have that:
\begin{align}\label{eq:var-theo-eq1}
\mathbb{E}_{\bm{M}} \left[ \sum_{h=1}^{T} {M_H}  \sum_{t=0}^{h-1} \frac{\gamma^{2t}}{N_T^2} \Var\left({R_t} \right) \right] = \mathbb{E}_{\bm{M}}\left[\sum_{t=0}^{T-1} \frac{\gamma^{2t}}{N_t} \Var(R_t) \right].
\end{align}

Now, let us focus on $\mathbb{E}_{\bm{M}} \left[ \sum_{h=1}^{T} M_h \sum_{t=0}^{h-2} \sum_{t'=t+1}^{h-1} 2 \frac{\gamma^{t+t'}}{N_t N_{t'}} \Cov \left({R_t}, {R_{t'}} \right) \right]$. Fix an index $\bar{t} \in \left\{0, \dots, T-2 \right\}$ for the outer summation over time. Unrolling the summation, we observe that its contribution appears only in all $h$ such that $h > \bar{t} + 1$, therefore, we can rewrite this expectation as:
\begin{align*}
	\mathbb{E}_{\bm{M}} \left[ \sum_{t=0}^{T-2} \sum_{h=t+2}^T M_h \sum_{t'=t+1}^{h-1} \frac{2\gamma^{t+t'}}{N_t N_{t'}} \Cov \left(R_t, R_{t'} \right) \right].
\end{align*}
Now, fix $\bar{t} \in \{0, \dots, T-2\}$ as the index of the outer summation, and consider $t' \ge \bar{t} + 1$. Here, we note that $t'$ only appears for $h > t'$. Thus, by rearrenging the terms and noticing that $\sum_{h=t'+1}^T M_h = N_{t'}$, we have that:
{\fontsize{9}{9.5}
\begin{align}\label{eq:var-theo-eq6}
    \mathbb{E}_{\bm{M}} \left[ \sum_{t=0}^{T-2}  \sum_{t'=t+1}^{T-1} \frac{2\gamma^{t+t'}}{N_t N_{t'}} \Cov \left(R_t, R_{t'} \right) \sum_{h=t'+1}^{T} M_h \right] = \mathbb{E}_{\bm{M}} \left[ \sum_{t=0}^{T-2}  \sum_{t'=t+1}^{T-1} \frac{2\gamma^{t+t'}}{N_t} \Cov \left(R_t, R_{t'} \right) \right].  
\end{align}
}
Combining Equation \eqref{eq:var-theo-eq1} and \eqref{eq:var-theo-eq6}, we obtained:
\begin{align}\label{eq:var-theo-eq7}
\Var[\tilde{J}_{\bm{M}}] = \mathbb{E}\left[ \sum_{t=0}^{T-1} \frac{f_t}{N_t} \right].
\end{align}
Plugging this result within Equation \eqref{eq:temp} concludes the proof of Equation \eqref{eq:dcs-variance}. The proof of Equation \eqref{eq:dcs-variance-det}, instead, is a direct consequence of Equation \eqref{eq:var-theo-eq7}. Indeed, in this case, $\bm{M} = \bm{m}$, and $\textup{MSE}[\tilde{J}_{\bm{m}}] = \Var(\tilde{J}_{\bm{m}})$. 
\end{proof}
}

\subsection{Proof of Theorem \ref{theo:iterative-regret}}\label{app-subsec:iter-regret}

To prove Theorem \ref{theo:iterative-regret}, we first provide some preliminaries lemmas on the properties of the optimization problems that we are considering, toghether with some technical results that will be used in our proofs. Then, we will move towards the analysis of \rido.

\subsubsection{Preliminaries for the proof of Theorem \ref{theo:iterative-regret}}
We begin by proving the fact that for any timestep $t$ in which $f_t$ is negative, that there exists some future timestep $t'$ such that $\sum_{i=t}^{t'} f_i \ge 0$.

\begin{lemma}[Variance function property]\label{lemma:var-funct-property}
Consider $f_t = \gamma^{2t} \Var{R}_t + 2 \sum_{t'=t+1}^{T-1} \gamma^{t+t'} \Cov(R_t, R_{t'})$. For any $t \in \left\{0, \dots, T-2 \right\}$ such that $f_t < 0$, there exists $\bar{t} > t$ such that $\sum_{i=t}^{\bar{t}} f_t \ge 0$.
\end{lemma}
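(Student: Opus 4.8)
The plan is to recognize that the tail sum $\sum_{i=t}^{T-1} f_i$ is exactly the variance of a discounted partial return, so its non-negativity is automatic; the required $\bar{t}$ can then simply be taken to be the last available time step. First I would introduce the random variable $G_t \coloneqq \sum_{i=t}^{T-1} \gamma^i R_i$, i.e.\ the discounted return accumulated from step $t$ onward, and compute its variance via the standard decomposition of the variance of a sum of (possibly dependent) random variables, using $\Var(\gamma^i R_i) = \gamma^{2i}\Var(R_i)$ and $\Cov(\gamma^i R_i, \gamma^j R_j) = \gamma^{i+j}\Cov(R_i, R_j)$. This gives $\Var(G_t) = \sum_{i=t}^{T-1} \gamma^{2i}\Var(R_i) + 2\sum_{t \le i < j \le T-1} \gamma^{i+j}\Cov(R_i, R_j)$, which is manifestly non-negative because it is a genuine variance.

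Next I would verify, by a routine rearrangement of the double summation, that summing the $f_i$ from $i = t$ to $T-1$ reproduces exactly this expression. Writing out $\sum_{i=t}^{T-1} f_i = \sum_{i=t}^{T-1}\gamma^{2i}\Var(R_i) + 2\sum_{i=t}^{T-1}\sum_{j=i+1}^{T-1}\gamma^{i+j}\Cov(R_i, R_j)$, one observes that the index set of the inner double sum, namely $\{(i,j) : t \le i,\ i < j \le T-1\}$, coincides with the set of ordered pairs $\{(i,j) : t \le i < j \le T-1\}$ appearing in $\Var(G_t)$. Hence $\sum_{i=t}^{T-1} f_i = \Var(G_t) \ge 0$.

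Finally, given $t \in \{0, \dots, T-2\}$ with $f_t < 0$, I would simply take $\bar{t} = T-1$: since $t \le T-2$ we have $\bar{t} > t$, and by the identity just established $\sum_{i=t}^{\bar{t}} f_i = \Var(G_t) \ge 0$, which proves the claim. There is no real obstacle here; the only thing requiring care is the bookkeeping in the double-sum identity, which is entirely mechanical. In fact the existence statement follows at once from the single structural observation that every tail sum of the $f_i$ is a proper variance, so one never needs to exhibit the smallest such $\bar{t}$ for this lemma — the endpoint $T-1$ always works.
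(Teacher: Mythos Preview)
Your proposal is correct and takes essentially the same approach as the paper: both identify $\sum_{i=t}^{T-1} f_i$ with $\Var\bigl(\sum_{i=t}^{T-1}\gamma^i R_i\bigr)$ and use non-negativity of variance. The only cosmetic difference is that the paper phrases this as a contradiction argument, whereas you proceed directly by exhibiting $\bar t = T-1$; the underlying idea is identical.
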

\begin{proof}
We proceed by contradiction. Suppose the claim to be false, then we would have:
\begin{align*}
    \sum_{i=t}^{T-1} f_i < 0.
\end{align*}
However, by manipulating $\sum_{i=t}^{T-1} f_i$, we obtain:
\begin{align*}
    \sum_{i=t}^{T-1} f_i = \sum_{i=t}^{T-1} \gamma^{2i} \Var{R_i} + 2 \sum_{t'=i+1}^{T-2} \gamma^{i+t'} \Cov(R_i, R_{t'}) = \Var\left[\sum_{i=t}^{T-1} \gamma^i R_i \right],
\end{align*}
which is always greater or equal than $0$, thus concluding the proof.
\end{proof}

We then continue by proving the result of Section \ref{sec:emp-problem} that justifies the transformation between optimization problems. However, rather than considering directly the optimization problem we are interested in (i.e., the one defined with $f_t$), we focus on a generalization that consider arbitrary vectors that satisfy the same properties as the one of Lemma \ref{lemma:var-funct-property}.

\begin{lemma}[Optimization of Variance-like functions]\label{lemma:opt-lemma-variance-funct}
Let $c = \left(c_1, \dots, c_k \right)$ with $c_i \in \mathbb{R}$, such that $c_1 < 0$, $\sum_{i=1}^{\bar{k}} c_i \le 0$ for all $\bar{k} < k$, and $\sum_{i=1}^{k} c_i \ge 0$. Let $\Lambda \ge k$ and consider the following optimization problem:
\begin{equation}\label{sys:opt-lemma-variance-funct}
\begin{aligned} 
\min_{\bm{x}} \quad & \sum_{i=1}^{K} \frac{c_t}{x_t} \\
\textrm{s.t.} \quad &  \sum_{t=0}^{T-1} x_t = \Lambda \\
  & x_i \ge x_{i+1}, \quad \forall i \in \{1, \dots, K-1\}  \\
  & x_i \ge 1, \quad \forall i \in \{1, \dots, K\}.
\end{aligned}
\end{equation}
Then, $\bm{\bar{x}} = \left(\frac{\Lambda}{k}, \dots, \frac{\Lambda}{k} \right)$ is an optimal solution of \eqref{sys:opt-lemma-variance-funct}.
\end{lemma}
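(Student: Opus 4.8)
The plan is to rewrite the objective via summation by parts (Abel summation), so that the sign conditions on the partial sums of $c$ combine with the monotonicity constraint $x_i \ge x_{i+1}$ to produce a clean lower bound that the uniform allocation meets with equality.

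First I would introduce the partial sums $S_j \coloneqq \sum_{i=1}^{j} c_i$ (with $S_0 = 0$), so that the hypotheses read $S_j \le 0$ for every $j < k$ and $S_k \ge 0$. Writing $c_i = S_i - S_{i-1}$ and applying Abel summation to $\sum_{i=1}^{k} c_i / x_i$ yields the identity
\[
\sum_{i=1}^{k} \frac{c_i}{x_i} = \frac{S_k}{x_k} + \sum_{i=1}^{k-1} S_i \left( \frac{1}{x_i} - \frac{1}{x_{i+1}} \right).
\]

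Next I would exploit the signs. The feasibility constraint $x_i \ge x_{i+1} > 0$ gives $\frac{1}{x_i} - \frac{1}{x_{i+1}} \le 0$, and since $S_i \le 0$ for every $i < k$, each summand in the telescoped expression is a product of two nonpositive factors, hence nonnegative. Discarding this sum therefore gives the lower bound $\sum_{i=1}^{k} c_i / x_i \ge S_k / x_k$. Because $x_k$ is the smallest coordinate of a nonincreasing vector summing to $\Lambda$, we have $k\, x_k \le \sum_{i=1}^{k} x_i = \Lambda$, i.e. $x_k \le \Lambda/k$; combined with $S_k \ge 0$ this upgrades the estimate to $\sum_{i=1}^{k} c_i / x_i \ge k\, S_k / \Lambda$, a quantity independent of the particular feasible point $\bm{x}$.

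Finally I would check that the uniform point $\bar{\bm{x}} = \left( \frac{\Lambda}{k}, \dots, \frac{\Lambda}{k} \right)$ is feasible (it is nonincreasing, sums to $\Lambda$, and each entry equals $\Lambda/k \ge 1$ since $\Lambda \ge k$) and attains the bound, since $\sum_{i=1}^{k} c_i / (\Lambda/k) = (k/\Lambda) S_k$. As $\bar{\bm{x}}$ matches the universal lower bound over all feasible points, it is optimal. I expect the only delicate point to be getting the Abel-summation bookkeeping and the inequality directions exactly right; once the identity above is established, the sign analysis and the constraint $x_k \le \Lambda/k$ are routine.
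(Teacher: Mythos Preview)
Your proposal is correct. The Abel-summation identity
\[
\sum_{i=1}^{k} \frac{c_i}{x_i} \;=\; \frac{S_k}{x_k} + \sum_{i=1}^{k-1} S_i\left(\frac{1}{x_i}-\frac{1}{x_{i+1}}\right)
\]
holds exactly as you wrote, the sign analysis is right (each summand is a product of two nonpositive factors), and the observation $x_k\le \Lambda/k$ together with $S_k\ge 0$ finishes the bound, which the uniform point attains.

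The paper's own proof reaches the same conclusion but by a more laborious route: it fixes a feasible $\bm{x}$, isolates the $c_k$ term, uses $c_k\ge -\sum_{i<k}c_i\ge 0$ and $\Lambda\ge k x_k$ to reduce the desired inequality to $\sum_{i=1}^{k-1} c_i \le \sum_{i=1}^{k-1} c_i\, x_k/x_i$, and then verifies this by an iterative ``peeling'' argument, at each step multiplying the remaining partial sum by a ratio $x_k/x_j\le 1$ and invoking $S_j\le 0$. That iteration is essentially Abel summation unrolled by hand. Your presentation packages the same sign information into a single identity; this makes the structure clearer and avoids the intermediate sufficient-condition steps. The paper's argument and yours use identical ingredients (nonpositive prefix sums, monotonicity, $x_k\le\Lambda/k$), so neither gains extra generality, but your version is shorter and more transparent.
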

\begin{proof}
If $\bm{\bar{x}}$ is an optimal solution of \eqref{sys:opt-lemma-variance-funct}, for all $\bm{x} = \left(x_1, \dots, x_k \right)$ that belongs to the feasible region it holds that:
\begin{align*}
    \sum_{i=1}^k \frac{c_i}{\Lambda/k} \le \sum_{i=1}^k \frac{c_i}{x_i},
\end{align*}
which can be rewritten as:
\begin{align}\label{eq:opt-lemma-variance-like-eq1}
    \sum_{i=1}^{k-1} \frac{c_i}{\Lambda/k} \le c_k \left(\frac{1}{x_k} - \frac{1}{\Lambda/k} \right) + \sum_{i=1}^{k-1} \frac{c_i}{x_i} = c_k \frac{\Lambda - kx_k}{x_k \Lambda} + \sum_{i=1}^{k-1} \frac{c_i}{x_i}.
\end{align}
At this point, we notice that $\Lambda \ge k x_k$ for any $\bm{x}$ that belongs to the feasible region. Furthermore, $\sum_{i=1}^k c_i \ge 0$, implies that $c_k \ge -\sum_{i=1}^{k-1} c_i \ge 0$. Therefore, a sufficient for Equation \eqref{eq:opt-lemma-variance-like-eq1} to hold is that:
\begin{align*}
    \sum_{i=1}^{k-1} \frac{c_i}{\Lambda / k} \le \sum_{i=1}^{k-1} \frac{c_i}{x_i} - \sum_{i=1}^{k-1} c_i \left(\frac{1}{x_k} - \frac{1}{\Lambda / k} \right),
\end{align*}
that can be rewritten as:
\begin{align*}
    \sum_{i=1}^{k-1} c_i \left(\frac{1}{x_i} - \frac{1}{x_k} \right) \ge 0,
\end{align*} or equivalently:
\begin{align}\label{eq:opt-lemma-variance-like-eq2}
    \sum_{i=1}^{k-1} c_i \le \sum_{i=1}^{k-1} c_i \frac{x_k}{x_i}.
\end{align}
However, as we shall show, Equation \eqref{eq:opt-lemma-variance-like-eq2} is always satisfied. Indeed, since $\sum_{i=1}^{k-1} c_i \le 0$ and $x_k \le x_{k-1}$ we have that:
\begin{align*}
    \sum_{i=1}^{k-1} c_i \le \sum_{i=1}^{k-1} c_i \frac{x_k}{x_{k-1}} = c_{k-1} \frac{x_k}{x_{k-1}} + \sum_{i=1}^{k-2} c_i \frac{x_k}{x_{k-1}}.
\end{align*}
Moreover, since $\sum_{i=1}^{k-1} c_i \le 0$ and since $x_{k-1} \le x_{k-2}$, 
\begin{align*}
    c_{k-1} \frac{x_k}{x_{k-1}} + \sum_{i=1}^{k-2} c_i \frac{x_k}{x_{k-1}} \le c_{k-1} \frac{x_k}{x_{k-1}} + c_{k-2} \frac{x_k}{x_{k-2}} + \sum_{i=1}^{k-2} c_i \frac{x_k}{x_{k-2}}.
\end{align*}
The properties that $x_i \ge x_{i+1}$ together with the fact that $\sum_{i=1}^{\bar{k}} c_i \le 0$ for any $\bar{k} < k$ allows to iterate the process, thus concluding the proof.
\end{proof}

As one can see, applying multiple times Lemma \ref{lemma:opt-lemma-variance-funct}, to the problem we are considering, we obtain a transformed problem that is convex, since the objective function will be composed of summation of convex functions. We will provide additional details on this point later on. We now continue by studying the properties of optimization problems whose objective function satisfies the condition of Lemma \ref{lemma:opt-lemma-variance-funct}. More specifically, the following Lemma allows us to quantify the difference in the optimal solution when changing the budget constraint.

\begin{lemma}[Budget sensitivity analysis]\label{lemma:restricted-budget-opt}
Let $c_t \in \mathbb{R}$ for each $t \in \{0, \dots, T-1 \}$. Define $\mathcal{Y} = \left\{i \in \left\{0, \dots, T-1 \right\} : c_i < 0 \right\}$. Let $y \in \mathcal{Y}$, and define $q(y)$ as the smallest integer in $\left\{y+1, \dots, T-1 \right\}$ such that $\sum_{i=y}^{q(y)} c_i \ge 0$. Suppose that $q(y)$ is well-defined for any $y \in \mathcal{Y}$.  

Consider the following optimization problems:
\begin{equation}\label{sys:prob-1}
\begin{aligned} 
\min_{\bm{x}} \quad & \sum_{t=0}^{T-1} \frac{c_t}{x_t}  \\
\textrm{s.t.} \quad & \sum_{t=0}^{T-1} x_t = \Lambda \\
  & x_t \ge x_{t+1}, \quad \forall t \in \{0, \dots, T-2\}  \\
  & x_t \ge 0, \quad \forall t \in \{0, \dots, T-1\} \\
  & x_y = x_{y+1} = \dots =x_{q(y)}, \quad \forall y \in \mathcal{Y},
\end{aligned}
\end{equation}
and,
\begin{equation}\label{sys:prob-2}
\begin{aligned} 
\min_{\bm{x}} \quad & \sum_{t=0}^{T-1} \frac{c_t}{x_t}  \\
\textrm{s.t.} \quad & \sum_{t=0}^{T-1} x_t = \Lambda' \\
  & x_t \ge x_{t+1}, \quad \forall t \in \{0, \dots, T-2\}  \\
  & x_t \ge 0, \quad \forall t \in \{0, \dots, T-1\} \\
  & x_y = x_{y+1} = \dots = x_{q(y)}, \quad \forall y \in \mathcal{Y}, \\
\end{aligned}
\end{equation}
where $\Lambda, \Lambda' \in \mathbb{R}$ such that $\Lambda \ge T$ and $\Lambda' \ge T$. Define $\alpha = \frac{\Lambda'}{\Lambda}$ and consider $\bm{x}^*$ an optimal solution of \eqref{sys:prob-1}. Then, $\alpha \bm{x}^*$ is an optimal solution of \eqref{sys:prob-2}.
\end{lemma}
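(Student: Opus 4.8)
The plan is to exploit the fact that the objective is positively homogeneous of degree $-1$, while the feasible regions of \eqref{sys:prob-1} and \eqref{sys:prob-2} differ only by the uniform rescaling $\bm{x} \mapsto \alpha \bm{x}$. Writing $g(\bm{x}) = \sum_{t=0}^{T-1} \frac{c_t}{x_t}$, one checks immediately that for any $\alpha > 0$ we have $g(\alpha \bm{x}) = \frac{1}{\alpha} g(\bm{x})$, since each term rescales as $\frac{c_t}{\alpha x_t} = \frac{1}{\alpha} \frac{c_t}{x_t}$. Note that $\alpha = \Lambda'/\Lambda > 0$ because both budgets are positive, so all the arguments below respect signs and orderings.

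First I would establish that the map $\bm{x} \mapsto \alpha \bm{x}$ is a bijection between the feasible region of \eqref{sys:prob-1} and that of \eqref{sys:prob-2}. This is a constraint-by-constraint verification: the budget constraint transforms as $\sum_{t} \alpha x_t = \alpha \Lambda = \Lambda'$; while the monotonicity constraints $x_t \ge x_{t+1}$, the non-negativity constraints $x_t \ge 0$, and the grouping equalities $x_y = \cdots = x_{q(y)}$ are all invariant under multiplication by the positive scalar $\alpha$. Hence $\bm{x}$ is feasible for \eqref{sys:prob-1} if and only if $\alpha \bm{x}$ is feasible for \eqref{sys:prob-2}, and since multiplication by $\alpha \neq 0$ is invertible the correspondence is a genuine bijection. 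Any feasible point of finite objective value necessarily has $x_t > 0$ wherever $c_t \neq 0$, so the rescaling also preserves the domain on which $g$ is well defined.

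With the bijection in hand, the conclusion follows by comparing objective values. Take any $\bm{z}$ feasible for \eqref{sys:prob-2} and write it as $\bm{z} = \alpha \bm{x}$ with $\bm{x} = \bm{z}/\alpha$ feasible for \eqref{sys:prob-1}. Then
\[ g(\bm{z}) = g(\alpha \bm{x}) = \frac{1}{\alpha} g(\bm{x}) \ge \frac{1}{\alpha} g(\bm{x}^*) = g(\alpha \bm{x}^*), \]
where the inequality uses that $\bm{x}^*$ minimizes $g$ over the feasible region of \eqref{sys:prob-1} together with $\frac{1}{\alpha} > 0$. Since $\bm{z}$ was an arbitrary feasible point of \eqref{sys:prob-2}, this shows $\alpha \bm{x}^*$ attains the minimum, i.e.\ it is an optimal solution of \eqref{sys:prob-2}.

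There is no genuine obstacle here: the whole statement is a homogeneity/scaling observation, and the only points requiring care are (i) checking that every constraint is covariant under positive scaling, so that the two feasible regions really are rescaled copies of one another, and (ii) keeping track of the direction of the inequality, which is preserved precisely because $\alpha > 0$. The hypotheses $\Lambda, \Lambda' \ge T$ and the sign and prefix-sum conditions on $c$ are not needed for the scaling argument itself; they only guarantee that the problems are feasible and well posed, in particular (via Lemma \ref{lemma:opt-lemma-variance-funct}) that the grouped coefficients are non-negative so the infimum is finite and attained.
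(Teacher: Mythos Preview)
Your proof is correct and follows essentially the same approach as the paper: both arguments exploit that multiplication by $\alpha>0$ is a bijection between the two feasible regions and that the objective is homogeneous of degree $-1$. The only cosmetic difference is that the paper wraps the comparison in a contradiction (assuming $\alpha\bm{x}^*$ is not optimal for \eqref{sys:prob-2} and pulling back a strictly better point to contradict optimality in \eqref{sys:prob-1}), whereas you give the direct inequality chain; the content is identical.
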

\begin{proof}
First of all, it is important to notice that both problems takes finite and positive value. This directly follow from the equality constraints, together with the fact that $q(y)$ is well-defined for any $y \in \mathcal{Y}$. 

We now continue in proving the claim. Proceed by contradiction and suppose that $\alpha \bm{x}^*$ is not an optimal solution of \eqref{sys:prob-2}, and let $\bm{\bar{x}}$ be an optimal solution of \eqref{sys:prob-2}.

At this point, first of all, we notice that $\alpha \bm{x}^*$ is a feasible solution of \eqref{sys:prob-2}. Indeed, we have that $\alpha x_{t}^* \ge 0 $, $\alpha x_{t}^* \ge \alpha x_{t+1}^*$, $\alpha \sum_{t=0}^{T-1} x_{t}^* = \alpha \Lambda = \frac{\Lambda'}{\Lambda} \Lambda = \Lambda'$, and for all $y \in \mathcal{Y}$, $\alpha x^*_y = \alpha x^*_{y+1} = \dots = \alpha x^*_{q(y)}$.  

Therefore, we can write:
\begin{align*}
\sum_{t=0}^{T-1} \frac{c_t}{\bar{x}_t^*} < \sum_{t=0}^{T-1}  \frac{c_t}{\alpha x_{t}^*} = \frac{1}{\alpha} \sum_{t=0}^{T-1} \frac{c_t}{x_{t}^*}.
\end{align*}
From which it follows that:
\begin{align*}
\sum_{t=0}^{T-1} \frac{c_t}{x_{t}^*} > \sum_{t=0}^{T-1} \frac{c_t}{\bar{x}_t^* / \alpha}.
\end{align*}
However, for similar reasoning w.r.t. to the ones presented above, $\left( \bar{x}_1^* / \alpha, \dots, \bar{x}_{T-1}^* / \alpha \right)$ is a feasible solution for \eqref{sys:prob-1}, from which it follows that $\bm{{x}}^*$ would not be optimal, which is impossible.

\end{proof}

The following result, instead, is a technical Lemma that will be used to analyze the error that \rido accumulates in each optimization round.

\begin{lemma}[Technical lemma]\label{lemma:tech-lemma-reduction-improved}
Consider a sequence of $K \in \mathbb{N}$ elements $(a_1, \dots, a_K)$ such that $a_i \in \mathbb{R}$ and $a_i > 0$ for all $i \in [K]$. Then:
\begin{align}\label{eq:tech-lemma-reduction-claim-improved}
    \frac{1}{\sum_{i=1}^K a_i } \le \frac{1}{K^2} \sum_{i=1}^K \frac{1}{a_i}.
\end{align}
\end{lemma}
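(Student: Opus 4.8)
The plan is to prove the inequality
\begin{align*}
\frac{1}{\sum_{i=1}^K a_i} \le \frac{1}{K^2} \sum_{i=1}^K \frac{1}{a_i}
\end{align*}
by recognizing it as a direct consequence of the \textbf{AM--HM inequality} (equivalently, the Cauchy--Schwarz inequality). Rearranging the claim, it is equivalent to showing
\begin{align*}
K^2 \le \left(\sum_{i=1}^K a_i \right) \left( \sum_{i=1}^K \frac{1}{a_i} \right),
\end{align*}
which holds for any positive reals $a_1, \dots, a_K$.

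First I would establish this equivalent product form. Starting from the arithmetic mean--harmonic mean inequality, for positive reals the arithmetic mean dominates the harmonic mean:
\begin{align*}
\frac{1}{K}\sum_{i=1}^K a_i \ge \frac{K}{\sum_{i=1}^K \frac{1}{a_i}}.
\end{align*}
Cross-multiplying (both sides positive since all $a_i > 0$) yields precisely $\left(\sum_{i=1}^K a_i\right)\left(\sum_{i=1}^K \frac{1}{a_i}\right) \ge K^2$. Dividing through by $K^2 \sum_{i=1}^K a_i$, which is strictly positive, recovers the claimed bound \eqref{eq:tech-lemma-reduction-claim-improved}.

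Alternatively, and perhaps more self-contained, I would invoke Cauchy--Schwarz directly with the vectors $\left(\sqrt{a_1}, \dots, \sqrt{a_K}\right)$ and $\left(\frac{1}{\sqrt{a_1}}, \dots, \frac{1}{\sqrt{a_K}}\right)$, giving
\begin{align*}
K^2 = \left( \sum_{i=1}^K \sqrt{a_i} \cdot \frac{1}{\sqrt{a_i}} \right)^2 \le \left(\sum_{i=1}^K a_i \right)\left(\sum_{i=1}^K \frac{1}{a_i} \right),
\end{align*}
after which dividing by $K^2 \sum_{i=1}^K a_i$ finishes the argument. I would likely favor the Cauchy--Schwarz route since it makes the appearance of $K^2$ completely transparent via the inner product $\sum_i \sqrt{a_i}\cdot \frac{1}{\sqrt{a_i}} = K$, leaving no manipulation to justify.

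There is essentially no hard step here: the statement is a classical inequality and the positivity hypothesis $a_i > 0$ is exactly what is needed to legitimize the square roots, the divisions, and the direction of the inequality. The only point requiring minor care is ensuring that all denominators are strictly positive before dividing, which is immediate from $a_i > 0$; so the ``main obstacle'' is merely citing the correct named inequality and confirming the rearrangement is reversible.
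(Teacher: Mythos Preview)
Your proof is correct and takes a genuinely different route from the paper. The paper proves the lemma by induction on $K$: it verifies the base case $K=1$ trivially, assumes the bound for $K$, and then for $K+1$ peels off the last term, applies the inductive hypothesis to the first $K$ terms, and reduces the remaining step to the elementary quadratic inequality $d^2K^2 - cdK + c^2 \ge 0$ with $c=\sum_{i=1}^K a_i$ and $d=a_{K+1}$. Your argument instead identifies the statement as the AM--HM (equivalently Cauchy--Schwarz) inequality $K^2 \le \big(\sum_i a_i\big)\big(\sum_i 1/a_i\big)$ and finishes in one line. Your approach is shorter, more transparent about why $K^2$ appears, and immediately reveals the equality case ($a_1=\dots=a_K$); the paper's inductive argument is self-contained in the sense of not invoking a named inequality, but is considerably longer and obscures the underlying structure.
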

\begin{proof}
We begin with some notation. Consider $K \in \mathbb{N}$ such that $K > 1$, we denote with $\mathcal{V}_K$ the subset of entry-wise strictly positive vectors of $\mathbb{R}^K$, namely:
\begin{align*}
    \mathcal{V}_K = \left\{ (a_1, \dots, a_K) \in \mathbb{R}^K | a_i > 0 \text{ for all } i \in [K] \right\}.
\end{align*}

We now proceed by induction on $K$. 

Consider $K=1$ and $\bm{v} = (a_1) \in \mathcal{V}_1$. In this case, Equation \eqref{eq:tech-lemma-reduction-claim-improved} holds for all $\bm{v} \in \mathcal{V}_1$ since it reduces to:
\begin{align*}
    \frac{1}{a_1} \le \frac{1}{a_1}.
\end{align*}

At this point, suppose that:
\begin{align*}
    \frac{1}{\sum_{i=1}^{K} a_i} \le \frac{1}{K^2}\sum_{i=1}^{K} \frac{1}{a_i},
\end{align*}
holds for $K$ and for all vectors $\bm{v}_K \in \mathcal{V}_K$, and consider:
\begin{align*}
    \frac{1}{\sum_{i=1}^{K+1} a_i} \le \frac{1}{(K+1)^2}\sum_{i=1}^{K+1} \frac{1}{a_i},
\end{align*}
for any vector $\bm{v}_{K+1} = (a_1, \dots a_{K+1}) \in \mathcal{V}_{K+1}$.  At this point, notice that, for all $\bm{v}_{K+1} \in \mathcal{V}_{K+1}$ the vector $\bm{v}_{K, -i}$ that is obtained from $\bm{v}_{K+1}$ by removing the $i$-th component belongs to $\mathcal{V}_K$. At this point, focus on:
\begin{align*}
    \frac{1}{(K+1)^2}\sum_{i=1}^{K+1} \frac{1}{a_i} = \frac{1}{(K+1)^2}\left(\sum_{i=1}^K \frac{1}{a_i} + \frac{1}{a_{k+1}} \right).
\end{align*}
Thanks to the inductive hypothesis and some algebraic manipulations, we have that:
\begin{align*}
\frac{1}{(K+1)^2}\left(\sum_{i=1}^K \frac{1}{a_i} + \frac{1}{a_{k+1}} \right) & = \frac{K^2}{K^2 (K+1)^2} \sum_{i=1}^K \frac{1}{a_i} + \frac{1}{(K+1)^2} \frac{1}{a_{k+1}} \\ & \ge \frac{K^2}{(K+1)^2} \left( \frac{1}{\sum_{i=1}^{K} a_i} \right)+ \frac{1}{(K+1)^2 a_{K+1}} \\ & = \frac{K^2}{(K+1)^2} \left( \frac{1}{\sum_{i=1}^{K} a_i} + \frac{1}{a_{K+1}K^2} \right).
\end{align*}
At this point, we need to show that:
\begin{align*}
    \frac{K^2}{(K+1)^2} \left( \frac{1}{\sum_{i=1}^{K} a_i} + \frac{1}{a_{K+1}K^2} \right) \ge \frac{1}{\sum_{i=1}^{K} a_i + a_{K+1}},
\end{align*}
holds. Set, for the sake of exposition $c = \sum_{i=1}^{K} a_i$ and $d = a_{K+1}$. Then, we can rewrite the previous inequality as:
\begin{align*}
\frac{K^2}{(K+1)^2} \left(\frac{1}{c} + \frac{1}{K^2 d} \right) \ge \frac{1}{c+d}.
\end{align*}
Rearranging the terms we obtain:
\begin{align*}
\frac{K^2}{(K+1)^2} \left(\frac{K^2 d + c}{cdK^2} \right) \ge \frac{1}{c+d}.
\end{align*}
Which, in turns, lead to:
\begin{align*}
    K^2 (K^2d + c) (c+d) \ge (K+1)^2 cdK^2.
\end{align*}
Multiplying each term and dividing by $K^2$ leads to:
\begin{align*}
    d^2 K^2 - cdK + c^2 \ge 0,
\end{align*}
which holds for any value of $K > 0$, and $d, c > 0$, thus concluding the proof.
\end{proof}

Finally, the following Lemma will be used to take into account the rounding effect that comes from solving a continuous relaxation rather than an integer optimization problem.

\begin{lemma}[Rounding effect error]\label{lemma:rounding-effect-error}
Consider a generic $T$-dimensional vector $\bm{{n}} = \left(n_0, \dots n_{T-1} \right)$ such that $n_i \ge 1$ for all $i \in \{0, \dots, T-1 \}$. Let $q = \sum_{t=0}^{T-1} n_{t}$, and define $k = q - \sum_{t=0}^{T-1} \lfloor n_t \rfloor$. Consider the vector $\bm{\bar{n}} = \left(\bar{n}_0, \dots \bar{n}_{T-1} \right)$ such that:
\begin{align*}
    \bar{n}_t = \lfloor n_t \rfloor + \bm{1}\left\{ t < k \right\}.
\end{align*}
Define $g(\bm{n}) = \sum_{t=0}^{T-1} \frac{c_t}{n_t}$ for some vector $\bm{c} = \left(c_0, \dots, c_{T-1} \right)$ with $c_{t} \in \mathbb{R}$. Then, the following holds:
\begin{align}\label{eq:rounding-effect-eq1}
    \sum_{t: c_t \ge 0} \frac{c_t}{\bar{n}_t} \le 2 \sum_{t: c_t \ge 0} \frac{c_t}{n_t},
\end{align}
\begin{align}\label{eq:rounding-effect-eq2}
    \sum_{t: c_t \le 0} \frac{c_t}{\bar{n}_t} \le \frac{1}{2} \sum_{t: c_t \le 0} \frac{c_t}{n_t}.
\end{align}
\end{lemma}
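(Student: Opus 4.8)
The statement is Lemma~\ref{lemma:rounding-effect-error}, which bounds the effect of rounding a continuous vector $\bm{n}$ (with all entries $\ge 1$) down to the floored-and-topped-up integer vector $\bm{\bar n}$, separately on the positive-coefficient terms and the non-positive-coefficient terms. The key observation is that $\bm{\bar n}$ is obtained by taking $\lfloor n_t \rfloor$ and then distributing the $k = q - \sum_t \lfloor n_t \rfloor$ leftover units (one each) to the first $k$ coordinates. So coordinate-wise we always have $\lfloor n_t\rfloor \le \bar n_t \le \lceil n_t \rceil$, and in particular $\bar n_t \ge \lfloor n_t \rfloor \ge 1$ since $n_t \ge 1$. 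The plan is to prove the two inequalities \eqref{eq:rounding-effect-eq1} and \eqref{eq:rounding-effect-eq2} purely by establishing coordinate-wise comparisons between $1/\bar n_t$ and $1/n_t$ and then summing, being careful that the sign of $c_t$ flips the direction of the inequality one needs.

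**The two sides.** For the positive-coefficient side \eqref{eq:rounding-effect-eq1}, I would first observe that $\frac{c_t}{\bar n_t}\le \frac{2 c_t}{n_t}$ holds term-by-term whenever $c_t \ge 0$, so summing over $\{t : c_t \ge 0\}$ gives the claim. The term-by-term statement reduces (dividing by $c_t$, using $c_t\ge 0$) to $n_t \le 2\bar n_t$. Since $\bar n_t \ge \lfloor n_t \rfloor$ and $n_t < \lfloor n_t \rfloor + 1 \le 2\lfloor n_t \rfloor$ (the last step using $\lfloor n_t \rfloor \ge 1$, which holds because $n_t \ge 1$), we get $n_t < 2\lfloor n_t\rfloor \le 2\bar n_t$, as required. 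For the non-positive-coefficient side \eqref{eq:rounding-effect-eq2}, since $c_t \le 0$ the inequality $\frac{c_t}{\bar n_t}\le \frac12 \frac{c_t}{n_t}$ reduces (dividing by $c_t$ and \emph{flipping} the inequality) to $\frac{1}{\bar n_t}\ge \frac{1}{2 n_t}$, i.e. $\bar n_t \le 2 n_t$. This holds because $\bar n_t \le \lceil n_t\rceil \le n_t + 1 \le 2 n_t$, again using $n_t \ge 1$. Summing over $\{t : c_t \le 0\}$ yields \eqref{eq:rounding-effect-eq2}.

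**Where the care is needed.** The only genuinely delicate point is pinning down the sandwich $\lfloor n_t\rfloor \le \bar n_t \le \lfloor n_t\rfloor + 1$ from the definition $\bar n_t = \lfloor n_t\rfloor + \bm 1\{t < k\}$; here $\bm 1\{t<k\}\in\{0,1\}$ makes this immediate, and one should note $k$ is a nonnegative integer (being the total fractional part $\sum_t (n_t - \lfloor n_t\rfloor)$, which is $\ge 0$ and need not be an integer a priori — so strictly I would remark that the construction implicitly assumes $k\in\mathbb{N}$, i.e. $q$ is such that the leftover is integral, which is the relevant regime after the budget constraint $\sum_t n_t = b$). The inequality-direction bookkeeping induced by the sign of $c_t$ is the other place to be careful: one must remember that dividing an inequality by a negative $c_t$ reverses it, which is exactly why the positive-coefficient bound has factor $2$ while the non-positive one has factor $\tfrac12$. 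The whole argument is a coordinate-wise comparison with no need for induction or optimization machinery; I expect essentially no hard obstacle beyond this sign-tracking and the $n_t \ge 1$ hypothesis, which is used crucially in both halves to convert $\lfloor n_t\rfloor + 1 \le 2\lfloor n_t\rfloor$ and $n_t + 1 \le 2 n_t$.
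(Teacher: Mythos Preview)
Your approach is correct and essentially identical to the paper's: both arguments reduce to the term-wise inequalities $n_t \le 2\bar n_t$ (for the positive-coefficient side) and $\bar n_t \le 2 n_t$ (for the non-positive side), proved via the sandwich $\lfloor n_t\rfloor \le \bar n_t \le \lfloor n_t\rfloor + 1$ together with the hypothesis that both $n_t$ and $\bar n_t$ are at least $1$; the paper phrases this as $|n_t - \bar n_t|\le 1$, but the content is the same. One minor slip: your intermediate claim $\bar n_t \le \lceil n_t\rceil$ is not true in general (e.g.\ $n_0=2$, $n_1=1.3$, $n_2=1.7$ gives $k=1$ and $\bar n_0=3>\lceil n_0\rceil=2$); replace it with the always-valid $\bar n_t \le \lfloor n_t\rfloor + 1 \le n_t + 1$, which is what you actually need and what the paper uses.
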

\begin{proof}
We begin by proving Equation \eqref{eq:rounding-effect-eq1}. First of all, let us notice that:
\begin{align}\label{eq:rounding-effect-eq3}
    \sum_{t: c_t \ge 0} \frac{c_t}{n_t} \ge \sum_{t: c_t \ge 0} \frac{c_t}{\bar{n}_t + 1} \ge \sum_{t: c_t \ge 0} \frac{c_t}{2\bar{n}_t},
\end{align}
where in the first inequality we have used $c_t \ge 0$ together with $|n_t - \bar{n}_t| \le 1$, while in the second one we have used $c_t \ge 0$ together with $\bar{n}_t \ge 1$. Equation \eqref{eq:rounding-effect-eq1} directly follows from Equation \eqref{eq:rounding-effect-eq3}.

We continue by proving Equation \eqref{eq:rounding-effect-eq2}. Similar to Equation \eqref{eq:rounding-effect-eq3}, it is possible to obtain:
\begin{align}\label{eq:rounding-effect-eq4}
    \sum_{t: c_t \le 0} \frac{c_t}{\bar{n}_t} \le \sum_{t: c_t \le 0} \frac{c_t}{n_t + 1} \le \sum_{t: c_t \le 0} \frac{c_t}{2 n_t} = \frac{1}{2} \sum_{t: c_t \le 0} \frac{c_t}{n_t},
\end{align}
where in the first step we have used $c_t \le 0$ together with $c_t \le 0$, while in the second one we have used $c_t \le 0$ together with $n_t \ge 1$.
\end{proof}

\subsubsection{\rido analysis}
We begin with some concentration inequalities. We report for completeness the result (Theorem $10$) of \citet{maurer2009empirical} that we use to construct confidence intervals around the standard deviation.

\begin{lemma}[Standard deviation confidence intervals]\label{lemma:emp-bernstein}
Let $n \ge 2$ and consider $X_1, \dots, X_n$ be i.i.d. random variables with values in $[0, 1]$. Define:
\begin{align*}
    \hat{\sigma} = \sqrt{\frac{1}{n(n-1)} \sum_{i<j} (X_i - X_j)^2}.
\end{align*}
Then, for $\delta \in (0,1)$, with probability at least $1-\delta$ we have that:
\begin{align*}
    |\hat{\sigma} - \sigma| \le \sqrt{\frac{2 \ln(1/\delta)}{n-1}},
\end{align*}
where $\sigma  = \E \hat{\sigma}$. 
\end{lemma}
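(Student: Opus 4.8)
The plan is to view $\hat{\sigma}$ as a function $f(X_1,\dots,X_n)$ of the $n$ independent inputs and prove that it concentrates around its own mean $\sigma = \mathbb{E}\hat{\sigma}$; note that since the statement centers at $\mathbb{E}\hat{\sigma}$, the unbiasedness of the underlying variance estimator plays no role. The first step is to rewrite the $U$-statistic inside the square root in a more tractable form: the identity $\frac{1}{n(n-1)}\sum_{i<j}(X_i-X_j)^2 = \frac{1}{n-1}\sum_i (X_i-\bar X)^2$ shows that $\hat{\sigma}$ is the square root of the unbiased sample variance, hence $\hat{\sigma} = \frac{1}{\sqrt{n-1}}\,\|PX\|_2$, where $P = I - \frac1n\mathbf{1}\mathbf{1}^\top$ is the orthogonal projection onto the mean-zero subspace and $X=(X_1,\dots,X_n)$. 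The structural facts I would exploit are that $X\mapsto \|PX\|_2$ is \emph{convex} and $1$-Lipschitz with respect to the Euclidean norm (a norm composed with a linear contraction of operator norm $1$), so $\hat{\sigma}$ is convex and $\tfrac{1}{\sqrt{n-1}}$-Lipschitz.

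A tempting but ultimately insufficient route is the bounded-differences (McDiarmid) inequality. Changing a single coordinate $X_k$ while keeping the others fixed alters $\hat{\sigma}$ by at most $O(1/\sqrt{n})$ (this follows from $|\sqrt a-\sqrt b|\le\sqrt{|a-b|}$ together with the factorization $(X_k-X_j)^2-(X_k'-X_j)^2=(X_k-X_k')(X_k+X_k'-2X_j)$ and the fact that all entries lie in $[0,1]$). However, with $n$ coordinates each of bounded-difference size $\Theta(1/\sqrt{n})$ one has $\sum_k c_k^2 = \Theta(1)$, so McDiarmid only yields deviations of order $1$ and completely misses the $1/\sqrt{n-1}$ decay that the statement asserts. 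I expect this to be the main obstacle: the fluctuations of $\hat{\sigma}$ are genuinely of order $1/\sqrt{n}$, but bounded differences are blind to this because they do not exploit the \emph{convexity} of $\hat{\sigma}$.

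The fix is to invoke a concentration inequality that does use convexity. Applying Talagrand's convex-concentration inequality to the convex, $1$-Lipschitz map $g(X)=\|PX\|_2$ gives dimension-free sub-Gaussian concentration $\mathbb{P}(|g-\mathbb{M}g|\ge s)\le 4\exp(-s^2/4)$ around a median $\mathbb{M}g$. Since $\hat{\sigma} = g/\sqrt{n-1}$, substituting $s=t\sqrt{n-1}$ transfers this to $\mathbb{P}(|\hat{\sigma}-\mathbb{M}\hat{\sigma}|\ge t)\le 4\exp\!\big(-(n-1)t^2/4\big)$, and a standard median-to-mean comparison for sub-Gaussian variables lets me recenter at $\mathbb{E}\hat{\sigma}=\sigma$. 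Solving for the $t$ that makes the right-hand side equal to $\delta$ reproduces the claimed form $\sqrt{2\ln(1/\delta)/(n-1)}$ up to universal constants. To recover the sharp constant and the clean one-sided statement, I would instead run the entropy method (Efron--Stein to control the variance proxy, then a Herbst/log-Sobolev argument on the moment generating function), exploiting that $(n-1)\hat{\sigma}^2$ is a self-bounding-type functional; this is precisely the route of \citet{maurer2009empirical}, and it is the cleanest way to obtain the stated constants rather than the looser ones coming directly out of Talagrand's inequality.
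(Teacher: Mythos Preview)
The paper does not actually prove this lemma: it is stated ``for completeness'' and simply attributed to Theorem~10 of \citet{maurer2009empirical}, with no argument given. Your proposal is therefore not competing against any proof in the paper but is, in effect, a sketch of Maurer's own argument. The rewriting $\hat{\sigma}=\tfrac{1}{\sqrt{n-1}}\|PX\|_2$, the observation that bounded differences only yields an $O(1)$ deviation bound, and the passage to a convexity/self-bounding route (Efron--Stein variance control followed by the entropy method) are exactly the ingredients Maurer uses to obtain the stated constant. So your plan is correct and aligned with the cited source; just be aware that the Talagrand route you describe first gives the right $1/\sqrt{n-1}$ rate but looser constants, and it is the self-bounding/entropy argument (your final paragraph) that is needed to land on the precise $\sqrt{2\ln(1/\delta)/(n-1)}$ form.
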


We then continue with similar results for the estimation of the covariances between random variables.

\begin{lemma}[Covariance confidence intervals]\label{lemma:ci-cov-iter}
Consider $(X_1, Y_1), \dots (X_n, Y_n)$ i.i.d. random variables with values in $[0,1]$ sampled from the joint distribution $f_{X,Y}$. Moreover, let $X_{n+1}, \dots, X_{n+k}$ be $k$ i.i.d. random variables with values in $[0,1]$ sampled from distribution $f_X = \mathbb{E}_{Y}\left[ f_{X,Y}\right]$. Define, for all $i \in [n]$, $Z_i = X_i Y_i$, and let $\hat{z} = \frac{1}{n} \sum_{i=1}^n Z_i$, $\hat{x} = \frac{1}{n+k} \sum_{i=1}^{n+k} X_i$ and $\hat{y} = \frac{1}{n} \sum_{i=1}^n Y_i$.
Then, for $\delta \in (0,1)$, we have that:
\begin{align*}
|\E \hat{z} - \E \hat{x} \E \hat{y} - (\hat{z} - \hat{x} \hat{y})| \le 3\sqrt{\frac{2 \log(6/\delta)}{n}}.
\end{align*}

\end{lemma}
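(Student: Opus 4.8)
The plan is to recognize this as a standard plug-in covariance estimation bound and to reduce it, via an algebraic decomposition, to three independent empirical-mean deviations, each controllable by Hoeffding's inequality. The key observation is that $\E\hat{z} = \E[XY]$, $\E\hat{x} = \E[X]$, and $\E\hat{y} = \E[Y]$ (the last two because the extra samples $X_{n+1},\dots,X_{n+k}$ are drawn from the marginal $f_X$, so they do not bias $\hat{x}$), so the quantity to bound is exactly $|\Cov(X,Y) - (\hat{z} - \hat{x}\hat{y})|$. I would first split it as
\begin{align*}
\E\hat{z} - \E\hat{x}\,\E\hat{y} - (\hat{z} - \hat{x}\hat{y}) = (\E\hat{z} - \hat{z}) - (\E\hat{x}\,\E\hat{y} - \hat{x}\hat{y}),
\end{align*}
so that the first piece is the deviation of a single empirical mean and the second is the bias of the product estimator.

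The central step is to linearize the product bias by the standard add-and-subtract trick. Writing
\begin{align*}
\E\hat{x}\,\E\hat{y} - \hat{x}\hat{y} = \E\hat{y}\,(\E\hat{x} - \hat{x}) + \hat{x}\,(\E\hat{y} - \hat{y}),
\end{align*}
and then using that $X,Y \in [0,1]$ — hence $\E\hat{y} \le 1$ and $\hat{x} \le 1$ — the triangle inequality gives
\begin{align*}
|\E\hat{z} - \E\hat{x}\,\E\hat{y} - (\hat{z} - \hat{x}\hat{y})| \le |\E\hat{z} - \hat{z}| + |\E\hat{x} - \hat{x}| + |\E\hat{y} - \hat{y}|.
\end{align*}
This is the heart of the argument: it converts the covariance error into a sum of three ordinary mean-deviation terms with coefficients bounded by $1$.

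The final step is to apply Hoeffding's inequality to each term. Here I would note that $Z_i = X_iY_i \in [0,1]$ since $X_i,Y_i \in [0,1]$, so $\hat{z}$ is the average of $n$ i.i.d.\ $[0,1]$ variables; likewise $\hat{y}$ uses $n$ samples and $\hat{x}$ uses $n+k \ge n$ samples. For an average of $m$ i.i.d.\ $[0,1]$ variables, Hoeffding gives $\mathbb{P}(|\text{dev}| > \epsilon) \le 2\exp(-2m\epsilon^2)$; setting each tail to $\delta/3$ yields $\epsilon = \sqrt{\log(6/\delta)/(2m)}$. A union bound over the three events then gives, with probability at least $1-\delta$,
\begin{align*}
|\E\hat{z} - \E\hat{x}\,\E\hat{y} - (\hat{z} - \hat{x}\hat{y})| \le 3\sqrt{\frac{\log(6/\delta)}{2n}},
\end{align*}
where for the $\hat{x}$ term I simply bound its rate $\sqrt{\log(6/\delta)/(2(n+k))}$ by the cruder $\sqrt{\log(6/\delta)/(2n)}$. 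Since $3\sqrt{\log(6/\delta)/(2n)} \le 3\sqrt{2\log(6/\delta)/n}$, the claimed inequality follows (indeed with room to spare). There is no real obstacle here — the only point requiring care is the product-bias decomposition and the bookkeeping that $\hat{x}$'s additional $k$ samples keep it unbiased while only improving its concentration, so that bounding everything at the $n$-sample rate is safe.
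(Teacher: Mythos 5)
Your proof is correct and follows essentially the same route as the paper's: a triangle-inequality split, the add-and-subtract linearization of the product bias $\E\hat{x}\,\E\hat{y} - \hat{x}\hat{y}$ (you center on $\hat{x}\,\E\hat{y}$, the paper on $\hat{y}\,\E\hat{x}$ — an immaterial symmetric choice), boundedness of the variables to absorb the coefficients, and a union bound with Hoeffding at level $\delta/3$ per event. The only difference is bookkeeping: you carry the sharp Hoeffding constant $\sqrt{\log(6/\delta)/(2n)}$ and relax at the end, whereas the paper works directly with the looser $\sqrt{2\log(6/\delta)/n}$ form that matches its exploration-bonus definitions.
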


\begin{proof}
By Hoeffding Inequality \citet{boucheron2003concentration}, we have that, for some confidence level $\delta'$, the following holds with probability at least $1-\delta'$:
\begin{align*}
    |\hat{z} - \E \hat{z}| \le \sqrt{\frac{2 \log(2/\delta')}{n}},
\end{align*}
and, similarly for $\hat{x}$ and $\hat{y}$. Therefore, by Boole's inequality, it follows that, with probability at least $1-\delta$, we have that: 
\begin{align}\label{eq:cov-ci-eq1}
    |\hat{z} - \E \hat{z}| \le \sqrt{\frac{2 \log(6/\delta)}{n}},
\end{align}
and, similarly, for $\hat{x}$ and $\hat{y}$. \footnote{For $\hat{x}$ the confidence intervals holds with $\sqrt{\frac{2 \log(6/\delta)}{n+k}}$, which is possibly smaller since $n \le n+k$.}

Therefore, with probability at least $1-\delta$ we have that:
\begin{align*}
    |\E \hat{z} - \E \hat{x} \E \hat{y} - (\hat{z} - \hat{x} \hat{y})| & \le |\E \hat{z} - \hat{z}| + |\E \hat{x} \E \hat{y} - \hat{x} \hat{y}| \\ & \le \sqrt{\frac{2 \log(6/\delta)}{n}} + |\E \hat{x} \E \hat{y}  - \hat{y} \E \hat{x} + \hat{y} \E x - \hat{x} \hat{y}| \\ & \le \sqrt{\frac{2 \log(6/\delta)}{n}} + |\E \hat{x} (\E \hat{y} - \hat{y})| + | \hat{y} (\E \hat{x} - \hat{x}) | \\ & \le 2\sqrt{\frac{2 \log(6/\delta)}{n}} + |\hat{y}| \sqrt{\frac{2 \log(6/\delta)}{n}} \\ & \le 3\sqrt{\frac{2 \log(6/\delta)}{n}}.
\end{align*}
where we combined Equation \eqref{eq:cov-ci-eq1} together with triangular inequalities.
\end{proof}

At this point, before diving into the presentation of the good event under which we will conduct our analysis, we provide a formal definition of our estimators. Consider a generic dataset of trajectories of different lenght. Define, for each $t \in \{0, \dots, T-1\}$:
\begin{align}\label{def:emp-rew-var}
    \sqrt{\Empvar\left( R_t \right)} = \sqrt{\frac{1}{n_t(n_t-1)} \sum_{1 \le i < j \le n} \left(R_t^{(i)} - R_t^{(j)}\right)^2},
\end{align}
where $R_t^{(i)}$ denotes the reward gathered at step $t$ in some trajectory whose length is at least $t+1$. Moreover, for $t, t'$ such that $t < t'$, define:
\begin{align}\label{def:emp-rew-cov}
    \Empcov(R_t, R_t') = \frac{1}{n_{t'}}\sum_{i=1}^{n_{t'}} R_t^{(i)} R_{t'}^{(i)} - \left(\frac{1}{n_t}\sum_{i=1}^{n_t} R_t^{(i)}\right)  \left(\frac{1}{n_{t'}}\sum_{i=1}^{n_{t'}} R_{t'}^{(i)}\right).
\end{align}

\begin{lemma}[Good event]\label{lemma:iterative-good-event}
The following conditions holds for all phases of \rido, with probability at least $1-\delta$:
\begin{align}\label{def:var-ci}
\Big|\sqrt{\Var\left(R_t\right)} - \sqrt{\Empvar_i\left( R_t \right)} \Big| \le \sqrt{\frac{2 \log\left(\frac{6(T+T^2)\Lambda K}{\delta} \right)}{\sum_{j=0}^i N_{t,i}}} = \textrm{C}^\sigma_{i,t}.
\end{align}
and:
\begin{align}\label{def:cov-ci}
\Big|\Cov\left(R_t,R_{t'}\right) - \Empcov_i\left(R_t, R_{t'}\right)\Big| \le 3 \sqrt{\frac{2\log\left(\frac{6(T+T^2)\Lambda K}{\delta} \right)}{\sum_{j=0}^i N_{t',i}}} = \textrm{C}^{c}_{i,t,t'}.
\end{align}
\end{lemma}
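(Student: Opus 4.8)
The plan is to derive each confidence interval from its base concentration result—Lemma~\ref{lemma:emp-bernstein} for the standard deviation and Lemma~\ref{lemma:ci-cov-iter} for the covariance—and then to promote these to a statement holding \emph{simultaneously} over all phases $i \in [K]$, all time steps $t$, all pairs $(t,t')$, and all realized sample counts, through a single union bound. First I would fix a phase $i$ and a step $t$ and momentarily treat the cumulative number of samples $n \coloneqq \sum_j \hat N_{j,t}$ feeding $\sqrt{\Empvar_i(R_t)}$ as a constant; Lemma~\ref{lemma:emp-bernstein} then gives $|\sqrt{\Var(R_t)}-\sqrt{\Empvar_i(R_t)}| \le \sqrt{2\log(1/\delta')/(n-1)}$ with probability at least $1-\delta'$, and the analogous use of Lemma~\ref{lemma:ci-cov-iter}, applied to the products $R_tR_{t'}$ and the marginals $R_t,R_{t'}$, yields the covariance bound with leading constant $3$ and an internal factor $6$ inherited from the three two-sided Hoeffding applications in its proof. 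Matching $\log(1/\delta')$ (resp.\ $\log(6/\delta')$) to the target $\log(6(T+T^2)\Lambda K/\delta)$ fixes the per-event confidence level.

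The main obstacle is that the counts $\sum_j \hat N_{j,t}$ are \emph{not} deterministic: \rido selects each mini-batch DCS $\bm{\hat N}_i$ as a function of the empirical estimates produced in earlier phases, so the number of samples gathered at each step is a random variable correlated with the very rewards used to form the estimators, and Lemmas~\ref{lemma:emp-bernstein}--\ref{lemma:ci-cov-iter} cannot be invoked directly. I would resolve this by coupling the collection to a fixed i.i.d.\ pool of full rollouts of $\pi$: whenever \rido requests a trajectory of a given length it consumes the next unused rollout and truncates it, so that the per-step rewards entering each estimator are simply the first several entries of a fixed i.i.d.\ sequence, with the adaptive choices governing only \emph{how many} entries are revealed and never their values. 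Crucially, this coupling preserves the within-trajectory correlations across steps that drive the covariance terms. Consequently, for any realized count $n$ the estimator equals the one built from the first $n$ elements of the fixed sequence, and it suffices to force the bounds to hold for every $n \in \{2,\dots,\Lambda\}$ at once; this is where the factor $\Lambda$ in the logarithm originates (an alternative route is a maximal/stopping-time inequality, but the union over counts is the most transparent).

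It then remains to account for the remaining failure budget. I would split $\delta$ across the $T$ variance estimators and the at most $T^2$ covariance estimators (giving the $(T+T^2)$ factor), across the $K$ phases (the factor $K$), and across the $\Lambda$ possible counts, with the factor $6$ absorbed into the covariance lemma's internal Boole step; setting the per-event level to $\delta' = \delta/(6(T+T^2)\Lambda K)$ for the variance events and $\delta'=\delta/((T+T^2)\Lambda K)$ for the covariance events makes both logarithms equal to $\log(6(T+T^2)\Lambda K/\delta)$, and a direct summation shows the total failure probability is at most $\delta$. The cumulative sample count $\sum_{j=0}^i \hat N_{j,t}$ then appears in the denominators, matching the definitions of $\textrm{C}^\sigma_{i,t}$ and $\textrm{C}^c_{i,t,t'}$ in Equation~\eqref{eq:ci} (the harmless $n$ versus $n-1$ discrepancy from Lemma~\ref{lemma:emp-bernstein} is absorbed into constants, using $b\ge 2T$ to guarantee at least two samples per estimated step). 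Taking complements yields that all intervals hold simultaneously with probability at least $1-\delta$, which is the claim.
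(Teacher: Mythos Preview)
Your approach is correct and matches the paper's own proof, which is a one-line sketch: ``combine Lemma~\ref{lemma:ci-cov-iter} and Lemma~\ref{lemma:emp-bernstein} and take a union bound over time steps, optimization rounds, and possible ways in which the budget can be spent.'' You flesh out exactly that skeleton, identifying the $(T+T^2)$, $K$, $\Lambda$, and $6$ factors with the same sources the paper intends, and you go further by making explicit the coupling that justifies treating the adaptive sample counts via a union bound over deterministic counts---a point the paper leaves implicit in the phrase ``possible ways in which the budget can be spent.''
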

\begin{proof}
The proof follows by combining Lemma \ref{lemma:ci-cov-iter} and Lemma \ref{lemma:emp-bernstein}, and by taking the union bound over the different time steps, optimization rounds, and possible ways in which the budget can be spent.
\end{proof}

At this point, first we show that, with high probability, the objective function of the empirical optimization problem \eqref{sys:emp-opt-prob} satisfies the same property of the objective function of the original optimization problem \eqref{sys:opt-prob}, i.e., Lemma \ref{lemma:var-funct-property}. Consequently, it holds that the procedure described in the main text in Section \ref{sec:emp-problem} leads to a transformed convex optimization problem that preserves the optimal solution. For this reason, in the rest of this section, under the good event of Lemma \ref{lemma:iterative-good-event}, we assume that \rido has actually access to an optimal solution of the continuous relaxation of \eqref{sys:emp-opt-prob}, which can be obtained in a computational efficient way by transforming the optimization problem.

\begin{lemma}[High probability property of the empirical problem]\label{lemma:high-prob-convex-problem}
Let $\beta = \frac{6(T+T^2)\Lambda K}{\delta}$ and consider a generic phase $i$ of Algorithm \ref{alg:rido}. Define:
\begin{align*}
    \hat{f}_{t,i} = \gamma^{2t} \left( \sqrt{\Empvar_i \left( {R}_t \right)} + \textrm{C}^\sigma_{i,t} \right)^2 + 2 \sum_{t'=t+1}^{T-1} \gamma^{t+t'}  \left( \Empcov_i(R_t, R_{t'}) + \textrm{C}^{c}_{i,t,t'} \right).
\end{align*}
Suppose that $\hat{f}_{t,i} < 0$. Then, with probability at least $1-\delta$, for any $t \in \left\{0, \dots, T-2 \right\}$ there exists $\bar{t} > t$ such that $\sum_{j=t}^{\bar{t}} \hat{f}_{j,i} \ge 0$ holds. 
\end{lemma}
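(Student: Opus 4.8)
The plan is to mirror the contradiction argument of Lemma~\ref{lemma:var-funct-property}, but now carry it out on the good event of Lemma~\ref{lemma:iterative-good-event} so that the empirical coefficients $\hat{f}_{j,i}$ inherit the key structural property of the true coefficients $f_j$. The central observation is that, once the exploration bonuses are added, each empirical term \emph{dominates} its population counterpart, and hence the empirical analogue of the partial-variance identity $\sum_{j=t}^{T-1} f_j = \Var[\sum_{j=t}^{T-1}\gamma^j R_j] \ge 0$ (established inside the proof of Lemma~\ref{lemma:var-funct-property}) turns into an inequality whose right-hand side is still non-negative.

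First I would invoke Lemma~\ref{lemma:iterative-good-event} with the stated choice $\beta = \frac{6(T+T^2)\Lambda K}{\delta}$, so that with probability at least $1-\delta$, simultaneously for every phase $i$ and every pair of time steps, the one-sided bounds $\sqrt{\Var(R_j)} \le \sqrt{\Empvar_i(R_j)} + \textrm{C}^\sigma_{i,j}$ and $\Cov(R_j,R_{t'}) \le \Empcov_i(R_j,R_{t'}) + \textrm{C}^{c}_{i,j,t'}$ hold. I would then condition on this good event for the remainder of the argument.

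Next I would establish the term-by-term domination $\hat{f}_{j,i} \ge f_j$ for every $j$. Squaring the variance bound (both sides being non-negative) yields $(\sqrt{\Empvar_i(R_j)} + \textrm{C}^\sigma_{i,j})^2 \ge \Var(R_j)$, while the covariance bound gives directly $\Empcov_i(R_j,R_{t'}) + \textrm{C}^{c}_{i,j,t'} \ge \Cov(R_j,R_{t'})$. Since the multiplicative coefficients $\gamma^{2j}$ and $2\gamma^{j+t'}$ appearing in the definition of $\hat{f}_{j,i}$ are all non-negative, these inequalities are preserved under the linear combination, giving $\hat{f}_{j,i} \ge f_j$ for each $j$.

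Finally, I would conclude by summing the domination from $t$ up to $T-1$ and applying the identity of Lemma~\ref{lemma:var-funct-property}:
\begin{align*}
\sum_{j=t}^{T-1} \hat{f}_{j,i} \ge \sum_{j=t}^{T-1} f_j = \Var\!\left[ \sum_{j=t}^{T-1} \gamma^j R_j \right] \ge 0.
\end{align*}
Because $t \le T-2$, we have $T-1 > t$, so taking $\bar{t} = T-1$ already exhibits an index with $\sum_{j=t}^{\bar{t}} \hat{f}_{j,i} \ge 0$ (equivalently, one reruns the contradiction step of Lemma~\ref{lemma:var-funct-property} verbatim on the $\hat{f}_{j,i}$). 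Note that this conclusion in fact holds for \emph{every} $t \in \{0,\dots,T-2\}$, so the hypothesis $\hat{f}_{t,i}<0$ is used only to single out the steps that actually need grouping. I do not anticipate a genuine obstacle; the only point requiring care is orienting the confidence intervals so that the bonuses \emph{inflate} rather than deflate the empirical estimates, which is exactly what makes $\hat{f}_{j,i}$ an over-estimate of $f_j$ and which relies on the non-negativity of all the discount coefficients.
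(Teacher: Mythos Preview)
Your proposal is correct and follows essentially the same approach as the paper: invoke the good event of Lemma~\ref{lemma:iterative-good-event}, use it to show $\hat{f}_{j,i}\ge f_j$ termwise, and then conclude via $\sum_{j=t}^{T-1}\hat{f}_{j,i}\ge \sum_{j=t}^{T-1}f_j=\Var\bigl[\sum_{j=t}^{T-1}\gamma^j R_j\bigr]\ge 0$. The paper phrases this as a contradiction (assuming all partial sums are negative and deriving the same inequality at $\bar{t}=T-1$), whereas you state it directly by exhibiting $\bar{t}=T-1$; the content is identical.
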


\begin{proof}
We proceed by contradiction. Suppose that $\hat{f}_{t,i} < 0$ and $\sum_{j=t}^{\bar{t}} \hat{f}_{j,i} < 0$ for all $\bar{t} > t$, and, thus, also for $\bar{t} = T-1$. Due to Lemma \ref{lemma:iterative-good-event}, we have that:
\begin{align*}
    \sum_{j=t}^{T-1} \hat{f}_{j,i} \ge \sum_{j=t}^{T-1} \gamma^{2j} \Var\left(R_j\right) + 2 \sum_{t'=j+1}^{T-1} \gamma^{i+t'}\Cov\left(R_j, R_{t'}\right) = \Var\left( \sum_{j=t}^{T-1} \gamma^j R_j \right), 
\end{align*}
which, however, is always greater or equal than $0$, thus leading to a contradiction and concluding the proof.
\end{proof}

To analyze the performance of \rido, we will focus the following quantity:
\begin{align}\label{eq:regret}
\sum_{t=0}^{T-1} \frac{f_t}{N_t} - \Var\left[ \hat{J}_{\bm{{n}^*}}(\pi) \right] = \sum_{t=0}^{T-1} \frac{f_t}{\sum_{i=0}^{K-1} N_{t,i}} - \sum_{t=0}^{T-1} \frac{f_t}{n_t^*}.
\end{align}
In the following Lemma we decompose Equation \eqref{eq:regret} into several terms.

\begin{lemma}[Error decomposition]\label{lemma:regret-decomposition}
Let $f_t = \gamma^{2t} \Var\left( R_t \right) + 2 \sum_{t'=t+1}^{T-1} \gamma^{t+t'} \Cov\left(R_t, R_{t'} \right)$. Let $y \in \mathcal{Y}$, and define $q(y)$ as the smallest integer in $\left\{y+1, \dots, T-1 \right\}$ such that $\sum_{i=y}^{q(y)} f_i \ge 0$. Equation \eqref{eq:regret} can be upper bounded by:
\begin{equation}\label{eq:regre-decomposition-eq1}
\begin{aligned}
    \frac{2}{K^2} & \sum_{i=1}^{K} \sum_{t=0}^{T-1} f_t \left( \frac{1}{\bar{N}_{t,i}} - \frac{1}{\tilde{x}^*_t + 1} \right) + \frac{2}{K^2} \sum_{i=1}^{K} \sum_{t: f_t < 0} \frac{|f_t|}{\bar{N}_{t,i}}  - \sum_{t: f_t < 0} \frac{|f_t|}{\sum_{i=1}^{k} N_{t,i}} \\
   & + \frac{2}{K} \sum_{t=0}^{T-1} \frac{f_t}{\tilde{x}^*_{t} + 1} - \sum_{t=0}^{T-1} \frac{f_t}{n_t^*}.
\end{aligned}
\end{equation}
where $\bm{\bar{N}}_i$ is the optimal solution of the continuous relaxation \eqref{sys:emp-opt-prob}, $\bm{N}_i$ is the DCS obtained from rounding $\bm{\bar{N}}_i$, and $\tilde{x}^*$ is the optimal solution of the following optimization problem:
\begin{equation}
\begin{aligned} 
\min_{\bm{x}} \quad & \sum_{i=1}^{K} \frac{f_t}{x_t} \\
\textrm{s.t.} \quad &  \sum_{t=0}^{T-1} x_t = b - T \\
  & x_i \ge x_{i+1}, \quad \forall i \in \{1, \dots, K-1\}  \\
  & x_i \ge 0, \quad \forall i \in \{1, \dots, K\} \\
  & x_y = x_{y+1} = \dots = x_{q(y)}, \quad \forall y \in \mathcal{Y}.
\end{aligned}
\end{equation}
\end{lemma}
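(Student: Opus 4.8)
The plan is to treat Equation \eqref{eq:regret} as a purely algebraic object and rewrite it, term by term, into the five summands of \eqref{eq:regre-decomposition-eq1}, invoking only the technical Lemma \ref{lemma:tech-lemma-reduction-improved} and the rounding Lemma \ref{lemma:rounding-effect-error} to produce the two genuine inequalities; every other manipulation is an identity. Recall that $N_t = \sum_{i} N_{t,i}$ is the total number of samples \rido allocates to step $t$ across its $K$ phases, that $N_{t,i}$ is the integer schedule of phase $i$ obtained by rounding the continuous optimum $\bar N_{t,i}$ of the relaxed problem \eqref{sys:emp-opt-prob}. The first move is to split the sum $\sum_t \frac{f_t}{N_t}$ according to the sign of $f_t$, isolating the indices with $f_t \ge 0$ from the indices in $\mathcal{Y}$ where $f_t < 0$. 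This split is essential because the two groups must be handled with opposite inequality directions.

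For the nonnegative coordinates I would apply Lemma \ref{lemma:tech-lemma-reduction-improved} with the $K$ per-phase counts $\{N_{t,i}\}_i$ playing the role of the $a_i$: since $f_t \ge 0$, multiplying $\frac{1}{\sum_i N_{t,i}} \le \frac{1}{K^2}\sum_i \frac{1}{N_{t,i}}$ by $f_t$ preserves its direction, yielding $\sum_{t: f_t \ge 0}\frac{f_t}{N_t} \le \frac{1}{K^2}\sum_{i}\sum_{t: f_t \ge 0}\frac{f_t}{N_{t,i}}$. I would then pass from the integer counts $N_{t,i}$ to the continuous optima $\bar N_{t,i}$ phase by phase, using the $c_t \ge 0$ branch of the rounding Lemma \ref{lemma:rounding-effect-error} (whose hypothesis $\bar N_{t,i} \ge 1$ is exactly the constraint of the continuous relaxation), which costs a factor $2$ and turns the bound into $\frac{2}{K^2}\sum_i \sum_{t: f_t \ge 0}\frac{f_t}{\bar N_{t,i}}$. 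Crucially, for the negative coordinates the technical lemma points the wrong way, so I would keep those terms untouched as $-\sum_{t: f_t < 0}\frac{|f_t|}{\sum_i N_{t,i}}$, which is precisely the third summand of \eqref{eq:regre-decomposition-eq1}.

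The remaining steps are exact rewritings. I would re-express the restricted sum $\sum_{t: f_t \ge 0}\frac{f_t}{\bar N_{t,i}}$ as the full sum $\sum_t \frac{f_t}{\bar N_{t,i}}$ plus the compensating term $\sum_{t: f_t < 0}\frac{|f_t|}{\bar N_{t,i}}$; the latter, carried through the $\frac{2}{K^2}\sum_i$ prefactor, is exactly the second summand. Finally, I would insert the reference schedule $\tilde{\bm{x}}^*$ by adding and subtracting $\frac{1}{\tilde x^*_t + 1}$ inside $\frac{2}{K^2}\sum_i\sum_t \frac{f_t}{\bar N_{t,i}}$: the difference part collects into the first summand, while the $\frac{1}{\tilde x^*_t + 1}$ part no longer depends on $i$, so summing over the $K$ phases collapses $\frac{2}{K^2}\sum_i$ into $\frac{2}{K}$ and reproduces the fourth summand. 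Together with the untouched $-\sum_t \frac{f_t}{n_t^*}$ (the fifth summand), these coincide with \eqref{eq:regre-decomposition-eq1}.

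The difficulty here is bookkeeping rather than depth: one must ensure the negative coordinates are never fed to Lemma \ref{lemma:tech-lemma-reduction-improved}, that the rounding lemma is applied to the per-phase pair $(\bar N_{t,i}, N_{t,i})$ with the continuous optimum as the unrounded argument and never to the reference $\tilde{\bm{x}}^*$ (which enters only through the algebraic insertion), and that the number of phases contributing to $N_t$ matches the $K$ appearing in the $\frac{1}{K^2}$ factor so that the prefactors collapse cleanly. I would also verify that inserting $\tilde{\bm{x}}^*$ is legitimate as a pure identity — it holds for the stated $\tilde{\bm{x}}^*$ regardless of its optimality, since its budget $b-T$ and the $+1$ offset only matter later, when the first and fourth summands are bounded via the budget-sensitivity Lemma \ref{lemma:restricted-budget-opt} and the good event.
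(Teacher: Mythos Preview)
Your proposal is correct and follows essentially the same approach as the paper: split by the sign of $f_t$, apply Lemma~\ref{lemma:tech-lemma-reduction-improved} only to the nonnegative part, invoke Lemma~\ref{lemma:rounding-effect-error} to replace $N_{t,i}$ by $\bar N_{t,i}$ at the cost of a factor $2$, then add and subtract first $\frac{2}{K^2}\sum_i\sum_{t:f_t<0}\frac{f_t}{\bar N_{t,i}}$ and then $\frac{2}{K}\sum_t\frac{f_t}{\tilde x^*_t+1}$. Your remarks on why the negative indices must bypass the technical lemma and why the insertion of $\tilde{\bm{x}}^*$ is a pure identity are accurate.
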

\begin{proof}
Focus on Equation \eqref{eq:regret}:
\begin{align*}
    \mathcal{R} & \coloneqq \sum_{t=0}^{T-1} \frac{f_t}{\sum_{i=1}^{k} N_{t,i}} - \sum_{t=0}^{T-1} \frac{f_t}{n_t^*}  \\ & = \sum_{t: f_t \ge 0} \frac{f_t}{\sum_{i=1}^{k} N_{t,i}} + \sum_{t: f_t < 0} \frac{f_t}{\sum_{i=1}^{k} N_{t,i}} - \sum_{t=0}^{T-1} \frac{f_t}{n_t^*} .
\end{align*}
Using Lemma \ref{lemma:tech-lemma-reduction-improved}, we obtain that:
\begin{align*}
    \mathcal{R} & \le \frac{1}{K^2} \sum_{i=1}^K \sum_{t: f_t \ge 0} \frac{f_t}{N_{t,i}} + \sum_{t: f_t < 0} \frac{f_t}{\sum_{i=1}^{k} N_{t,i}} - \sum_{t=0}^{T-1} \frac{f_t}{n_t^*} \\ & \le  \frac{2}{K^2} \sum_{i=1}^K \sum_{t: f_t \ge 0} \frac{f_t}{\bar{N}_{t,i}} + \sum_{t: f_t < 0} \frac{f_t}{\sum_{i=1}^{k} N_{t,i}} - \sum_{t=0}^{T-1} \frac{f_t}{n_t^*} \\ & =  \frac{2}{K^2} \sum_{i=1}^K \sum_{t=0}^{T-1} \frac{f_t}{\bar{N}_{t,i}} - \frac{2}{K^2} \sum_{i=1}^{K} \sum_{t: f_t < 0} \frac{f_t}{\bar{N}_{t,i}} + \sum_{t: f_t < 0} \frac{f_t}{\sum_{i=1}^{k} N_{t,i}} - \sum_{t=0}^{T-1} \frac{f_t}{n_t^*}, 
\end{align*}
where (i) the first step is due to Lemma \ref{lemma:tech-lemma-reduction-improved}, (ii) the second one follows from using Lemma \ref{lemma:rounding-effect-error}, and (iii) the third one from adding and subtracting $\frac{2}{K^2} \sum_{i=1}^K \sum_{t: f_t < 0} \frac{f_t}{\bar{N}_{t,i}}$. The proof follows by adding and subtracting:
\begin{align*}
    \frac{2}{K^2} \sum_{i=1}^K \sum_{t=0}^{T-1} \frac{f_t}{\tilde{x}^*_t + 1} = \frac{2}{K} \sum_{t=0}^{T-1} \frac{f_t}{\tilde{x}^*_t + 1}.
\end{align*}
\end{proof}

We now upper bound each individual term of this error decomposition. We start from the first one, that is:
\begin{align*}
\frac{2}{K^2} \sum_{i=1}^{K} \sum_{t=0}^{T-1} f_t \left( \frac{1}{\bar{n}_{t,i}} - \frac{1}{\tilde{x}^*_t + 1} \right).
\end{align*}

\begin{lemma}[Cumulative error]\label{lemma:iterative-sampling-error}
Let $\beta = \frac{6(T+T^2)\Lambda K }{\delta}$. Let $y \in \mathcal{Y}$, and define $q(y)$ as the smallest integer in $\left\{y+1, \dots, T-1 \right\}$ such that $\sum_{i=y}^{q(y)} f_i \ge 0$.
Let $\bm{\tilde{x}}^*$ be the solution of the following optimization problem:
\begin{equation}
\begin{aligned} 
\min_{\bm{x}} \quad & \sum_{i=1}^{K} \frac{f_t}{x_t} \\
\textrm{s.t.} \quad &  \sum_{t=0}^{T-1} x_t = b - T \\
  & x_i \ge x_{i+1}, \quad \forall i \in \{1, \dots, K-1\}  \\
  & x_i \ge 0, \quad \forall i \in \{1, \dots, K\} \\
  & x_{y} = x_{y+1} = \dots = x_{q(y)}, \quad \forall y \in \mathcal{Y},
\end{aligned}
\end{equation}
and let $\bm{\bar{N}}_i$ be the solution of the continuous relaxation of \eqref{sys:emp-opt-prob} during phase $i$. Then, with probability at least $1-\delta$, the following holds:
\begin{align}
    \frac{2}{K^2} \sum_{i=1}^K \sum_{t=0}^{T-1} f_t \left(\frac{1}{\bar{N}_{t,i}} - \frac{1}{\tilde{x}^*_t + 1} \right) \le \frac{192}{K^{\frac{3}{2}}} \log\left( \frac{6(T+T^2)\Lambda K}{\delta} \right) \left(\sum_{t=0}^{T-1} \gamma^t \right)^2 
\end{align}
\end{lemma}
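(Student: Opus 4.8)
The plan is to argue on the good event of Lemma~\ref{lemma:iterative-good-event}, which holds with probability at least $1-\delta$ and is exactly the source of the $1-\delta$ in the statement. On this event the confidence bounds \eqref{def:var-ci}--\eqref{def:cov-ci} give that each empirical coefficient dominates its true counterpart: since $\sqrt{\Empvar_i(R_t)} + \textrm{C}^\sigma_{i,t} \ge \sqrt{\Var(R_t)}$ and $\Empcov_i(R_t,R_{t'}) + \textrm{C}^{c}_{i,t,t'} \ge \Cov(R_t,R_{t'})$, and all the factors $\gamma^{2t},\gamma^{t+t'}$ are positive, we obtain $\hat{f}_{t,i} \ge f_t$ termwise (with $\hat{f}_{t,i}$ as in Lemma~\ref{lemma:high-prob-convex-problem}). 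This single sign fact drives the whole argument.

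First I would exploit optimality of $\bm{\bar{N}}_i$. By Lemma~\ref{lemma:high-prob-convex-problem} the transformation of Section~\ref{sec:emp-problem} preserves the minimizer, so $\bm{\bar{N}}_i$ is a global minimizer of $\sum_t \hat{f}_{t,i}/N_t$ over the relaxed feasible set $\{\sum_t N_t \le b,\ N_t \ge N_{t+1},\ N_t \ge 1\}$. The point $\bm{\tilde{x}}^* + \bm{1}$ is feasible there (it is monotone, entrywise $\ge 1$, and sums to $(b-T)+T = b$, which is precisely why the comparison budget is set to $b-T$). Hence, for each phase $i$,
$$\sum_t \frac{f_t}{\bar{N}_{t,i}} \le \sum_t \frac{\hat{f}_{t,i}}{\bar{N}_{t,i}} \le \sum_t \frac{\hat{f}_{t,i}}{\tilde{x}^*_t + 1},$$
where the first inequality uses $f_t \le \hat{f}_{t,i}$ and $\bar{N}_{t,i}>0$. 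Subtracting $\sum_t f_t/(\tilde{x}^*_t+1)$ and using $\hat{f}_{t,i}-f_t \ge 0$ together with $\tilde{x}^*_t+1 \ge 1$ yields the clean per-phase bound $\sum_t f_t(\bar{N}_{t,i}^{-1} - (\tilde{x}^*_t+1)^{-1}) \le \sum_t (\hat{f}_{t,i} - f_t)$.

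The remaining work is to control the confidence width $\sum_t(\hat{f}_{t,i}-f_t)$ and sum it over phases. I would bound the variance gap via $(\sqrt{\Empvar_i(R_t)}+\textrm{C}^\sigma_{i,t})^2 - \Var(R_t) \le 4\sqrt{\Var(R_t)}\,\textrm{C}^\sigma_{i,t} + 4(\textrm{C}^\sigma_{i,t})^2$ (expand the square and use $\sqrt{\Empvar_i(R_t)} \le \sqrt{\Var(R_t)}+\textrm{C}^\sigma_{i,t}$), and the covariance gap via $\Empcov_i(R_t,R_{t'}) + \textrm{C}^{c}_{i,t,t'} - \Cov(R_t,R_{t'}) \le 2\textrm{C}^{c}_{i,t,t'}$. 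Because \rido allocates at least one sample to every step in every phase (the relaxed constraint $N_t \ge 1$, preserved after rounding) and is seeded by the warm-up batch of $b/T$ samples, the denominators in \eqref{eq:ci} satisfy $\sum_{j}\hat{N}_{j,t} = \Omega(i)$, so $\textrm{C}^\sigma_{i,t},\textrm{C}^{c}_{i,t,t'} = O(\sqrt{\log\beta / i})$. Summing the linear terms over $i$ through $\sum_{i=1}^K i^{-1/2} \le 2\sqrt{K}$ produces the $K^{-3/2}$ rate after the $2/K^2$ prefactor, while the squared-bonus term contributes only $O(\log\beta\,\log K / K^2)$, which is lower order. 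Collecting the geometric sums $\sum_t \gamma^{2t} \le \sum_t\gamma^t$ and $\sum_t 2\sum_{t'>t}\gamma^{t+t'} \le (\sum_t\gamma^t)^2$ yields the factor $(\sum_t\gamma^t)^2$, and bounding $\sqrt{\log\beta}\le\log\beta$ (valid since $\beta$ is large) unifies the two rates into the stated $\log\beta$.

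The main obstacle I anticipate is the variance confidence width: unlike the covariance term it is \emph{quadratic} in the bonus, so one must separate its genuinely leading linear-in-$\textrm{C}^\sigma_{i,t}$ contribution (which gives the $K^{-3/2}$, $\sqrt{\log\beta}$ scaling) from the lower-order squared contribution, and then absorb both --- together with the numerous multiplicative constants (the $4$ from the square, the $2$ from the covariance doubling, the $2$ from $\sum_i i^{-1/2}\le 2\sqrt{K}$, the $3$ inside $\textrm{C}^{c}_{i,t,t'}$, and the leading $2$) --- into the single constant $192$. Care is also needed at the first phase, where the bonus denominator is seeded by the warm-up batch rather than by previous rounds, so the $\Omega(i)$ lower bound on $\sum_j \hat{N}_{j,t}$ must explicitly account for this initialization.
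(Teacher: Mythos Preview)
Your proposal is correct and follows essentially the same route as the paper: on the good event use $f_t \le \hat{f}_{t,i}$, invoke optimality of $\bm{\bar{N}}_i$ against a feasible comparison point, bound the residual confidence widths $\hat{f}_{t,i}-f_t$ phase by phase (with the same $4\sqrt{\Var(R_t)}\,\textrm{C}^\sigma_{i,t}+4(\textrm{C}^\sigma_{i,t})^2$ and $2\textrm{C}^{c}_{i,t,t'}$ decomposition), and sum over phases via $\sum_i i^{-1/2}\le 2\sqrt{K}$, treating the warm-up round separately. The only cosmetic difference is that the paper inserts an intermediate optimizer $\tilde{\bm g}$ (the minimizer of $\sum_t f_t/g_t$ over $\{\sum_t g_t=b,\ g_t\ge g_{t+1},\ g_t\ge 1\}$) and compares $\bm{\bar{N}}_i$ to $\tilde{\bm g}$ rather than directly to $\tilde{\bm x}^*+\bm 1$; your direct comparison is a valid shortcut since $\tilde{\bm x}^*+\bm 1$ is already feasible for the relaxed empirical problem.
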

\begin{proof}
The proof is split into $3$ parts. First we focus on
\begin{align*}
    \frac{2}{K^2} \sum_{i=1}^K \sum_{t=0}^{T-1} f_t \left(\frac{1}{\bar{N}_{t,i}} - \frac{1}{\tilde{x}^*_t + 1} \right)
\end{align*}
for a generic phase $i > 1$, then for $i=1$, and finally we combine everything together. 

\paragraph{Analysis of a generic phase $i > 1$}
Fix a phase $i > 1$, and analyze: 
\begin{align}\label{eq:iterative-error-eq1}
    \sum_{t=0}^{T-1} f_t \left(\frac{1}{\bar{N}_{t,i}} - \frac{1}{\tilde{x}^*_t + 1} \right)
\end{align}
Focus on $\sum_{t=0}^{T-1} \frac{f_t}{\tilde{x}_t + 1}$, and let $\bm{\tilde{g}} = \left(\tilde{g}_0, \dots, \tilde{g}_{T-1} \right)$ be the solution to the following optimization problem:
\begin{equation}
\begin{aligned} 
\min_{\bm{g}} \quad & \sum_{t=0}^{T-1} \frac{f_t}{g_t}  \\
\textrm{s.t.} \quad &  \sum_{t=0}^{T-1} g_t = b \\
  & g_t \ge g_{t+1}, \quad \forall t \in \{0, \dots, T-2\}  \\
  & g_t \ge 1, \quad \forall t \in \{0, \dots, T-1\}.
\end{aligned}
\end{equation}
We can observe that: \footnote{This step follows by considering the optimization problem that defines $\bm{\tilde{g}}$. With a change of variable $g_t = x_t + 1$, we have that $\tilde{x}^*_t + 1$ is a feasible solution of the same optimization problem. Moreover, due to Lemma \ref{lemma:opt-lemma-variance-funct}, we can neglect the constraints on $y$.}
\begin{align*}
    \sum_{t=0}^{T-1} \frac{f_t}{\tilde{g}_t} \le \sum_{t=0}^{T-1} \frac{f_t}{\tilde{x}^*_t + 1}.
\end{align*}
Plugging this result into Equation \eqref{eq:iterative-error-eq1} we obtain:
\begin{align}\label{eq:iterative-error-eq2}
    \sum_{t=0}^{T-1} f_t \left(\frac{1}{\bar{N}_{t,i}} - \frac{1}{\tilde{x}^*_t + 1} \right) \le \sum_{t=0}^{T-1} f_t \left(\frac{1}{\bar{N}_{t,i}} - \frac{1}{\tilde{g}^*_t} \right).
\end{align}
With probability at least $1-\delta$ (i.e., Lemma \ref{lemma:iterative-good-event}), we can upper bound Equation \eqref{eq:iterative-error-eq2} by:
\begin{align*}
\sum_{t=0}^{T-1} \frac{\gamma^{2t} \left( \sqrt{\Empvar_i\left( R_t \right)}  + \textrm{C}^\sigma_{i,t} \right)^2}{\bar{N}_{t,i}} + 2 \sum_{t=0}^{T-2} \sum_{t'=t+1}^{T-1} \frac{\gamma^{t+t'}\left( \Empcov(R_t, R_{t'}) + \textrm{C}^c_{i,t,t'} \right)}{\bar{N}_{t,i}} - \sum_{t=0}^{T-1} \frac{f_t}{\tilde{g}_t},
\end{align*}
Moreover, since $\bm{\tilde{g}}$ is a feasible solution of the continuous relaxation of \eqref{sys:emp-opt-prob}, and since $\bar{N}_{t,i}$ is the minimizer of the continuous relaxation of \eqref{sys:emp-opt-prob} at phase $i$, we can further bound the previous equation with:
\begin{align*}
\sum_{t=0}^{T-1} \frac{\gamma^{2t} \left( \sqrt{\Empvar_i\left( R_t \right)}  + \textrm{C}^\sigma_{i,t} \right)^2}{\tilde{g}_t} + 2 \sum_{t=0}^{T-2} \sum_{t'=t+1}^{T-1} \frac{\gamma^{t+t'}\left( \Empcov(R_t, R_{t'}) + \textrm{C}^c_{i,t,t'} \right)}{\tilde{g}_t} - \sum_{t=0}^{T-1} \frac{f_t}{\tilde{g}_t}.
\end{align*}
Applying Lemma \ref{lemma:iterative-good-event}, we can further bound the previous equation by:
\begin{align}\label{eq:iterative-error-new}
\sum_{t=0}^{T-1} \frac{\gamma^{2t} \left( \sqrt{\Var\left[ R_t \right]}  + 2\textrm{C}^\sigma_{i,t} \right)^2}{\tilde{g}_t} + 2 \sum_{t=0}^{T-2} \sum_{t'=t+1}^{T-1} \frac{\gamma^{t+t'}\left( \Cov(R_t, R_{t'}) + 2\textrm{C}^c_{i,t,t'} \right)}{\tilde{g}_t} - \sum_{t=0}^{T-1} \frac{f_t}{\tilde{g}_t},
\end{align}
Focus on:
\begin{align}\label{eq:iterative-error-eq3}
    \sum_{t=0}^{T-1} \frac{\gamma^{2t} \left( \sqrt{\Var\left[ R_t \right]}  + 2\textrm{C}^\sigma_{i,t} \right)^2}{\tilde{g}_t} + 2 \sum_{t=0}^{T-2} \sum_{t'=t+1}^{T-1} \frac{\gamma^{t+t'}\left( \Cov(R_t, R_{t'}) + 2\textrm{C}^{c}_{i,t,t'} \right)}{\tilde{g}_t}.
\end{align}
This equation can be decomposed into:
\begin{align}\label{iterative-error-eq4}
    \sum_{t=0}^{T-1} \frac{\gamma^{2t}  4\sqrt{\Var\left[ R_t \right]}\textrm{C}^\sigma_{i,t}  + 4 \gamma^{2t} \left(\textrm{C}^\sigma_{i,t}\right)^2 }{\tilde{g}_t} + 4 \sum_{t=0}^{T-2} \sum_{t'=t+1}^{T-1} \frac{\gamma^{t+t'} \textrm{C}^{c}_{i,t,t'} }{\tilde{g}_t}
\end{align}
and,
\begin{align}\label{eq:iterative-error-eq5}
    \sum_{t=0}^{T-1} \frac{f_t}{\tilde{g}_t}.
\end{align}
Using this decomposition within Equation \eqref{eq:iterative-error-new}, we have:
\begin{align}\label{eq:iterative-error-eq6}
    \sum_{t=0}^{T-1} \frac{\gamma^{2t}  4\sqrt{\Var\left[ R_t \right]}\textrm{C}^\sigma_{i,t}  + 4 \gamma^{2t} \left(\textrm{C}^\sigma_{i,t}\right)^2 }{\tilde{g}_t} + 4 \sum_{t=0}^{T-2} \sum_{t'=t+1}^{T-1} \frac{\gamma^{t+t'} \textrm{C}^{c}_{i,t,t'} }{\tilde{g}_t}
\end{align}
We now upper bound each term in Equation \eqref{eq:iterative-error-eq6}. For brevity, we define $h_{i,t} \coloneqq \sum_{j=0}^{i-1} N_{t,j}$. Then, we have that:
\begin{align*}
    \sum_{t=0}^{T-1} \frac{4\gamma^{2t} \sqrt{\Var\left[ R_t \right]}\textrm{C}^\sigma_{i,t}}{\tilde{g}_t} & \le \sum_{t=0}^{T-1} \frac{4\gamma^{2t}}{\tilde{g}_t} \sqrt{\frac{2\log\left( \frac{2(T+T^2)\Lambda K}{\delta} \right)}{h_{i-1, t}}} \\ & \le 8 \sqrt{\log\left( \frac{2(T+T^2)\Lambda K}{\delta} \right)} \sum_{t=0}^{T-1} \frac{\gamma^{2t}}{\tilde{g}_t \sqrt{i-1}} \\ & \le 16 \sqrt{\log\left( \frac{2(T+T^2)\Lambda K}{\delta} \right)} \sum_{t=0}^{T-1} \frac{\gamma^{2t}}{\sqrt{i}}.
\end{align*}
where (i) the first step follows from the definition of the confidence intervals, together with the fact that rewards are bounded in $[0,1]$, (ii)  the second one from the fact $h_{i-1,t} = \sum_{j=1}^{i-1} N_{t,j} \ge i - 1$, and (iii) the third one from $\sqrt{i} \le 2 \sqrt{i-1}$.
Similary, we obtain that:
\begin{align*}
    \sum_{t=0}^{T-1} \frac{4\gamma^{2t} \left(\textrm{C}^{\sigma}_{i,t} \right)^2}{\tilde{g}_t} & \le \sum_{t=0}^{T-1} \frac{4\gamma^{2t}}{\tilde{g}_t} \frac{2 \log\left( \frac{2(T+T^2)\Lambda K}{\delta} \right)}{h_{i-1,t}} \\ & \le 8 \log\left( \frac{2(T+T^2)\Lambda K}{\delta} \right) \sum_{t=0}^{T-1} \frac{\gamma^{2t}}{\tilde{g}_t (i-1)} \\ & \le 16 \log\left( \frac{2(T+T^2)\Lambda K}{\delta} \right) \sum_{t=0}^{T-1} \frac{\gamma^{2t}}{\sqrt{i}}.
\end{align*}
Finally, we have that:
\begin{align*}
    4 \sum_{t=0}^{T-2} \sum_{t'=t+1}^{T-1} \frac{\gamma^{t+t'} \textrm{C}^{c}_{i,t,t'}}{\tilde{g}_t} & \le 24 \sqrt{\log\left( \frac{6(T+T^2)\Lambda K}{\delta} \right)} \sum_{t=0}^{T-2}\sum_{t'=t+1}^{T-1} \frac{\gamma^{t+t'}}{\tilde{g}_t \sqrt{i-1}} \\ & \le 48 \sqrt{\log\left( \frac{6(T+T^2)\Lambda K}{\delta} \right)} \sum_{t=0}^{T-2}\sum_{t'=t+1}^{T-1} \frac{\gamma^{t+t'}}{\sqrt{i}}
\end{align*}

\paragraph{Analysis of the initial phase}
We now analyze phase $i=1$. Here, the budget is allocated uniformly. Therefore, we have that:
\begin{align*}
    \sum_{t=0}^{T-1} f_t \left(\frac{1}{b/T} - \frac{1}{\tilde{g}_t} \right) \le  \sum_{t=0}^{T-1} \frac{f_t}{b/T} \le \sum_{t=0}^{T-1} f_t \le \left( \sum_{t=0}^{T-1} \gamma^t \right)^2  
\end{align*}

\paragraph{Plugging everything together}
Combining these results into Equation \eqref{eq:iterative-error-eq2}, we obtain that, with probability at least $1-\delta$: 
\begin{align}\label{eq:iterative-error-eq7}
    \frac{2}{K^2}\sum_{i=1}^K \left( 48 \log\left( \frac{6(T+T^2)\Lambda K}{\delta} \right) \left(\sum_{t=0}^{T-1} \gamma^t \right)^2 \right) \frac{1}{\sqrt{i}}
\end{align}
To conclude the proof, we observe that $\sum_{i=1}^n \frac{1}{\sqrt{i}} \le 2\sqrt{n} - 1$. Therefore:
\begin{align*}
    \frac{192}{K^{\frac{3}{2}}}  \log\left( \frac{6(T+T^2)\Lambda K}{\delta} \right) \left(\sum_{t=0}^{T-1} \gamma^t \right)^2 
\end{align*}
which is the desired result.
\end{proof}

We now continue by upper bounding another term of Equation \eqref{eq:regret}, that is:
\begin{align*}
\frac{2}{K} \sum_{t=0}^{T-1} \frac{f_t}{\tilde{x}^*_{t} + 1} - \sum_{t=0}^{T-1} \frac{f_t}{n_t^*}.
\end{align*}

\begin{lemma}[Exploration error]\label{lemma:exploration-lemma}Let $y \in \mathcal{Y}$, and define $q(y)$ as the smallest integer in $\left\{y+1, \dots, T-1 \right\}$ such that $\sum_{i=y}^{q(y)} f_i \ge 0$. Let $\bm{\tilde{x}}^*$ be the solution of the following optimization problem:
\begin{equation}
\begin{aligned} 
\min_{\bm{x}} \quad & \sum_{i=1}^{K} \frac{f_t}{x_t} \\
\textrm{s.t.} \quad &  \sum_{t=0}^{T-1} x_t = b - T \\
  & x_i \ge x_{i+1}, \quad \forall i \in \{1, \dots, K-1\}  \\
  & x_i \ge 0, \quad \forall i \in \{1, \dots, K\} \\
  & x_y = x_{y+1} = \dots = x_{q(y)}, \quad \forall y \in \mathcal{Y}.
\end{aligned}
\end{equation}
Then,
\begin{align*}
    \frac{2}{K} \sum_{t=0}^{T-1} \frac{f_t}{\tilde{x}^*_{t} + 1} - \sum_{t=0}^{T-1} \frac{f_t}{n_t^*} \le \frac{c+1}{c-1} \sum_{t=0}^{T-1} \frac{f_t}{x^*_t},
\end{align*}
where $c$ is such that $cT = b$, and $\bm{x}^*$ is the solution of the following optimization problem:
\begin{equation}
\begin{aligned} 
\min_{\bm{x}} \quad & \sum_{i=1}^{K} \frac{f_t}{x_t} \\
\textrm{s.t.} \quad &  \sum_{t=0}^{T-1} x_t = \Lambda \\
  & x_i \ge x_{i+1}, \quad \forall i \in \{1, \dots, K-1\}  \\
  & x_i \ge 0, \quad \forall i \in \{1, \dots, K\} \\
  & x_y = x_{y+1} = \dots = x_{q(y)}, \quad \forall y \in \mathcal{Y}.
\end{aligned}
\end{equation}
\end{lemma}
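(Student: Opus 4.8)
The plan is to write $\bm{\tilde{x}}^*$ in closed form relative to $\bm{x}^*$ via the budget–sensitivity result, then pay separately for the shift $+1$ and for the change of budget with elementary inequalities, and finally compare against $\bm{n}^*$ through a relaxation argument. Both the optimization problem defining $\bm{\tilde{x}}^*$ and the one defining $\bm{x}^*$ are identical except for the total budget, which is $b-T$ in the former and $\Lambda$ in the latter; they share the monotonicity, nonnegativity, and grouping constraints $x_y=\dots=x_{q(y)}$, and each $q(y)$ is well defined by Lemma~\ref{lemma:var-funct-property}. Hence Lemma~\ref{lemma:restricted-budget-opt} applies with $\alpha=(b-T)/\Lambda$, giving $\tilde{x}^*_t=\frac{b-T}{\Lambda}x^*_t$ for every $t$. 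Using $b=cT$ and $\Lambda=Kb$ this ratio simplifies to $\frac{b-T}{\Lambda}=\frac{c-1}{Kc}$, and the hypothesis $b\ge 2T$ (i.e.\ $c\ge 2$) guarantees $b-T\ge T$, so the requirement $\Lambda'\ge T$ of Lemma~\ref{lemma:restricted-budget-opt} is satisfied.

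Next I would dispose of the shift $+1$ block by block. The equality constraints partition $\{0,\dots,T-1\}$ into maximal blocks on which the variables are constant and whose coefficient sums are nonnegative: on each block $\{y,\dots,q(y)\}$ we have $\sum_{t=y}^{q(y)}f_t\ge 0$ by the definition of $q(y)$, and on each singleton block $f_t\ge 0$. Writing $\tilde{y}$ for the common value of $\tilde{x}^*_t$ on a block, this gives
\begin{align*}
\sum_{t=0}^{T-1}\frac{f_t}{\tilde{x}^*_t+1}=\sum_{\text{blocks}}\frac{\sum_{t\in\text{block}}f_t}{\tilde{y}+1}\le\sum_{\text{blocks}}\frac{\sum_{t\in\text{block}}f_t}{\tilde{y}}=\sum_{t=0}^{T-1}\frac{f_t}{\tilde{x}^*_t},
\end{align*}
where the middle inequality is valid precisely because every block coefficient is nonnegative. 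This is the delicate step: one must resist bounding term by term, since individual $f_t$ may be negative and would flip the inequality. Substituting the closed form $\tilde{x}^*_t=\frac{c-1}{Kc}x^*_t$ yields $\sum_t\frac{f_t}{\tilde{x}^*_t}=\frac{Kc}{c-1}\sum_t\frac{f_t}{x^*_t}$, hence $\frac{2}{K}\sum_t\frac{f_t}{\tilde{x}^*_t+1}\le\frac{2c}{c-1}\sum_t\frac{f_t}{x^*_t}$.

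Finally I would close the gap to $\bm{n}^*$. Since the problem defining $\bm{x}^*$ is a continuous relaxation of \eqref{sys:opt-prob} (it replaces $x_t\in\mathbb{N}_+$ by $x_t\ge 0$), and since appending the grouping constraints does not change the continuous optimum by Lemma~\ref{lemma:opt-lemma-variance-funct}, its optimal value is no larger than that of the integer problem, i.e.\ $\sum_t\frac{f_t}{x^*_t}\le\sum_t\frac{f_t}{n^*_t}$. Combining this with the previous display,
\begin{align*}
\frac{2}{K}\sum_{t=0}^{T-1}\frac{f_t}{\tilde{x}^*_t+1}-\sum_{t=0}^{T-1}\frac{f_t}{n^*_t}\le\frac{2c}{c-1}\sum_{t=0}^{T-1}\frac{f_t}{x^*_t}-\sum_{t=0}^{T-1}\frac{f_t}{x^*_t}=\frac{c+1}{c-1}\sum_{t=0}^{T-1}\frac{f_t}{x^*_t},
\end{align*}
using $\frac{2c}{c-1}-1=\frac{c+1}{c-1}$, which is exactly the claim. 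The only genuinely nontrivial ingredients are the legitimacy of budget sensitivity (guaranteed by $b-T\ge T$) and the block-level handling of the $+1$; the rest is bookkeeping with the identities $b=cT$ and $\Lambda=Kb$.
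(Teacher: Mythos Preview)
Your proof is correct and follows essentially the same route as the paper: budget sensitivity (Lemma~\ref{lemma:restricted-budget-opt}) to relate $\tilde{\bm{x}}^*$ and $\bm{x}^*$, a block-level inequality to drop the $+1$, and the relaxation $\sum_t f_t/x^*_t\le\sum_t f_t/n^*_t$ to finish. The only cosmetic difference is that the paper introduces an intermediate problem with budget $K(b-T)$ and applies budget sensitivity twice, whereas you apply it once directly between budgets $b-T$ and $\Lambda$; your version is slightly more streamlined and your explicit block-level justification for the $+1$ step is more careful than the paper's one-line appeal to the grouping constraints.
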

\begin{proof}
Consider the following optimization problem: 
\begin{equation}
\begin{aligned} 
\min_{\bm{x}} \quad & \sum_{i=1}^{K} \frac{f_t}{x_t} \\
\textrm{s.t.} \quad &  \sum_{t=0}^{T-1} x_t = K(b - T) \\
  & x_i \ge x_{i+1}, \quad \forall i \in \{1, \dots, K-1\}  \\
  & x_i \ge 0, \quad \forall i \in \{1, \dots, K\}, \\
  & x_y = x_{y+1} = \dots = x_{q(y)}, \quad \forall y \in \mathcal{Y}
\end{aligned}
\end{equation}
and let $\bm{\bar{x}}^*$ be its optimal solution. Then, due to Lemma \ref{lemma:restricted-budget-opt}, $K\bm{\tilde{x}}^* = \bm{\bar{x}}^*$. Therefore, we have that:
\begin{align*}
    \frac{2}{K} \sum_{t=0}^{T-1} \frac{f_t}{\tilde{x}^*_{t} + 1} - \sum_{t=0}^{T-1} \frac{f_t}{n_t^*} = 2 \sum_{t=0}^{T-1} \frac{f_t}{\bar{x}^*_{t} + K} - \sum_{t=0}^{T-1} \frac{f_t}{n_t^*} 
\end{align*}
Furthermore, due to the fact that $\bar{x}^*_{y} = \bar{x}^*_{y+1} = \dots = \bar{x}^*_{q(y)}$ for all $y \in \mathcal{Y}$, we have that:
\begin{align*}
    2 \sum_{t=0}^{T-1} \frac{f_t}{\bar{x}^*_{t} + K} - \sum_{t=0}^{T-1} \frac{f_t}{n_t^*} \le 2 \sum_{t=0}^{T-1} \frac{f_t}{\bar{x}^*_{t}} - \sum_{t=0}^{T-1} \frac{f_t}{n_t^*} 
\end{align*}

At this point, we proceed by lower bounding:
\begin{align*}
    \sum_{t=0}^{T-1} \frac{f_t}{n_t^*}.
\end{align*}
More specifically, consider the following optimization problem:
\begin{equation}
\begin{aligned} 
\min_{\bm{x}} \quad & \sum_{i=1}^{K} \frac{f_t}{x_t} \\
\textrm{s.t.} \quad &  \sum_{t=0}^{T-1} x_t = \Lambda \\
  & x_i \ge x_{i+1}, \quad \forall i \in \{1, \dots, K-1\}  \\
  & x_i \ge 0, \quad \forall i \in \{1, \dots, K\}, \\
  & x_y = x_{y+1} = \dots = x_{q(y)}, \quad \forall y \in \mathcal{Y},
\end{aligned}
\end{equation}
and let $\bm{x}^*$ be its optimal solution. Then, we have that:
\begin{align}\label{eq:approximation-error-eq1}
    \sum_{t=0}^{T-1} \frac{f_t}{n_t^*} \ge \sum_{t=0}^{T-1} \frac{f_t}{x^*_t}.
\end{align}
To prove Equation \eqref{eq:approximation-error-eq1}, it is sufficient to drop the integer constraints from the \eqref{sys:opt-prob}, then, due to Lemma \ref{lemma:opt-lemma-variance-funct}, we can impose the equality constraints on the resulting optimization problem, and finally, we enlarge the feasible region by setting the constraints $x_i \ge 0$. 

At this point, we have:
\begin{align}\label{eq:approximate-error-eq2}
    \frac{2}{K} \sum_{t=0}^{T-1} \frac{f_t}{\tilde{x}^*_{t} + 1} - \sum_{t=0}^{T-1} \frac{f_t}{n_t^*} \le 2 \sum_{t=0}^{T-1} \frac{f_t}{\bar{x}^*_t} - \sum_{t=0}^{T-1} \frac{f_t}{x^*_t}. 
\end{align}
By Lemma \ref{lemma:restricted-budget-opt}, we have that:
\begin{align*}
    \bar{x}^*_t = \frac{K(b-T)}{\Lambda} x^*_t = \frac{K(b-T)}{Kb} x^*_t = \frac{b-T}{b} x^*_t = \frac{cT-T)}{cT} x^*_t = \frac{c-1}{c} x^*_t 
\end{align*}
Plugging this result into Equation \eqref{eq:approximate-error-eq2}, we obtain:
\begin{align*}
    \frac{2}{K} \sum_{t=0}^{T-1} \frac{f_t}{\tilde{x}^*_{t} + 1} - \sum_{t=0}^{T-1} \frac{f_t}{n_t^*} & \le \left( \frac{2c}{c-1}  -1 \right) \sum_{t=0}^{T-1} \frac{f_t}{x^*_t} \\ & = \frac{c+1}{c-1} \sum_{t=0}^{T-1} \frac{f_t}{x^*_t}.
\end{align*}

\end{proof}

We now prove Theorem \ref{theo:iterative-regret}.

\iterativeregret*
\begin{proof}
From Lemma \ref{lemma:regret-decomposition}, we upper bound Equation \eqref{eq:regret} by:
\begin{equation}\label{eq:theo-regret-eq1}
\begin{aligned}
    \frac{2}{K^2} & \sum_{i=1}^{K} \sum_{t=0}^{T-1} f_t \left( \frac{1}{\bar{N}_{t,i}} - \frac{1}{\tilde{x}^*_t + 1} \right) + \frac{2}{K^2} \sum_{i=1}^{K} \sum_{t: f_t < 0} \frac{|f_t|}{\bar{N}_{t,i}}  - \sum_{t: f_t < 0} \frac{|f_t|}{\sum_{i=1}^{k} N_{t,i}} \\
    & + \frac{2}{K} \sum_{t=0}^{T-1} \frac{f_t}{\tilde{x}^*_{t} + 1} - \sum_{t=0}^{T-1} \frac{f_t}{n_t^*}.
\end{aligned}
\end{equation}
We notice that:
\begin{align*}
    \frac{2}{K^2} \sum_{i=1}^{K} \sum_{t: f_t < 0} \frac{|f_t|}{\bar{N}_{t,i}}  - \sum_{t: f_t < 0} \frac{|f_t|}{\sum_{i=1}^{k} N_{t,i}} \le \frac{2}{K} \sum_{t: f_t < 0} {f_t}
\end{align*}
Combining this inequality with Equation \eqref{eq:theo-regret-eq1}, we have that:
\begin{align}\label{eq:theo-regret-eq2}
    \frac{2}{K^2} \sum_{i=1}^{K} \sum_{t=0}^{T-1} f_t \left( \frac{1}{\bar{N}_{t,i}} - \frac{1}{\tilde{x}^*_t + 1} \right) + \frac{2}{K} \sum_{t: f_t < 0} {|f_t|}   + \frac{2}{K} \sum_{t=0}^{T-1} \frac{f_t}{\tilde{x}^*_{t} + 1} - \sum_{t=0}^{T-1} \frac{f_t}{n_t^*}.
\end{align}
Using Lemma \ref{lemma:iterative-sampling-error}, this can in turn be upper-bounded by:
\begin{align*}
192\left(\frac{b}{\Lambda}\right)^{\frac{3}{2}} \log\left( \frac{6(T+T^2)\Lambda K}{\delta} \right) \left(\sum_{t=0}^{T-1} \gamma^t \right)^2 + \frac{2}{K} \sum_{t: f_t < 0} {|f_t|}   + \frac{2}{K} \sum_{t=0}^{T-1} \frac{f_t}{\tilde{x}^*_{t} + 1} - \sum_{t=0}^{T-1} \frac{f_t}{n_t^*}.
\end{align*}
Applying Lemma \ref{lemma:exploration-lemma}, we further bound the previous equation with: 
\begin{align}\label{eq:theo-regret-eq3}
    192\left(\frac{b}{\Lambda}\right)^{\frac{3}{2}} \log\left( \frac{6(T+T^2)\Lambda K}{\delta} \right) \left(\sum_{t=0}^{T-1} \gamma^t \right)^2 + \frac{2}{K} \sum_{t: f_t < 0} {|f_t|} + \frac{c+1}{c-1}\sum_{t=0}^{T-1} \frac{f_t}{x^*_t},
\end{align}
where $x^*_t$ is the solution of the following optimization problem:
\begin{equation}
\begin{aligned} 
\min_{\bm{x}} \quad & \sum_{i=1}^{K} \frac{f_t}{x_t} \\
\textrm{s.t.} \quad &  \sum_{t=0}^{T-1} x_t = \Lambda \\
  & x_i \ge x_{i+1}, \quad \forall i \in \{1, \dots, K-1\}  \\
  & x_i \ge 0, \quad \forall i \in \{1, \dots, K\}, \\
  & x_y = x_{y+1} = \dots = x_{q(y)}, \quad \forall y \in \mathcal{Y},
\end{aligned}
\end{equation}
Furthermore, since:
\begin{align*}
    \sum_{t=0}^{T-1} \frac{f_t}{n^*_t} \ge \sum_{t=0}^{T_1} \frac{f_t}{x^*_t},
\end{align*}
Equation \ref{eq:theo-regret-eq3} reduces to:
\begin{align*}
    192\left(\frac{b}{\Lambda}\right)^{\frac{3}{2}} \log\left( \frac{6(T+T^2)\Lambda K}{\delta} \right) \left(\sum_{t=0}^{T-1} \gamma^t \right)^2 + \frac{2}{K} \sum_{t: f_t < 0} {|f_t|} + \frac{c+1}{c-1}\sum_{t=0}^{T-1} \frac{f_t}{n^*_t},
\end{align*}
The proof follows by noticing that:
\begin{align*}
    \frac{c+1}{c-1} = \frac{b+T}{b-T} \le 3,
\end{align*}
and, by isolating $\sum_{t=0}^{T-1} \frac{f_t}{N_t}$ in Equation \eqref{eq:regret}, and by noticing that $\Var[J_{\bm{n}^*}]$ scales with $\Lambda^{-1}$.
\end{proof}

\subsection{Additional Technical Details}\label{app-subsec:tech-details}
In this section, we provide additional techincal details that have been mentioned in the main text. More specifically, we provide (i) a formal description of the transformation between optimization problems and how we applied this technique in \rido, (ii) difficulties in deriving closed-form solutions for the optimization problems of interest, (iii) and theoretical evidence for the sub-optimality of non-adaptive methods whose variance cannot scale with the variance of the optimal DCS).

\subsubsection{Additional Details on solving the empirical optimization problem}
We begin with a more in-depth discussion of the transformation between optimization problems. Let $c_t \in \mathbb{R}$ for each $t \in \{0, \dots, T-1 \}$, and define $\mathcal{Y} = \left\{i \in \left\{0, \dots, T-1 \right\} : c_i < 0 \right\}$. Let $y \in \mathcal{Y}$, and define $q(y)$ as the smallest integer in $\left\{y+1, \dots, T-1 \right\}$ such that $\sum_{i=y}^{q(y)} c_i \ge 0$. Due to Lemma \ref{lemma:var-funct-property} we know that, if $\left(c_0, \dots, c_{T-1}\right) = \left(f_0, \dots, f_{T-1}\right)$, then $q(y)$ is always well-defined. At this point, consider the continuous relaxation of the original optimization problem, namely:
\begin{equation}\label{sys:opt-prob-relaxed}
\begin{aligned} 
\min_{\bm{n}} \quad & \sum_{t=0}^{T-1} \frac{1}{n_t} \left( \gamma^{2t} \Var(R_t) + 2 \sum_{t'=t+1}^{T-1} {\gamma^{t+t'}} \Cov(R_t, R_{t'}) \right)  \\
\textrm{s.t.} \quad &  \sum_{t=0}^{T-1} n_t = \Lambda \\
  & n_t \ge n_{t+1}, \quad \forall t \in \{0, \dots, T-2\}  \\
  & n_t \ge 1 , \quad \forall t \in \{0, \dots, T-1\}.
\end{aligned}
\end{equation}
Due to Lemma \ref{lemma:var-funct-property} and Lemma \ref{lemma:opt-lemma-variance-funct}, we know that the following optimization problem:
\begin{equation}\label{sys:opt-prob-relaxed}
\begin{aligned} 
\min_{\bm{n}} \quad & \sum_{t=0}^{T-1} \frac{1}{n_t} \left( \gamma^{2t} \Var(R_t) + 2 \sum_{t'=t+1}^{T-1} {\gamma^{t+t'}} \Cov(R_t, R_{t'}) \right)  \\
\textrm{s.t.} \quad &  \sum_{t=0}^{T-1} n_t = \Lambda \\
  & n_t \ge n_{t+1}, \quad \forall t \in \{0, \dots, T-2\}  \\
  & n_t \ge 1 , \quad \forall t \in \{0, \dots, T-1\} \\
  & n_y = n_{y+1} = \dots = n_{q(y)} , \quad \forall y \in \mathcal{Y},
\end{aligned}
\end{equation}
has the same optimal solution of \eqref{sys:opt-prob-relaxed}. At this point, to define the transformed problem it is sufficient to introduce additional variables $y_i$ for any contiguous timesteps where $n_i = n_{i+1} = \dots = n_{i+k}$ holds for some integers $i, k$.\footnote{More precisely, we notice that $n_i = n_{i+1} = \dots = n_{i+k}$ might involve multiple constraints in the formulation of \eqref{sys:opt-prob-relaxed}. In this sense, we need to refer to the largest intervals in which these constraints are enforced, otherwise we might introduce multiple variables that refer to the same original optimization variable.} The optimization variables $n_i, n_{i+1}, \dots, n_{i+k+1}$ will be substituted with $y_i$. The objective function will be modified accordingly, namely:
\begin{align*}
\frac{f_i}{n_i} + \dots + \frac{f_{i+k}}{n_{i+k}},
\end{align*} 
is replaced with:
\begin{align}\label{eq:numerator}
\frac{f_y + \dots + f_{q(y)}}{y_i}.
\end{align}
Consequently, any numerator in the resulting objective function of the transformed problem will be greater or equal than $0$. It is easy to verify that, in this case, the resulting objective function is convex in the considered optimization domain. Finally, as a last remark, we notice that the constraint $\sum_{t=0}^{T-1} n_t = \Lambda$ needs to be modified. More specifically, if $y_i$ substitutes $l_i$ variables, then its contribution within the budget constraint summation will be given by $y_i l_i$.

As discussed in Section \ref{sec:emp-problem}, in \rido we adopt a procedure that is inspired by the aforementioned theoretical properties of the continuous relaxation of the optimization problem \eqref{sys:opt-prob}. Nevertheless, it has to be noticed that a modification needs to be taken into account when replacing exact quantities (i.e., $f_t$) with their estimation and exploration bonuses (which, in the following, we refer to as $\hat{f}_t$ for brevity). More specifically, in general, contrary to what highlighted in Lemma \ref{lemma:var-funct-property} for the original objective function, when using $\hat{f}_t$ it might happen that $q(y)$ is not well-defined for every possible $y$. Indeed, due to the noise that is present in the estimation process, there might exists $\bar{t}$ such that $\hat{f}_{\bar{t}} < 0$ and $\sum_{t=\bar{t}}^{t'} \hat{f}_t < 0$ for all $t' > \bar{t}$. Whenever this condition is verified, we adopt the following heuristic to make the computation tractable. If $\bar{t} = 0$, then we just set the DCS of the current mini-batch to the uniform-in-the-horizon one. When $\bar{t} \ne 0$, instead, we group together $n_{\bar{t}}, \dots, n_{T-1}$ and we introduce a new variable $y$ that will divide, in the objective function, $\hat{f}_{\bar{t} - 1}$. 
As a final remark, however, we notice that these modifications do not impact on the theoretical properties of \rido. Indeed, Lemma \ref{lemma:high-prob-convex-problem}, shows that, with probability at least $1-\delta$, the aforementioned ill-conditions do not happen. As a consequence, we can study the high-probability behavior of \rido assuming access to the solution of the transformed optimization problem discussed at the beginning of this section (that preserves the optimal solution of the continuous relaxation of \eqref{sys:emp-opt-prob}). 

\subsubsection{On closed-form solutions}
We now continue by discussing the closed-form solutions of the optimization problems of interests. First of all, optimization problems \eqref{sys:opt-prob} and \eqref{sys:emp-opt-prob} are integer and non-linear problems. Even neglecting the non-linear dependency on $n$, we remark that solving integer and linear problem is NP-hard. At this point, one might resort to study their continuous relaxations. In the following, we focus on the continuous relaxation of \eqref{sys:opt-prob} (indeed, as noticed at the end of the previous section, the continuous relaxation of \eqref{sys:emp-opt-prob} requires additional effort). As mentioned above, whenever $f_t < 0$ holds for some $t \in \{0, \dots, T-1\}$, the continuous relaxation of $f_t$ is non-convex. Nevertheless, from Lemma \ref{lemma:opt-lemma-variance-funct}, we know that we can always derive an equivalent convex problem (where the numerator in the objective function is always greater or equal than $0$) that preserves the optimal solution. For this reason, we now report the KKT conditions under the assumption that $f_t \ge 0$ holds for all $t \in \{0, \dots, T-1\}$.\footnote{Under the assumption that $f_t \ge 0$ holds, the problem is convex, and the KKT conditions provides necessary and sufficient conditions for optimality.}

\begin{equation}\label{sys:kkt}
	\begin{cases}
			-\frac{f_t}{n_t^2} + \eta - \xi_t - \mu_{t} \bm{1}\left\{t < T-1 \right\} + \mu_{t-1} \bm{1} \left\{t > 0 \right\} = 0 & \text{ } \forall t \in  \{0, \dots, T-1 \}\\
			\xi_t (1-{n}_t) = 0 & \text{ } \forall t \in \{0, \dots, T-1\} \\
			\mu_t (n_{t+1} - n_t) = 0 & \text{ } \forall t \in \{0, \dots, T-2 \} \\
			\eta (\sum_{t=0}^{T-1} {n}_t - \Lambda) = 0 \\
			\sum_{t=0}^{T-1} {n}_t - \Lambda = 0 \\
			\mu_t \ge 0 & \text{ } \forall t \in  \{0, \dots, T-2 \} \\
			\xi_t \ge 0 & \text{ } \forall t \in \{0, \dots, T-1 \} 
		\end{cases}.
\end{equation}

At this point, we notice that a similar problem has been solved in \citet{poiani2022truncating} for deriving a closed-form solutions that minimizes confidence intervals around the return estimator.  In that situation, however, the constraints $n_t \ge n_{t+1}$ were not present since they were automatically satisfied by any optimal solution (and, consequently, they were removed from the optimization problem of interest). The main challenge in our setting is, indeed, the presence of $\mu_t (n_{t+1} - n_t)$, together with the terms related to $\mu_t$ in the first Equation of \eqref{sys:kkt}. These additional components within \eqref{sys:kkt} prevented us to derive a closed-form solutions of the continuous relaxation \eqref{sys:opt-prob} (and, \eqref{sys:emp-opt-prob}).

\subsubsection{Theoretical Sub-optimality of pre-determined schedules}
Finally, we conclude by providing theoretical evidence on the reasons why claims similar to the one of Theorem~\ref{theo:iterative-regret} does not hold for pre-determined schedules (i.e, the uniform-in-the-horizon one and the robust DCS of \citet{poiani2022truncating}). In other words, we show that it is not possible for these methods to obtain a variance expression which is proportional, up to constant multiplicative factors, to the one of the optimal baseline.

\begin{proposition}[Sub-optimality of the Uniform Strategy]\label{prop:uni}
Let $f_0 \ne 0$ and $f_i = 0$ for all $i \ge 1$. Let $T > 2$. Let $\bm{n}_u = \left( \frac{T}{\Lambda}, \dots, \frac{T}{\Lambda} \right)$. For any value of budget $\Lambda$, it does not exist a universal constant $c > 0$ for which the following holds:
\begin{align}\label{prop-uni:claim}
\Var_{\bm{n_u}}\left[ \hat{J}_{\bm{n_u}}(\pi) \right] \le c \Var_{\bm{n^*}}\left[ \hat{J}_{\bm{n^*}}(\pi) \right].
\end{align}
\end{proposition}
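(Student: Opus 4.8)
The plan is to make both sides of \eqref{prop-uni:claim} fully explicit for this degenerate instance and then show that their ratio scales linearly with the horizon $T$, so that no constant free of the problem parameters can bound it. First I would evaluate the left-hand side. By the deterministic case of Theorem~\ref{theo:dcs-variance}, namely Equation~\eqref{eq:dcs-variance-det}, the variance of any deterministic DCS equals $\sum_{t=0}^{T-1} f_t/n_t$, and for the uniform schedule $n_t = \Lambda/T$ for every $t$. Since only $f_0 \neq 0$, every summand except the one at $t=0$ vanishes, giving $\Var_{\bm{n}_u}[\hat{J}_{\bm{n}_u}(\pi)] = f_0/(\Lambda/T) = T f_0/\Lambda$. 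I would also record that $f_0 > 0$: because $\sum_{t=0}^{T-1} f_t = \Var[\sum_{t=0}^{T-1}\gamma^t R_t] \ge 0$ reduces to $f_0$ under the hypothesis $f_i = 0$ for $i \ge 1$, and $f_0 \neq 0$ then forces $f_0 > 0$.

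Next I would compute the optimal deterministic variance by solving \eqref{sys:opt-prob} for this instance. With $f_i = 0$ for $i \ge 1$ the objective collapses to $f_0/n_0$, so minimizing it is equivalent to maximizing $n_0$ subject to $\sum_t n_t \le \Lambda$, the monotonicity constraints $n_0 \ge n_1 \ge \dots \ge n_{T-1}$, and the positivity/integrality $n_t \ge 1$. The latter constraints force every tail step to carry at least one sample, so the most budget that can be concentrated on the first step is obtained by setting $n_1 = \dots = n_{T-1} = 1$, which yields $n_0^* = \Lambda - (T-1)$ (feasible whenever $\Lambda \ge T$, so that $n_0^* \ge 1 = n_1$). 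Hence $\Var_{\bm{n}^*}[\hat{J}_{\bm{n}^*}(\pi)] = f_0/(\Lambda - T + 1)$.

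Finally I would form the ratio
\begin{align*}
\frac{\Var_{\bm{n}_u}[\hat{J}_{\bm{n}_u}(\pi)]}{\Var_{\bm{n}^*}[\hat{J}_{\bm{n}^*}(\pi)]} = \frac{T f_0/\Lambda}{f_0/(\Lambda - T + 1)} = T\left(1 - \frac{T-1}{\Lambda}\right),
\end{align*}
and observe that, for fixed $T$ and $\Lambda \to \infty$, this tends to $T$ from below. Consequently, for any candidate constant $c$ it suffices to pick an instance with $T > c$ and a budget $\Lambda$ large enough that the ratio exceeds $c$: the inequality \eqref{prop-uni:claim} then fails. Thus no constant independent of the problem (in particular of $T$) can validate \eqref{prop-uni:claim} across all horizons and budgets, in sharp contrast with the absolute multiplicative constant delivered by \rido in Theorem~\ref{theo:iterative-regret}.

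I expect the only delicate point to be the second step, namely arguing rigorously that the optimum of \eqref{sys:opt-prob} is $n_0^* = \Lambda - T + 1$: one must invoke both the monotonicity constraints $n_t \ge n_{t+1}$ and the positivity $n_t \ge 1$ to conclude that the tail steps cannot be starved below a single sample, which is precisely what caps $n_0$ at $\Lambda - (T-1)$. Everything else is elementary algebra together with a single limit in $\Lambda$.
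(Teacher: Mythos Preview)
Your proposal is correct and follows essentially the same approach as the paper: both compute $\Var_{\bm{n}_u}=Tf_0/\Lambda$ and $\Var_{\bm{n}^*}=f_0/(\Lambda-T+1)$, then show the ratio grows like $T$; the paper does this by upper-bounding the optimal variance by $2f_0/\Lambda$ to obtain $c\ge T/2$ directly, whereas you form the exact ratio $T(1-(T-1)/\Lambda)$ and send $\Lambda\to\infty$. Your additional remark that $f_0>0$ (via $\sum_t f_t=\Var[\sum_t\gamma^tR_t]\ge0$) makes explicit a step the paper uses silently when dividing through by $f_0$.
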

\begin{proof}
Under the assumption that $f_0 \ne 0$ and $f_i = 0$ for all $i \ge 1$, we have that:
\begin{align}\label{prop-uni:eq1}
\Var_{\bm{n_u}}\left[ \hat{J}_{\bm{n_u}}(\pi) \right] = \frac{T}{\Lambda} f_0,
\end{align}
and, from Theorem \ref{theo:dcs-variance}:
\begin{align}\label{prop-uni:eq2}
\Var_{\bm{n^*}}\left[ \hat{J}_{\bm{n^*}}(\pi) \right] = \frac{1}{\Lambda - (T-1)} f_0.
\end{align}
Furthermore, if $T > 2$, the variance of the optimal DCS can be upper bounded by:
\begin{align}\label{prop-uni:eq3}
\Var_{\bm{n^*}}\left[ \hat{J}_{\bm{n^*}}(\pi) \right] = \frac{1}{\Lambda - (T-1)} f_0 \le \frac{2}{\Lambda} f_0.
\end{align}
At this point, proceed by contradiction and suppose that Equation \eqref{prop-uni:claim} holds. Then, it follows that the following equation should holds as well for some universal constant $c$:
\begin{align}\label{prop-uni:eq4}
\frac{T}{\Lambda} f_0 \le c \frac{2}{\Lambda} f_0 .
\end{align}
Equation \eqref{prop-uni:eq4} reduces to:
\begin{align}
c \ge \frac{T}{2},
\end{align}
which contradicts the claim, thus concluding the proof.
\end{proof}

\begin{proposition}[Sub-optimality of the Robust Strategy of \citet{poiani2022truncating}]\label{prop:robust}
Let $f_{0} \ne 0$ and $f_i = 0$ for all $i \ge 1$. Let $\bm{\tilde{n}}$ be the robust DCS of \citet{poiani2022truncating}. Let $T > 2$ and $d_t = \frac{\gamma^t ( \gamma^t + \gamma^{t+1} - 2\gamma^T)}{1-\gamma}$ and suppose that $\Lambda \ge \Lambda_0 \coloneqq \frac{\sum_{t=0}^{T-1} \sqrt{d_t}}{\sqrt{d_{T-1}}}$. For any value of budget $\Lambda \ge \Lambda_0$, it does not exist a universal constant $c > 0$ for which the following holds
\begin{align}\label{prop-rob:claim}
\Var_{\bm{\tilde{n}}}\left[ \hat{J}_{\bm{\tilde{n}}}(\pi) \right] \le c \Var_{\bm{n^*}}\left[ \hat{J}_{\bm{n^*}}(\pi) \right].
\end{align}
\end{proposition}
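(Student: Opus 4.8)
The plan is to follow the template of Proposition~\ref{prop:uni}: specialize the variance formula to the single surviving term, write down the closed form of the robust schedule, and then show that the ratio of the two variances forces $c$ to exceed a quantity that is unbounded over the problem parameters, so that no instance-independent constant can exist.

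First I would specialize Equation~\eqref{eq:dcs-variance-det}. Since $f_0 \neq 0$ and $f_i = 0$ for $i \geq 1$, for any deterministic DCS $\bm n$ we have $\Var_{\bm n}[\hat J_{\bm n}(\pi)] = f_0/n_0$. Exactly as in Equations~\eqref{prop-uni:eq2}--\eqref{prop-uni:eq3}, the optimal DCS of \eqref{sys:opt-prob} allocates the minimum of one sample to every step $t \geq 1$ and the remainder to $t=0$ (this respects monotonicity since $\Lambda-(T-1)\ge 1$), giving $n_0^* = \Lambda - (T-1)$ and hence $\Var_{\bm{n^*}}[\hat J_{\bm{n^*}}(\pi)] = f_0/(\Lambda - (T-1)) \leq 2f_0/\Lambda$, the last inequality holding once $\Lambda \geq 2(T-1)$, which is satisfied by $\Lambda \ge \Lambda_0$ in the regime used below.

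Next I would identify $\tilde n_0$. The robust schedule of \citet{poiani2022truncating} minimises $\sum_{t=0}^{T-1} d_t/n_t$ under the budget and monotonicity constraints; its continuous minimiser is $n_t \propto \sqrt{d_t}$, i.e.\ $\tilde n_t = \Lambda \sqrt{d_t}/\sum_{t'=0}^{T-1}\sqrt{d_{t'}}$, which is automatically decreasing because $d_t$ is decreasing. The role of the hypothesis $\Lambda \geq \Lambda_0 = \sum_{t'=0}^{T-1}\sqrt{d_{t'}}/\sqrt{d_{T-1}}$ is precisely that the smallest entry $\tilde n_{T-1} = \Lambda\sqrt{d_{T-1}}/\sum_{t'}\sqrt{d_{t'}}$ is at least one, so the constraints $n_t\geq1$ are inactive and the closed form above (matching Theorem~3.3 and Theorem~B.10 of \citet{poiani2022truncating}, up to rounding) is valid. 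In particular $\tilde n_0 = \Lambda\sqrt{d_0}/\sum_{t'=0}^{T-1}\sqrt{d_{t'}}$, so $\Var_{\bm{\tilde n}}[\hat J_{\bm{\tilde n}}(\pi)] = f_0\sum_{t'=0}^{T-1}\sqrt{d_{t'}}/(\Lambda\sqrt{d_0})$.

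With these two expressions, assuming~\eqref{prop-rob:claim} holds and dividing, the constant must satisfy
\begin{align*}
c \geq \frac{\Var_{\bm{\tilde n}}[\hat J_{\bm{\tilde n}}(\pi)]}{\Var_{\bm{n^*}}[\hat J_{\bm{n^*}}(\pi)]} \geq \frac{f_0\sum_{t'=0}^{T-1}\sqrt{d_{t'}}/(\Lambda\sqrt{d_0})}{2f_0/\Lambda} = \frac{\sum_{t'=0}^{T-1}\sqrt{d_{t'}}}{2\sqrt{d_0}},
\end{align*}
where I used the upper bound on $\Var_{\bm{n^*}}$ in the denominator to turn it into a lower bound on the ratio. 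It then remains to prove that $\sum_{t'=0}^{T-1}\sqrt{d_{t'}}/\sqrt{d_0}$ is bounded by no absolute constant; this last step is the main obstacle, since it requires controlling $d_t$ across the parameters. I would handle it by letting $\gamma \to 1$: a Taylor expansion of $d_t = \gamma^t(\gamma^t+\gamma^{t+1}-2\gamma^T)/(1-\gamma)$ around $\gamma=1$ gives $d_t \to 2(T-t)-1$, whence $\sum_{t'}\sqrt{d_{t'}}/\sqrt{d_0}\to \sum_{k=1}^{T}\sqrt{2k-1}/\sqrt{2T-1} = \Omega(T)$, using $\sum_{k=1}^T\sqrt{2k-1}\geq \frac23 T^{3/2}$ and $\sqrt{2T-1}\leq\sqrt{2T}$ (and in this regime $\Lambda_0 = \Omega(T^{3/2}) \ge 2(T-1)$, validating the step above). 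Hence for $T$ large and $\gamma$ close enough to $1$ the lower bound on $c$ exceeds any prescribed value, contradicting the existence of a universal constant and completing the proof.
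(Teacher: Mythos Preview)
Your proof is correct and follows essentially the same route as the paper: specialize the variance to $f_0/n_0$, use the closed form $\tilde n_0 \propto \Lambda\sqrt{d_0}/\sum_t\sqrt{d_t}$ valid when $\Lambda\ge\Lambda_0$, bound the optimal variance by $2f_0/\Lambda$, and then send $\gamma\to 1$ and $T\to\infty$ to make $\sum_t\sqrt{d_t}/\sqrt{d_0}$ blow up. Your treatment is in fact slightly more careful than the paper's (you make the Taylor expansion $d_t\to 2(T-t)-1$ explicit and you correctly note that the bound $\Var_{\bm{n^*}}\le 2f_0/\Lambda$ needs $\Lambda\ge 2(T-1)$, which you verify via $\Lambda_0=\Omega(T^{3/2})$ in the regime of interest).
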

\begin{proof}
Under the assumption that $f_0 \ne 0$, $f_i = 0$ for all $i \ge 1$, and $\Lambda \ge \Lambda_0$ we have that:\footnote{Notice that the requirement $\Lambda \ge \Lambda_0$ provides a simple closed-form expression for the robust DCS of \citet{poiani2022truncating}. The reader can refer to Theorem $3.3$ and Appendix B of \citet{poiani2022truncating}.}
\begin{align}\label{prop-rob:eq1}
\Var_{\bm{\tilde{n}}}\left[ \hat{J}_{\bm{\tilde{n}}}(\pi) \right] \ge \frac{f_0}{2\Lambda} \frac{ \sum_{t=0}^{T-1} \sqrt{d_t} }{\sqrt{d_0}}.
\end{align}
From Theorem \ref{theo:dcs-variance}:
\begin{align}\label{prop-rob:eq2}
\Var_{\bm{n^*}}\left[ \hat{J}_{\bm{n^*}}(\pi) \right] = \frac{1}{\Lambda - (T-1)} f_0.
\end{align}
Furthermore, if $T > 2$, the variance of the optimal DCS can be upper bounded by:
\begin{align}\label{prop-rob:eq3}
\Var_{\bm{n^*}}\left[ \hat{J}_{\bm{n^*}}(\pi) \right] = \frac{1}{\Lambda - (T-1)} f_0 \le \frac{2}{\Lambda} f_0.
\end{align}
At this point, proceed by contradiction and suppose that Equation \eqref{prop-rob:claim} holds. Then, it follows that the following  equation should hold as well for some universal constant $c$:
\begin{align}\label{prop-rob:eq4}
\frac{f_0}{\Lambda} \frac{ \sum_{t=0}^{T-1} \sqrt{d_t} }{\sqrt{d_0}} \le c \frac{2}{\Lambda} f_0 .
\end{align}
Equation \eqref{prop-rob:eq4} can be rewritten as:
\begin{align}\label{prop-rob:eq5}
c \ge \frac{1}{2} \frac{ \sum_{t=0}^{T-1} \sqrt{d_t} }{\sqrt{d_0}}.
\end{align}
However, if Equation \eqref{prop-rob:eq5} holds, then, also the following holds:
\begin{align}\label{prop-rob:eq6}
c \ge \frac{1}{4} \sum_{t=0}^{T-1} \sqrt{\gamma^t \left( \gamma^t + \gamma^{t+1} - \gamma^{2T} \right)},
\end{align}
which, however, contradicts the claim \footnote{Indeed, it is sufficient to take $T \rightarrow +\infty$, and $\gamma \rightarrow 1$, to show that Equation \eqref{prop-rob:eq6} tends to $+\infty$.}, thus concluding the proof.
\end{proof}

\section{Experiment Details and Additional Results}\label{app:experiments}
In this section, we provide further details on our experimental settings and additional results. Section \ref{app:env-details} contains descriptions on the environments, Section \ref{app:hyper-params} contains details regarding hyper-parameters, and Section \ref{app:add-results} contains additional results.

\subsection{Environment Details}\label{app:env-details}
In this section, we provide additional details on the environments that we used in our experiments. 

\paragraph{Ablation Domains} In Setion \ref{app:add-results}, the reader can find results and ablations that involve the scenarios described as examples in Section \ref{sec:intro}, namely Examples \ref{exe:1} and \ref{exe:2}. We now provide a precise description of these domains. We start with Example \ref{exe:1}, where the reward is gathered only at the end of the estimation horizon $T$. The state space is described by a $1$-dimensional vector that contains only the interaction timestep $t$; the action space is a discrete set $\left\{ 0, 1\right\}$. The agent receives reward $0$ in the first $9$ timesteps. In the last step, instead, it receives $r \sim \mathcal{N} \left(3, 10 \right)$ for action $0$, and $r \sim \mathcal{N} \left(2, 10 \right)$ for action $1$. Concerning Example \ref{exe:2}, instead, the setup is identical to the one of Example \ref{exe:1}, with the only different that the non-zero reward is receives in the first interaction step. The policy that we evaluate is the uniform random.

\paragraph{Continuous Navigation} Here, we describe in more details the 2D continuous navigation environment that we used in our experiments. The state space $\mathcal{S}$ is $2$-dimensional vector $\bm{s} = (s_0, s_1) \in \mathbb{R}^{2}$ such that $s_i \in [0, 92]$ for all $i$. Similarly, the action space $\mathcal{A}$ is a $2$-dimensional vector $\bm{a} = (a_0, a_1)$ such that $a_i \in [-1, 1]$ for all $i$. When the agent takes action $\bm{a}$ in state $\bm{s}$, it transitions to a new state $\bm{s'}$ such that:
\begin{align*}
s'_0 = \max\left\{ 0, \min \left\{ s_0 + q_0, 92 \right\}\right\}, \quad \quad \quad s'_1 = \max\left\{ 0, \min \left\{ s_1 + q_1, 92 \right\}\right\},
\end{align*}
where $q_0 \sim \mathcal{N}\left(a_0, 0.1 \right)$, $q_1 \sim \mathcal{N}\left(a_1, 0.1 \right)$, and the max-min operations simply guarantees that the resulting state lies within the desired state space $\mathcal{S}$. 
The agent receives rewards egual to $0$ at every time step, except when the resulting state $\bm{s'}$ falls within a goal region. More specifically, the goal is defined as a $2$-dimensional vector 
$\bm{g} = (91, 91)$. Whenever $||\bm{s'} - \bm{g}||_2 \le 1$ the reward received by the agent is sampled from the following Gaussian distribution: $\mathcal{N}\left(1, 1\right)$. The agent starts in a random position that is sampled from a uniform distribution in the area $[0, 5] \times [0, 5]$. The agent policy that we evaluate in our experiments is an hand-coded expert policy that minimizes the distance between the agent's position and the center of the goal area. More specifically, given the agent position $\bm{s}$, $\bm{a}$ is computed in the following way.
\begin{align*}
a_0 = \max \left\{ -1, \min \left\{ g_0 - s_0, 1 \right\} \right\}, \quad \quad \quad a_1 = \max \left\{ -1, \min \left\{ g_1 - s_1, 1 \right\} \right\},
\end{align*}
where the max-min operation guarantees that $\bm{a}$ belongs to $\mathcal{A}$.

\paragraph{LQG} Concerning the LQG, we consider the following $1$-dimensional case (i.e., the dimension of the state and action spaces is $1$). The initial state is drawn from a uniform distribution in $[-80, +80]$. Upon taking action $a \in \mathcal{A}$, the agent transitions to a new state $s' = s + (a + \xi) + \eta$, where $\eta \sim \mathcal{N}\left(0, 0.1 \right)$ models the noise in the system, and $\xi \sim \mathcal{N} \left(0, 0.1 \right)$ denotes the controller's noise. The reward for taking action $a$ in state $s$ is computed as $s^2 + (a + \xi)^2$. The policy that we evaluate is the optimal one and it is computed by solving the Riccati equations.

\paragraph{MuJoCo suite} In the main text, we presented results on the Ant environment of the MuJoCo suite. In the appendix, we present additional experiments on the HalfCheetah and Swimmer domains \citep{todorov2012mujoco}. In all cases, we adopted trained deep RL agents made publicly available by \citet{rlzoo3} (MIT License).

\subsection{Hyper-parameters}\label{app:hyper-params}
Table~\ref{tab:hp} reports the hyper-parameters that we used in our experiments. In each domain, we set $\beta=1.0$. As a consequence, the confidence intervals do not appear in the empirical optimization problem which is in practice solved by \rido. In practice, this choice gives good empirical performance, and limits the hyper-parameter tuning of our algorithm. 

\begin{table}
  \caption{Hyper-Parameters}
  \label{tab:hp}
  \centering
  \begin{tabular}{lll}
    \toprule
    Environment     						&  $\beta$		 & Mini-batch size \\
    \midrule
    Pendulum 							& $1.0$  		  & $500$   \\
    LQG     								& $1.0$		      & $500$   \\
    $2$D Continuous Navigation     		& $1.0$       	  & $1000$  \\
    Ant									& $1.0$       	  & $3000$  \\
    HalfCheetah							& $1.0$ 		      & $3000$ 	\\
    Swimmer								& $1.0$ 		      & $1000$ 	\\   
    Reacher								& $1.0$ 		      & $450$ 	\\   
    \bottomrule
  \end{tabular}
\end{table}

\subsection{Additional Results}\label{app:add-results}

\subsubsection{Ablations}
In this section, we present ablations on \rido on the two environments (described in Section \ref{app:env-details}) that models Examples \ref{exe:1} and \ref{exe:2}. More specifically, we conduct the following two ablations to understand the behavior of \rido according to changes in its hyper-parameters, i.e., the robustness level $\beta$ and the mini-batch size. To properly assess the effect of these designer's choices, we report and discuss both the average MSE and the resulting DCSs. We test our method using $\gamma=1$, but similar results can be obtained varying the discount factor.

\begin{figure}[t]
  \centering
  \includegraphics[width=13.5cm]{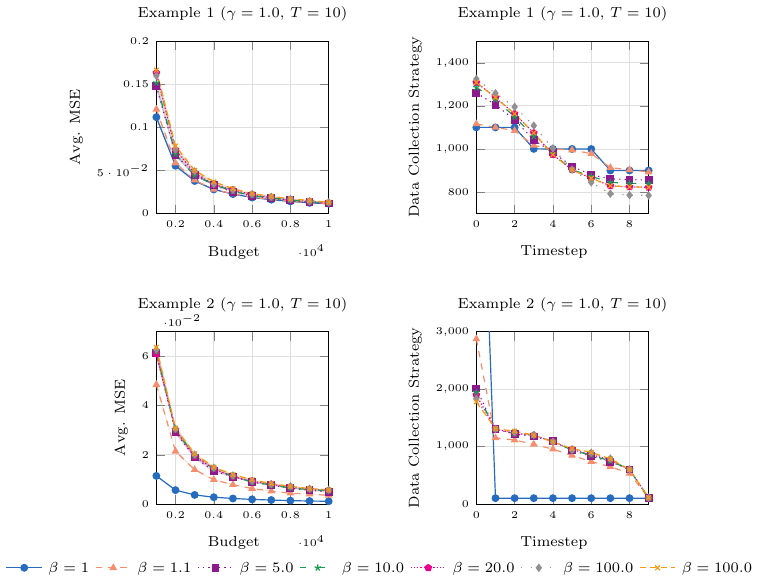} 
  \caption{Ablations on different values of $\beta$ on Examples \ref{exe:1} (\emph{top}) and \ref{exe:2} (\emph{bottom}). Empirical MSE (mean and $95$\% confidence intervals over $100$ runs) (\emph{left}). DCS visualiaztion (mean and $95$\% confidence intervals over $100$ runs) using $\Lambda = 10000$ (\emph{right}).}
  \label{fig:beta-abl}
\end{figure}

\begin{figure}[t]
  \centering
  \includegraphics[width=13.5cm]{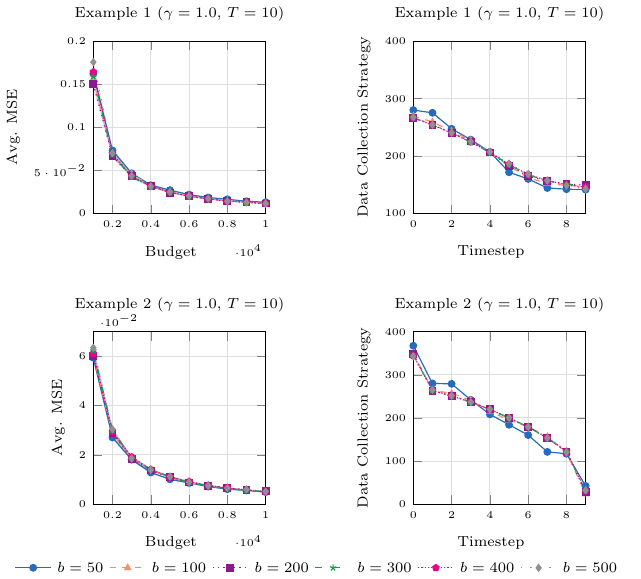} 
  \caption{Ablations on different mini-batch sizes on Examples \ref{exe:1} (\emph{top}) and \ref{exe:2} (\emph{bottom}). Empirical MSE (mean and $95$\% confidence intervals over $100$ runs) (\emph{left}).  DCS visualiaztion (mean and $95$\% confidence intervals over $100$ runs) using $\Lambda = 1000$ (\emph{right}).}
  \label{fig:batch-abl}
\end{figure}

\paragraph{Ablations on $\beta$} We begin by performing an ablation on the robustness parameter $\beta$. More specifically, we analyze the behavior of \rido for the following values of $\beta$: $\left\{1, 1.1, 5.0, 10.0, 20.0, 100.0, 1000.0 \right\}$ (the value of the mini-batch size here is fixed to $b=100$). Figure~\ref{fig:beta-abl} reports the results. Let us first focus on Example \ref{exe:1} (i.e., the top row). In this case, the reward is gathered at the end of the estimation horizon. As we can see, increasing the value of $\beta$, leads to a larger amount of data spent in the first interaction steps (i.e., top-right in Figure~\ref{fig:beta-abl}). Indeed, when higher values of $\beta$ are used, the cumulative sum of exploration bonuses in the early steps is larger w.r.t. the late ones. For this reason, \rido spends a larger portion of its budget to decrease these exploration bonuses. As a consequence, given that the reward process of the underlying environment, this results in a higher empirical MSE (i.e., top-left in Figure~\ref{fig:beta-abl}). Furthermore, given that the reward is $0$ everywhere except at $t=T-1$, even using the smallest value of $\beta$ (i.e., $\beta=1$) allows the algorithm to quickly adapt its DCS toward the most relevant timestep (i.e., $t=T-1$). Similar comments hold for Example~\ref{exe:2} as well (i.e., bottom row in Figure~\ref{fig:beta-abl}). Finally, we notice that the behavior changes almost unsignificantly for values of $\beta$ larger than $5.0$.

\paragraph{Ablations on mini-batch size $b$} We now continue by presenting an ablation on the batch size. More specifically, we analyze the behavior of \rido for the following values of $b$: $\left\{50, 100, 200, 300, 400, 500 \right\}$ (the value of $\beta$ here is fixed to $5.0$). Figure~\ref{fig:batch-abl} reports the results. First of all, as we can notice, in both Examples \ref{exe:1} and \ref{exe:2} the mini-batch size impacts the performance in a less significant way w.r.t. to the value of $\beta$ (compare the left column of Figure \ref{fig:batch-abl} and \ref{fig:beta-abl}). Secondly, let us focus on the the top-row (i.e., Example \ref{exe:1}, where the reward is gathered at the end of the episode). For the smallest value of $\Lambda$ of Figure~\ref{fig:batch-abl} (i.e., $\Lambda=1000$, that is the only for which there is some difference in performance), we notice that the best configuration is not $b=50$ (i.e., the smallest batch-size among the presented ones). This is confirmed also by its corresponding DCS, which is not the one that allocates the highest number of data at $T-1$. We conjecture that the reason behind this phenonema are numerical instabilities that might arise while solving the empirical problem with the use of convex solvers.\footnote{We notice that even small imprecisions can result in DCSs that differ by $1$ when converting the continuous relaxation to its integer version. For smaller values of $b$, this behavior might happen multiple times w.r.t. larger values of $b$.} Concerning Example~\ref{exe:2} (where the reward is gathered only at $t=0$), we notice that smaller values of $b$ performs better (this is confirmed by the corresponding DCS, that allocates more data to the first interaction step). In this case, the aformentioned problem is not present. We conjecture that the reason is that, even in the case of numerical instabilities, errors that arise from converting the continuous DCS to its integer version provably minimizes the MSE, since the remaining budget is allocated uniformly starting from $t=0$ (i.e., the most relevant timestep from the point of view of the estimation quality). Finally, we notice that, whatever value of $b$ we use, the behavior of \rido is stable under reasonable variations of the mini-batch size.

\subsubsection{DCS Visualizations for Figure~\ref{fig:mainresults}}
In this section, we present visualizations of the DCSs for the experiments presented in Figure~\ref{fig:mainresults}. Figure~\ref{fig:dcsvisuno} reports our results (mean and $95$\% confidence intervals over $100$ runs). The resulting visualizations reinforce the adaptivity of \rido. Indeed, depending on the domain, the behavior of \rido changes significantly, resulting in behaviors that are similar to the uniform strategy (i.e., Navigation), to the robust strategy (i.e., Ant), or significantly different from both pre-computed schedules (i.e., Pendulum and LQG).

\begin{figure}[t]
  \centering
  \includegraphics[width=13.5cm]{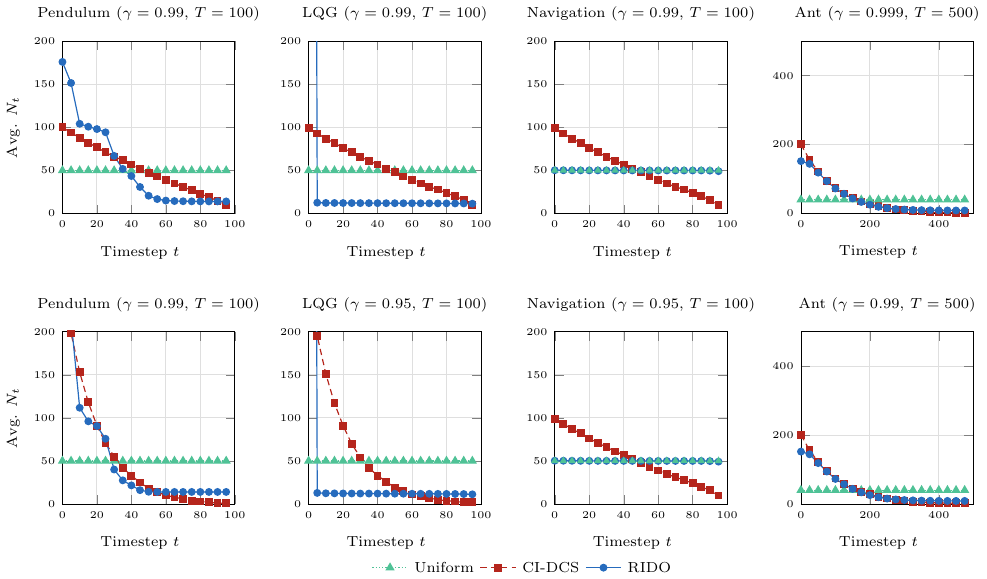} 
  \caption{DCS visualiaztion for Pendulum, LQG, Navigation and Ant (mean and $95$\% confidence intervals over $100$ runs). The $x$ axis reports the timestep $t$, while the $y$ axis $n_t$. For Pendulum, LQG and Navigation, we consider $\Lambda=5000$, while for the Ant $\Lambda=20000$.} 
  \label{fig:dcsvisuno}
\end{figure}

\subsubsection{Results on Additional Environments}
In this section, we present results on additional MuJoCo environments, namely Swimmer, Reacher and Half-Cheetah. Figure~\ref{fig:additional} reports our results.

\begin{figure}
  \centering
  \includegraphics[width=13.5cm]{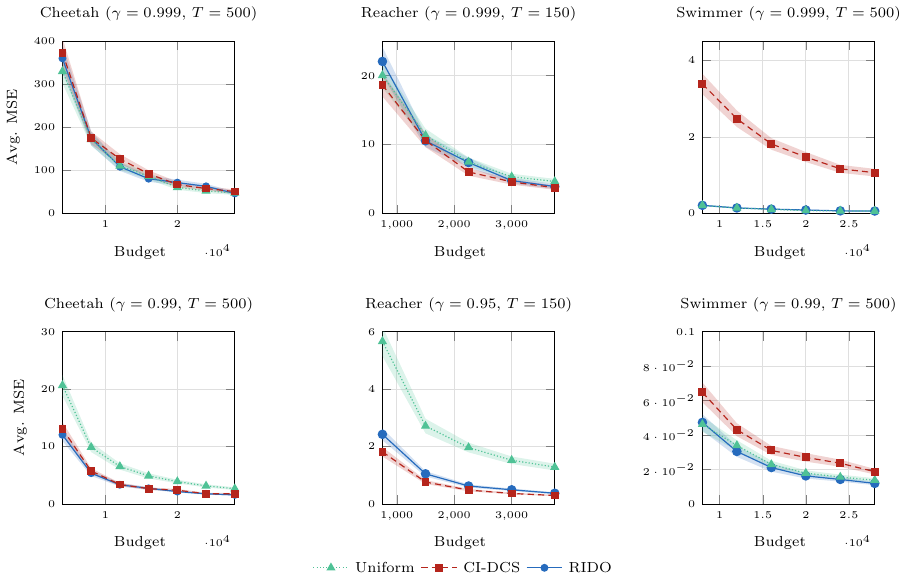} 
  \caption{Empirical MSE (mean and $95$\% confidence intervals over $100$ runs) on the Swimmer, Reacher and HalfCheetah for the considered baselines.}
  \label{fig:additional}
\end{figure}

\subsection{Additional Details on the Running Time}
In this section, we provide additional details on the running time of the algorithms.

More specifically, we have run all algorithms on Ant and HalfCheetah domain with $\Lambda = 4000$ and $T = 500$.  For our method, we have used a batch size of $1000$. For the Ant, our method took roughly $27$ seconds, while for uniform and \citet{poiani2022truncating}, the run took roughly $20$ seconds. In the HalfCheetah, instead, RIDO took $23s$, while the baselines $15s$. As soon as we increase $\Lambda = 8000$ (keeping the batch size $1000$), we obtain $47s$ for RIDO, and $29s$ for the baselines (Ant environment). For HalfCheetah, instead, $37s$ for RIDO, and $20s$ for the baselines. At this point, increasing the batch size to $2000$, RIDO obtains $41s$ in the Ant, and $31s$ in the HalfCheetah.  

Overall, RIDO requires some additional computational overheads, nevertheless, the running time is still comparable. Furthermore, we also notice that the code of our algorithm has not been specifically optimized for time efficiency. Finally, we also notice that in our experiments, we rely on open-source solvers. Relying on commercial solvers might increase the computational efficiency of our method.

\end{document}